\documentclass[11pt,a4paper,logo,copyright]{pluralisresearchiclr}

\usepackage{float}
\usepackage{amssymb}
\usepackage{algorithm}
\usepackage{algpseudocode}
\usepackage{amsmath}
\usepackage{amsthm} 
\usepackage{wrapfig}

\usepackage{multirow}
\usepackage{tabularx}
\usepackage{graphicx}    %
\usepackage[most]{tcolorbox}

\definecolor{tracebg}{HTML}{F7FAFC}
\definecolor{traceframe}{HTML}{2C5282}
\definecolor{thinkbg}{HTML}{FFFAF0}
\definecolor{thinkframe}{HTML}{B7791F}
\definecolor{answerbg}{HTML}{F0FFF4}
\definecolor{answerframe}{HTML}{276749}

\newtcolorbox{traceBox}[1]{%
  enhanced,
  colback=tracebg,
  colframe=traceframe,
  fonttitle=\bfseries\small,
  title={#1},
  arc=2pt,
  boxrule=0.6pt,
  left=6pt, right=6pt, top=4pt, bottom=4pt,
  breakable
}

\newtcolorbox{thinkBox}{%
  enhanced,
  colback=thinkbg,
  colframe=thinkframe,
  arc=1pt,
  boxrule=0.4pt,
  left=4pt, right=4pt, top=3pt, bottom=3pt,
  breakable,
  before skip=4pt, after skip=4pt
}

\newtcolorbox{answerBox}{%
  enhanced,
  colback=answerbg,
  colframe=answerframe,
  arc=1pt,
  boxrule=0.4pt,
  left=4pt, right=4pt, top=3pt, bottom=3pt,
  breakable,
  before skip=4pt, after skip=4pt
}

\usepackage{xcolor}
\usepackage{colortbl}
\usepackage{booktabs}
\usepackage{float}     %

\definecolor{deltaneg}{HTML}{9B2C2C}
\definecolor{groupband}{HTML}{EDF2F7}

\definecolor{headerblue}{HTML}{2C5282}
\definecolor{rowgray}{HTML}{F7FAFC}
\definecolor{deltapos}{HTML}{276749}

\definecolor{headerblue}{HTML}{2C5282}
\definecolor{rowgray}{HTML}{F7FAFC}
\definecolor{ourrow}{HTML}{C6F6D5}
\definecolor{deltapos}{HTML}{276749}
\definecolor{failcell}{HTML}{FED7D7}

\newtheorem{assumption}{Assumption}
\newtheorem{theorem}{Theorem}
\newtheorem{lemma}{Lemma}
\theoremstyle{remark}
\newtheorem{remark}{Remark}
\theoremstyle{plain}
\newtheorem{proposition}{Proposition}
\newtheorem{corollary}{Corollary}

\usepackage[utf8]{inputenc} %
\usepackage[T1]{fontenc}    %
\usepackage{url}            %
\usepackage{booktabs}       %
\usepackage{amsfonts}       %
\usepackage{nicefrac}       %
\usepackage{microtype}      %
\usepackage{xcolor}         %

\title{Communication-Efficient LLM Adaptation over Decentralized GPU Meshes}

\pdftrailerid{redacted}
\renewcommand{\today}{}
\renewcommand{\copyrightext}{\footerfont\textcopyright\, 2026 Pluralis Research. All rights reserved.}
\correspondingauthor{sameera@pluralis.ai}
\paperurl{}
\reportnumber{}
\author[1]{Sameera~Ramasinghe}
\author[1]{Shamane~Siriwardhana}
\author[1]{Thalaiyasingam~Ajanthan}
\author[1]{Hadi~Mohaghegh~Dolatabadi}
\author[1]{Chamin~P~Hewa~Koneputugodage}
\author[1]{Gil~Avraham}
\author[1]{Violetta~Shevchenko}
\author[1]{James~Snewin}
\author[1]{Karol~Pajak}
\author[1]{Harry~Xi}
\author[1]{Alexander~Long}
\affil[1]{Pluralis Research}
\hypersetup{
  pdftitle={Communication-Efficient LLM Adaptation over Decentralized GPU Meshes},
  pdfauthor={Sameera Ramasinghe, Shamane Siriwardhana, Thalaiyasingam Ajanthan, Hadi Mohaghegh Dolatabadi, Chamin P Hewa Koneputugodage, Gil Avraham, Violetta Shevchenko, James Snewin, Karol Pajak, Harry Xi, Alexander Long}
}

\begin{abstract}
Decentralized training enables large-model training over low-end GPUs
and internet-grade connections, but communication along both
data-parallel and pipeline-parallel axes becomes the primary
bottleneck. We study post-pretraining adaptation in this setting. We propose an asynchronous two-circuit
system: a fast compressed training circuit drives throughput using
activation masking for pipeline-parallel (PP) transfer and compressed
data-parallel (DP) synchronization, while a slow anchor circuit runs
occasional unmasked forward--backward passes off the critical path.
Then, we introduce a spectral correction optimizer that uses these
delayed \emph{anchor priors} to denoise masked gradients without blocking the
fast stream. Although
prior work has found aggressive activation compression unreliable, we
show that masking supports post-pretraining adaptation at high
compression rates when anchored this way. Pipeline-parallel
compression alone yields up to a $9\times$ throughput gain, and
combining it with data-parallel compression increases beyond
$40\times$ over internet-grade $\sim 200$Mbps connections, while
matching dense uncompressed performance across domain adaptation and
continual pretraining.
\end{abstract}

\begin{document}

\maketitle

\section{Introduction}

Training and adapting frontier models typically requires tightly connected datacenter clusters, putting modern AI research out of reach for smaller labs, independent researchers, and open communities \citep{narayanan2021megatron, grattafiori2024llama3}. Decentralized training offers an alternative: computation can be
distributed over the Internet across heterogeneous consumer-grade
GPUs, allowing participants to contribute otherwise fragmented
compute~\citep{diskin2021dedloc, ryabinin2023swarm, yuan2022decentralized, douillard2023diloco, jaghouar2024opendiloco, ajanthan2026asyncmesh, ramasinghemixtures}. We focus on decentralized post-pretraining adaptation, \textit{i.e.}, fine-tuning or continual pretraining open-weight models over such  GPU meshes.

This setting is fundamentally communication-limited. Commodity Internet links provide bandwidth on the order of hundreds of Mbps, compared to hundreds of Gbps in datacenter interconnects. A practical system must therefore reduce communication along both distributed-training axes: data parallelism (DP), where replicas synchronize weight gradients, and pipeline parallelism (PP), where model shards exchange activations and activation gradients.

DP communication can be reduced with gradient compression  \citep{vogels2019powersgd, wang2018atomo, basu2020qsparse, zhao2024galore} or sparse synchronization \citep{douillard2023diloco, douillard2025streamingdiloco, zhang2025demo, jaghouar2024opendiloco, sani2024photon}, but these methods still assume that each worker or replica group can host a full model. PP addresses the complementary regime: it enables models that do \emph{not} fit on a single low-end GPU by partitioning them across layers. However, PP compression is substantially more delicate. Standard compression methods such as sparsification, quantization, and low-rank projection often degrade performance at high compression rates due to error accumulation across layers \citep{rudakov2023activations, ramasinghe2025subspace}. Recent methods such as Subspace Networks \citep{ramasinghe2025subspace} and Pufferfish \citep{wang2021pufferfish} mitigate this during pretraining by imposing structural constraints, but these constraints make them difficult to apply to arbitrary pretrained checkpoints.

We therefore revisit and refine a simple but previously discarded idea: \emph{activation masking}. Our key observation is that adaptation differs from pretraining. During post-pretraining adaptation, weights evolve slowly from a strong pretrained initialization, and the goal is often to find a useful update direction rather than learn representations from scratch. In contrast to the pessimistic view in prior works \citep{ramasinghe2025subspace, rudakov2023activations}, we show that sparse activation signals are sufficient for the main training stream, provided they are anchored by higher-fidelity information.
We realize this idea with a two-circuit decentralized system. A fast
compressed circuit performs most updates using in-graph activation
masking along the PP axis and compressed synchronization along the DP
axis. In parallel, a slower anchor circuit occasionally runs full
unmasked forward and backward passes. These unmasked passes do not
block the fast circuit; instead, their gradients arrive asynchronously
as \emph{Anchor Priors}, which denoise and steer the masked
optimization trajectory. The asynchrony is principled: fine-tuning
gradients live in a low-dimensional \citep{li2018measuring, aghajanyan2021intrinsic, gur2018gradient}, slowly-drifting subspace, so a
stale unmasked gradient still points roughly the right way \citep{neyshabur2020transferred, wortsman2022robust}. This is akin to the 
same structural property that makes model merging effective \citep{wortsman2022robust}, applied
here at the gradient level.

To use these delayed priors effectively, we introduce a spectral
correction optimizer. It maintains a slow-moving momentum buffer over
anchor gradients and uses its principal directions to reweight masked
updates: anchor-supported eigendirections are reinforced, while
unsupported noisy directions are damped. We complement this design
with a convergence analysis identifying when the correction provably
helps: when fine-tuning gradients lie in a low-dimensional subspace
and the optimization trajectory drifts slowly, the anchor basis
remains informative despite staleness. The analysis also clarifies why
random PRF masking is essential; it produces an approximately unbiased
gradient estimator that the spectral filter can denoise, while top-$K$
masking introduces a structured bias the filter cannot remove.
Empirically, the system is robust to substantial staleness, and
significantly superior to top-$K$, reinforcing these predictions.

Across domain adaptation, continual pretraining, and reasoning-oriented training, our method matches or exceeds dense uncompressed training while substantially improving throughput over low-bandwidth links. PP compression alone yields up to a $9\times$ throughput gain, and combining it with DP compression increases this over $40\times$. The method generalizes across model families, depths, widths, and mesh sizes, and produces models that are more robust to pruning. Overall, our contributions are as follows:
\begin{itemize}
    \item We propose a two-circuit decentralized training system in
    which a fast masked circuit trains continuously while a slower
    unmasked anchor circuit asynchronously supplies high-fidelity
    gradient priors, decoupling throughput from the cost of clean
    gradient signals.

    \item We introduce a spectral correction optimizer that uses delayed Anchor Priors to denoise masked gradients. Our convergence analysis and empirical results demonstrate its efficacy.

    \item We show superior performance among
    competing methods, matching 
    uncompressed adaptation while achieving over a $40\times$
    throughput. Further, models trained with our method are more robust
    to pruning, improving their suitability for resource-constrained
    inference.
\end{itemize}

\section{Related Works}

\paragraph{Data parallel compression.}
Decentralized DP training has motivated two lines of work: 1) \emph{Sparse synchronization methods} reduce the number of
all-reduces by performing many local optimizer steps between outer
updates \citep{douillard2023diloco, douillard2025streamingdiloco, zhang2025demo, jaghouar2024opendiloco, sani2024photon}; we use
Streaming DiLoCo \citep{douillard2025streamingdiloco}, which overlaps
the outer synchronization with local computation. 2) \emph{Low-rank gradient compressors} reduce 
communication volume by transmitting low-rank approximations of the
gradient \citep{vogels2019powersgd, wang2018atomo, basu2020qsparse, zhao2024galore}; we combine PowerSGD \citep{vogels2019powersgd} with Streaming DiLoCo. However, the proposed design can be integrated with other DP alternatives.  \textbf{Pipeline-parallel activation compression.} \emph{Quantization-based methods} cast activations
and activation gradients to lower-precision formats with error
compensation: AQ-SGD \citep{wang2022aqsgd} compresses activation
\emph{changes} between epochs to enable error feedback over slow
networks, and TAH-Quant \citep{he2025tahquant} extends this with
tile-wise adaptive Hadamard quantization. These methods operate
\emph{off-graph}, transmitting compressed payloads that are
decompressed before use, which decouples forward and backward sparsity
patterns and requires storing per-example state across epochs to
support error feedback. We instead use \emph{in-graph} random masking
with no error feedback. Our masking is also
substantially more aggressive ($95\%$ vs.\ $4\times$/$8\times$
quantization). \textbf{Parameter-efficient fine-tuning.} reduces training VRAM by using low-rank adapters
\citep{hu2022lora, zhao2024galore, lialin2023relora, li2021prefix, lester2021power, houlsby2019adapter, pfeiffer2021adapterfusion}, but the forward and backward passes still transmit \emph{full} dense
activations across pipeline-parallel boundaries. Thus, they do not
address the communication bottleneck. 
Our method composes cleanly with PEFT
(Section~\ref{sec:peft_orthogonal}).

\section{Methodology}
\label{sec:method}

We consider decentralized post-pretraining adaptation over a
low-bandwidth GPU mesh across DP and PP dimensions. We represent a decentralized mesh as a 2D $X \times Y$ grid, where $X$ is the data-parallel (DP) dimension and $Y$ is the pipeline-parallel (PP) dimension. Each worker stores only a subset of the model layers, and workers communicate over low-bandwidth links along both DP and PP axes. For all experiments, we partition the mesh into two asynchronous circuits: an $(X-Z)\times Y$ fast masked circuit and a $Z \times Y$ unmasked \emph{anchor circuit}. In all our experiments, we fix $Z=1$. 

\begin{figure}[tbp]

    \centering
    \includegraphics[width=0.48\textwidth]{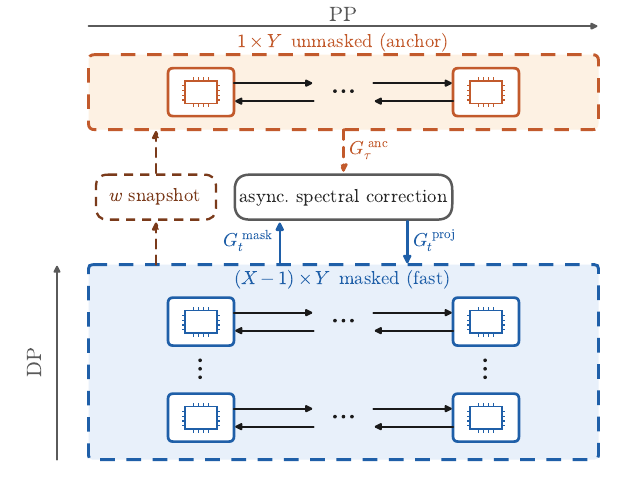}
    \caption{\textbf{System overview.} A small unmasked anchor circuit (orange, $1{\times}Y$
PP stages) asynchronously supplies clean gradients $G_\tau^{\mathrm{anc}}$
to a spectral corrector, which denoises the masked gradients
$G_t^{\mathrm{mask}}$ from the fast circuit (blue, $(X{-}1){\times}Y$
DP$\times$PP) into corrected updates $G_t^{\mathrm{proj}}$. Slow circuit periodically pulls fresh weight snapshots from the fast circuit.}
    \label{fig:system}
    
\end{figure}

The anchor circuit occasionally runs unmasked
forward--backward passes and supplies high-fidelity gradient signals
that the fast circuit uses to denoise its compressed updates. The
anchor circuit does not block the fast circuit: its gradients are
consumed asynchronously when they arrive, so the fast circuit's
throughput is set by masked communication only. This split is the core
design move. Activation masking alone reduces communication but
introduces noise that compounds across boundaries; running unmasked
passes synchronously would eliminate the throughput gain. Asynchronous
anchors give us both,\textit{ i.e., }masked-step throughput, plus a slow
correctional signal that does not enter the critical path.

\subsection{Activation Masking for PP Compression}
\label{sec:pp-masking}

Let $h \in \mathbb{R}^{T\times H}$ denote the hidden-state tensor at a
masked inter-layer boundary, where $T$ is sequence length and $H$ is
hidden size. On masked steps we apply elementwise masking
$\tilde{h} = h \odot m$ with $m\in\{0,1\}^{T\times H}$ at masking
probability $p$, retaining $K=\mathrm{round}((1-p)H)$ dimensions per
token.

\paragraph{Shared PRF avoids transmitting the mask.}
Naively, masking would require sending the retained indices for every
token alongside the values, halving the compression benefit. We avoid
this by generating $m$ from a shared deterministic pseudorandom
function $M = g(\ell, t, B, S, H)$ keyed on layer index, optimization
step, and batch metadata. Sender and receiver reconstruct the same
mask locally; only the $K$ retained values are transmitted.

\paragraph{In-graph masking gives matched forward/backward sparsity.}
Because masking is applied in-graph as a multiplication, the backward
pass inherits the same pattern automatically:
\(
\partial \mathcal{L}/\partial h_{t,j} = (\partial \mathcal{L}/\partial \tilde{h}_{t,j})\,m_{t,j}.
\)
Any entry masked in the forward has exactly zero backward gradient, so
the same retained positions can be reused for backward
activation-gradient transmission. This gives the same compression rate
in both directions and avoids the need to track separate forward and
backward sparsity patterns. It also distinguishes our scheme from
off-graph transport compression, where forward and backward
compression are decoupled and can drift out of correspondence with the
derivative of the executed computation (\textit{e.g.,} decoupled Top-$K$ or quantization). Note that We
deliberately use random masking instead top-$K$ because 1) top-$K$ is biased by
construction, and as we show, this bias propagates through the descent
inequality as a non-vanishing floor, and 2)  the spectral
correction we introduce next contracts zero-mean isotropic noise but
cannot remove a structured bias (see Appendix~\ref{sec:random-vs-topk} for a full analysis).

\subsection{Anchor Priors and Spectral Correction}
\label{sec:anchor-gradient-correction}

Activation masking reduces PP communication but injects noise into
every gradient. We need a denoising signal, but one that does not
eliminate the throughput gains we just bought. Fine-tuning makes a
useful option viable: gradients live in a slowly drifting low-dimensional subspace, so a stale unmasked gradient still carries useful information. We exploit this with an asynchronous \emph{anchor} circuit that
runs occasional unmasked passes and uses their geometry to denoise
the fast masked gradients without blocking them.

\paragraph{Asynchronous anchor circuit.}
A naive synchronous fix would periodically disable masking and run a full
unmasked forward--backward pass through the main pipeline. At our
target bandwidth this is prohibitively expensive: an unmasked pass
sends roughly $1/(1-p)$ more activation data per
boundary, and the entire fast circuit waits while it runs
(Appendix~\ref{app:staleness}). Our
alternative is to spin off a parallel anchor circuit that runs
unmasked passes from a slightly stale weight snapshot, asynchronously,
and feeds its gradients back to the fast circuit when they arrive. The
fast circuit never waits. The cost is that anchor gradients are
delayed (we measure $\Delta\approx 20$--$25$ fast-circuit steps in our
configuration), so they cannot be used as exact gradient updates, \textit{i.e.,}
they can only inform optimization \emph{geometry}. This is the
restriction that shapes the rest of the design.

\paragraph{A separate momentum buffer for unmasked gradients.}
For each targeted weight matrix $W$, we maintain an EMA of arriving
anchor gradients,
\begin{equation}
M_t^{\mathrm{anc}}
= \beta_{\mathrm{anc}} M_{t^-}^{\mathrm{anc}}
+ (1-\beta_{\mathrm{anc}}) G_{\tau}^{\mathrm{anc}},
\end{equation}
where $t^-$ is the optimizer state immediately before the anchor
gradient arrives. Two design choices here. First, we keep this
momentum \emph{separate} from AdamW's first moment $m_t$ rather than
folding anchor gradients into it: $m_t$ is updated on every fast step
and is dominated by masked-step noise, so blending stale unmasked
gradients into it would either dilute their signal or, with an
aggressive blend, introduce slow oscillations. A separate buffer lets
us treat anchor gradients on their own terms. Second, we use an EMA
rather than the latest anchor gradient because individual anchor
gradients are themselves noisy stochastic estimates. We choose $\beta_{\mathrm{anc}}$ large enough to
average over several anchor cycles, since the gradient subspace drifts
slowly during fine-tuning (Section~\ref{sec:assump-justification}).

\paragraph{From momentum to a denoising operator:}
Given $M_t^{\mathrm{anc}}$, we want to extract the directions in
parameter space that are repeatedly reinforced by unmasked gradients,\textit{ i.e., }the geometry the fast circuit's noisy gradients should be pulled
toward. Singular value decomposition is the natural choice:
$M_t^{\mathrm{anc}} = U_t S_t V_t^\top$ identifies left and right
subspaces with strength quantified by the singular values $s_i$.
Directions with large $s_i$ are those the anchor circuit has
consistently observed; directions with small $s_i$ are either weakly
supported or absent. We use SVD rather than, e.g., Hessian
eigendecomposition (which we cannot afford at scale) or AdamW's second
moment $v_t$ (which is dominated by masked-step variance and reflects
\emph{noise} structure, not signal structure).

\paragraph{Filtering $d_i = s_i/(s_i + \tau_p)$.}
Given the anchor basis $(U_t, V_t)$, we need to decide how to use it.
Hard projection onto the top-$r$ singular subspace is one option but
requires choosing $r$, and discards information about how strongly
each direction is supported. Instead, we apply a soft filter
\begin{equation}
d_i = \frac{s_i}{s_i + \tau_p},
\end{equation}
which has three useful properties. First, it is bounded between $0$
and $1$, so the filter never amplifies the gradient. Second, it is
adaptive: directions with $s_i \gg \tau_p$ pass nearly unattenuated
($d_i \to 1$), while directions with $s_i \ll \tau_p$ are damped ($d_i
\to 0$). Third, it requires only one hyperparameter $\tau_p$, which
sets the scale of "weakly supported," rather than a hard rank cutoff.
The form $s/(s+\tau_p)$ is the standard Tikhonov-regularized soft
threshold, familiar from inverse problems, and gives a Lipschitz
filter that is well-behaved as the anchor momentum evolves.

\paragraph{Two-sided application.}
For a matrix gradient $G_t^{\mathrm{mask}}$, we transform into the
anchor basis,
\(
X_t = U_t^\top G_t^{\mathrm{mask}} V_t,
\)
apply the filter to both index axes,
\(
X_t^{\mathrm{filt}} = \mathrm{diag}(d) X_t \mathrm{diag}(d),
\)
and reconstruct
\(
G_t^{\mathrm{filt}} = U_t X_t^{\mathrm{filt}} V_t^\top.
\)
Two-sided filtering reflects that for a matrix-valued gradient, both
the row space (which input features are active) and the column space
(which output features are active) carry information. The anchor
circuit observes both; restricting to one side discards half the
geometric information.

The final gradient passed to AdamW is
\begin{equation}
G_t^{\mathrm{proj}} = \alpha G_t^{\mathrm{mask}} + (1-\alpha) G_t^{\mathrm{filt}}.
\end{equation}
We do not use the filtered gradient alone. The reason is that the
anchor basis is an EMA of stale gradients, so $G_t^{\mathrm{filt}}$
inherits some staleness; the masked gradient is noisy but reflects the
\emph{current} weights. Blending preserves the current-iterate
information while pulling toward the anchor's reliable subspace. The
blend coefficient $\alpha$ trades these off: $\alpha=1$ recovers
masked-only training (no denoising), $\alpha=0$ uses only the filtered
gradient (maximum denoising, full inheritance of staleness), and
intermediate values balance the two. Our convergence analysis
(Appendix~\ref{app:convergence}) shows that this blend interpolates
between the two regimes monotonically.

Putting these pieces together, the spectral correction is an online,
adaptive preconditioner whose basis and weights are estimated by the
anchor circuit and updated as new anchor gradients arrive. The masked
gradient is never discarded. Instead, it is softly pulled toward directions
the anchor circuit has consistently observed. Because the anchor
circuit runs asynchronously, this denoising signal arrives at the cost
of staleness, not throughput.

\subsection{Data-Parallel Compression}
\label{sec:dp-compression-main}

PP compression alone does not suffice in a decentralized mesh because dense DP synchronization can dominate wall-clock time and
erase the gains from masking. At Internet-grade bandwidth the
disparity is severe. We therefore compose PP compression with DP compression. In our
implementation we use Streaming DiLoCo \cite{douillard2025streamingdiloco} with PowerSGD \cite{vogels2019powersgd}: PowerSGD reduces
bytes per synchronization via low-rank gradient compression, DiLoCo
reduces synchronization frequency by performing local steps between
outer updates, and Streaming DiLoCo overlaps the outer synchronization
with local computation. Note that our PP-side mechanism (activation masking and
anchor priors) is independent of this particular DP compressor and can
be combined with better alternatives.

\subsection{End-to-End Training Procedure}
\label{sec:end-to-end-system}

The fast circuit performs most optimizer steps while the The anchor circuit periodically pulls recent layer
shards from the fast circuit, runs unmasked forward--backward passes,
and returns the resulting gradients asynchronously. When an anchor
gradient arrives, the fast circuit updates the anchor momentum
$M_t^{\mathrm{anc}}$ and applies the spectral correction to
subsequent masked gradients. The two-circuit design converts expensive unmasked computation into an
asynchronous source of optimization geometry. The fast circuit retains
the throughput benefits while the anchor
circuit supplies delayed but reliable signals that stabilize the
masked trajectory. Because the circuits are asynchronous, the system
avoids the bandwidth spikes and pipeline stalls of synchronous
unmasked steps. The overall system design is illustrated in Fig.~\ref{fig:system}

\section{Convergence Analysis}
\label{sec:convergence-main}

We combine DP compression and PP activation masking. The DP component uses Streaming DiLoCo \citep{douillard2025streamingdiloco} with PowerSGD \citep{vogels2019powersgd}. We do not re-analyze this part: PowerSGD is a well-studied low-rank gradient compressor, while Streaming DiLoCo build on local-SGD/Federated-Averaging-style optimization. Both have been analyzed and validated extensively in prior work, and our method can in principle be paired with other DP compressors as well. We therefore focus on the masked activation training with asynchronous Anchor Priors.

Let $F(w) = \mathbb{E}_\xi[\ell(w; \xi)]$ denote the population
training objective, $F^\star$ its lower bound, $w_t$ the model weights
at step $t$, $\eta$ the learning rate, and $L$ the smoothness constant
of $F$. Let $G$ bound the second moment of the update direction, and
$\Delta$ the maximum staleness of anchor gradients in fast-circuit
steps. The blend coefficient $\alpha\in[0,1]$ controls how strongly
the spectral correction is applied (Section~\ref{sec:method}).

\begin{theorem}[Informal; full version in Appendix~\ref{app:convergence}]
\label{thm:informal}
Under standard smoothness and structural assumptions on the spectral
filter, after $T$ steps,
\[
\frac{1}{T}\sum_{t=0}^{T-1}\mathbb{E}\|\nabla F(w_t)\|^2
\;\leq\;
\frac{2(F(w_0)-F^\star)}{\mu_\alpha\,\eta\,T}
\;+\;
\frac{\Sigma_{\alpha,\Delta}^2 + L\eta G^2}{\mu_\alpha},
\]
where $\Sigma_{\alpha,\Delta}^2$ is a noise floor with three terms ---
unfiltered masked noise, filter-surviving noise (scaling as a
contraction factor $\rho<1$), and anchor staleness --- and
$\mu_\alpha = 1-(1-\alpha)^2\kappa$ measures how much true gradient
the filter preserves (with $\kappa\in[0,1)$ the fraction of gradient
energy the filter discards).
\end{theorem}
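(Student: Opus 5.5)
The plan is a standard nonconvex SGD descent-lemma argument applied to the update $w_{t+1}=w_t-\eta\,G_t^{\mathrm{proj}}$. The twist is that the direction is biased: both the filter and staleness bias it. First decompose the direction. Write $G_t^{\mathrm{mask}}=\nabla F(w_t)+\xi_t$, where $\xi_t$ is the masking-plus-sampling noise; with random PRF masking we treat it as approximately zero-mean, $\mathbb{E}[\xi_t\mid w_t]\approx 0$. Let $P_t(\cdot)=U_t\,\mathrm{diag}(d)\,U_t^\top(\cdot)V_t\,\mathrm{diag}(d)V_t^\top$ be the two-sided filter built from the anchor momentum. Then
\[
G_t^{\mathrm{proj}}=\nabla F(w_t)-(1-\alpha)(I-P_t)\nabla F(w_t)+\alpha\xi_t+(1-\alpha)P_t\xi_t .
\]
Two facts about $P_t$ do the work: it is self-adjoint with respect to the Frobenius inner product, and $0\preceq P_t\preceq I$ because $d_i\in[0,1]$. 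The staleness enters because $P_t$ is built from gradients at weights $w_{t-\Delta'}$ with $\Delta'\le\Delta$.

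Next, apply $L$-smoothness:
\[
\mathbb{E}F(w_{t+1})\le \mathbb{E}F(w_t)-\eta\,\mathbb{E}\langle\nabla F(w_t),G_t^{\mathrm{proj}}\rangle+\tfrac{L\eta^2}{2}G^2 .
\]
The structural assumptions I will state are:
\begin{enumerate}
\item \emph{Energy preservation:} $\|(I-P^\star_t)\nabla F(w_t)\|^2\le\kappa\|\nabla F(w_t)\|^2$, where $P^\star_t$ is the filter the anchor subspace would give at the current iterate.
\item \emph{Noise contraction:} $\mathbb{E}\|P_t\xi_t\|^2\le\rho\,\mathbb{E}\|\xi_t\|^2$ with $\rho<1$, reflecting isotropic noise spread over many directions while the anchor subspace is low-rank.
\item \emph{Drift:} $\|P_t-P^\star_t\|_{\mathrm{op}}\le c\sum_{s=t-\Delta}^{t-1}\|w_{s+1}-w_s\|$, a Lipschitz-in-time bound on the filter that the Tikhonov form $s/(s+\tau_p)$ and Davis–Kahan make plausible.
\end{enumerate}

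Then bound the inner product. I will use the exact identity $\langle g,(I-\gamma(I-P))g\rangle$ with $\gamma=1-\alpha$. Writing $g=(I-(1-\alpha)(I-P))g+(1-\alpha)(I-P)g$ and applying $2\langle a,b\rangle=\|a\|^2+\|b\|^2-\|a-b\|^2$-type manipulations with Young's inequality, the goal is
\[
\langle g,G^{\mathrm{proj}}\rangle\ge\tfrac12\bigl(1-(1-\alpha)^2\kappa\bigr)\|g\|^2-\tfrac12\Sigma^2_{\alpha,\Delta}+\text{(martingale terms)}.
\]
This is where $\mu_\alpha$ and the factor $2$ in the first term come from. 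The noise enters as cross terms. The cross term $\langle g,\alpha\xi_t\rangle$ vanishes in expectation. The cross term $\langle g,P_t\xi_t\rangle$ does not vanish exactly, since $P_t$ is built from past data, but conditioned on the filtration at time $t$ it is independent of the fresh $\xi_t$, so it also vanishes. That leaves the noise-floor pieces:
\begin{itemize}
\item $\alpha^2\sigma^2$ from unfiltered masked noise;
\item $(1-\alpha)^2\rho\sigma^2$ from filter-surviving noise;
\item $(1-\alpha)^2c^2\Delta^2\eta^2G^2\|g\|^2$-type staleness, obtained via $\|P_t-P_t^\star\|$ and $\|w_{s+1}-w_s\|\le\eta G$, and absorbed by Young into an additive $\Sigma$ term.
\end{itemize}
Summing over $t$, telescoping $F(w_0)-F^\star$, and dividing by $\mu_\alpha\eta T/2$ gives the stated bound.

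The main obstacle is getting exactly $\mu_\alpha=1-(1-\alpha)^2\kappa$ as the coefficient rather than something like $1-2(1-\alpha)\kappa$. The linear term $-(1-\alpha)\langle g,(I-P)g\rangle$ naturally gives the latter. To get the squared form, I would rather bound through $\|G^{\mathrm{proj}}\|$ and the polarization identity
\[
\langle g,h\rangle=\tfrac12\bigl(\|g\|^2+\|h\|^2-\|g-h\|^2\bigr),
\]
with $h=\mathbb{E}G^{\mathrm{proj}}$ and $\|g-h\|^2=(1-\alpha)^2\|(I-P)g\|^2\le(1-\alpha)^2\kappa\|g\|^2$, dropping $\|h\|^2\ge0$. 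This yields precisely $\tfrac12\mu_\alpha\|g\|^2$, which explains the $2$ in the first term. It also shows that $\mu_\alpha$ decreases monotonically as $\alpha$ decreases, matching the claimed interpolation. The delicate remaining point is controlling $\|g-h\|$ when $P_t$ is stale, i.e.\ inserting $P_t^\star$ via the triangle inequality, which is where the third component of $\Sigma^2_{\alpha,\Delta}$ arises.
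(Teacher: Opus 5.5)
Up to the staleness step your argument is the paper's. You use the descent inequality with $\mathbb{E}\|u_t\|^2\le G^2$ and split the update into $\nabla F(w_t)$, minus the bias $b_t=(1-\alpha)(I-P_t)\nabla F(w_t)$, plus noise. Both noise cross terms vanish by conditioning, since $P_t$ is $\mathcal{F}_t$-measurable and the mask is fresh, and the argument ends by telescoping. Your polarization step with $\|h\|^2$ dropped reads $\langle g,g-b\rangle\ge\tfrac12\|g\|^2-\tfrac12\|b\|^2$, which is exactly the paper's Young step. (For this to be exact you need $\mathbb{E}[\xi_t\mid\mathcal{F}_t]=0$, as in Assumption~\ref{assump:masked_error}, not ``approximately''.)

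\textbf{The gap is the staleness step, the one place you depart from the paper.} From your assumptions (energy preservation for the fresh filter $P^\star_t$ plus the drift bound), the triangle inequality gives $\|(I-P_t)g\|\le(\sqrt{\kappa}+c\Delta\eta G)\|g\|$. Hence, for any $\epsilon>0$,
$\|b_t\|^2\le(1-\alpha)^2\bigl[(1+\epsilon)\kappa+(1+\epsilon^{-1})c^2\Delta^2\eta^2G^2\bigr]\|g\|^2$.
Every term is proportional to $\|g\|^2$, with two consequences:
\begin{itemize}
\item The staleness lands inside the descent coefficient, which becomes $1-(1-\alpha)^2\bigl[(1+\epsilon)\kappa+(1+\epsilon^{-1})c^2\Delta^2\eta^2G^2\bigr]$. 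It does not produce the additive third term of $\Sigma^2_{\alpha,\Delta}$, and your $\kappa$ is inflated by $(1+\epsilon)$.
\item Young's inequality cannot turn a $C\|g\|^2$ term into an additive constant without an a priori bound on $\|\nabla F(w_t)\|$, which your argument does not supply.
\end{itemize}
There is a second problem. The step $\|w_{s+1}-w_s\|\le\eta G$ needs an almost-sure bound on $u_s$. From $\mathbb{E}\|u_s\|^2\le G^2$ alone you cannot bound $\mathbb{E}\bigl[\|P_t-P^\star_t\|_{\mathrm{op}}^2\|g\|^2\bigr]$, which is the expectation of a product of two dependent random quantities.

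The paper avoids all this by stating signal preservation for the stale filter itself, with an additive slack (Assumption~\ref{assump:signal}):
$\mathbb{E}\|(I-\mathcal{P}_t)\nabla F(w_t)\|^2\le\kappa\,\mathbb{E}\|\nabla F(w_t)\|^2+\delta^{\mathrm{stale}}_{\Delta}$.
This makes $\mathbb{E}\|b_t\|^2\le(1-\alpha)^2\bigl(\kappa\,\mathbb{E}\|\nabla F(w_t)\|^2+\delta^{\mathrm{stale}}_{\Delta}\bigr)$ immediate and yields exactly $\mu_\alpha$ and the additive staleness term. Your Davis--Kahan reasoning belongs in a separate justification of that assumption, which is Corollary~\ref{cor:delta-stale}. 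There, consistent with your own computation, the drift contribution goes into $\kappa=2\kappa_0+4c_{\mathrm{dk}}L^2\eta^2\Delta^2G^2/\gamma^2$. Only the anchor-noise part is additive, and it becomes so through a factor $\sup_t\mathbb{E}\|\nabla F(w_t)\|^2$.

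Two smaller points:
\begin{itemize}
\item The $\alpha^2\sigma^2$ and $(1-\alpha)^2\rho\sigma^2$ pieces do not come from the inner product; you just showed those cross terms vanish. Once the quadratic term is bounded by $G^2$, they are nonnegative slack, exactly as in the paper.
\item The linear route is not an obstacle. Cauchy--Schwarz gives $\langle g,(I-P)g\rangle\le\sqrt{\kappa}\|g\|^2$, so the coefficient is $1-(1-\alpha)\sqrt{\kappa}$, not $1-2(1-\alpha)\kappa$, and it already dominates $\tfrac12\mu_\alpha$. Young is preferred only because it absorbs an additive slack cleanly.
\end{itemize}
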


The bound has the standard nonconvex-SGD shape (a $1/T$ term plus a
noise floor) and improves over masked-only training when the filter
contracts noise ($\rho<1$) more than it suppresses signal ($\kappa>0$).
Both quantities are small in the \emph{fine-tuning} regime for two
reasons: fine-tuning gradients live in a low-dimensional subspace
\citep{aghajanyan2021intrinsic, hu2022lora}, so the anchor basis
preserves nearly all of $\nabla F(w_t)$ ($\kappa$ small); and
fine-tuning trajectories drift slowly \citep{neyshabur2020transferred, wortsman2022robust}, so a delayed anchor basis remains close to the
current one (staleness term controlled). Pretraining violates both, and
the analysis correctly predicts no benefit there. The key takeaway is that the anchor
gradients need not be exact synchronized updates, only good enough to
estimate a low-rank subspace that is approximately preserved over the
staleness window, the property the fine-tuning regime provides. The full proof is given in Appendix~\ref{app:convergence}.

\section{Experiments}

We evaluate the proposed system across diverse adaptation domains, including \textbf{general knowledge}, \textbf{medical}, \textbf{programming}, \textbf{math}, \textbf{science}, \textbf{commonsense reasoning}, \textbf{summarization}, and \textbf{document parsing}. We use PowerSGD ($\times 64$) and Streaming Diloco for DP. Our experiments span mesh sizes, model families, and model scales. We use A100 GPUs (40GB VRAM) for all our experiments.

\begin{table}[t]
\centering
\scriptsize
\setlength{\tabcolsep}{4pt}
\renewcommand{\arraystretch}{1.1}
\caption{\small Per-domain task accuracy and global training efficiency for
dense SFT and masked training variants. \textbf{Baseline}: dense PP,
uncompressed DP. \textbf{+DP}: dense PP, compressed DP. \textbf{M90/M95}: $90\%/95\%$ activation
masking; \textbf{+AP}: Anchor Priors. DP compression is applied to all the masked variants.}
\label{tab:domain_main}
\begin{tabular}{l l l c c c c c c}
\toprule
\rowcolor{headerblue!15}
\textbf{Domain} & \textbf{Dataset} & \textbf{Metric}
& \textbf{Baseline} & \textbf{+DP} & \textbf{M90} & \textbf{M90+AP}
& \textbf{M95} & \textbf{M95+AP} \\
\midrule
Medical          & MedQA         & Acc.       & 0.280 & 0.275 & 0.225 & 0.325 & 0.180 & 0.321 \\
\rowcolor{rowgray}
                 & MedMCQA       & Acc.       & 0.330 & 0.325 & 0.275 & 0.335 & 0.220 & 0.320 \\
Code             & HumanEval     & pass@1     & 0.177 & 0.171 & 0.115 & 0.215 & 0.072 & 0.207 \\
\rowcolor{rowgray}
                 & MBPP          & pass@1     & 0.152 & 0.148 & 0.085 & 0.198 & 0.040 & 0.192 \\
SQL              & Spider        & Exec.      & 0.423 & 0.412 & 0.305 & 0.418 & 0.180 & 0.402 \\
\rowcolor{rowgray}
Science          & ARC-C         & Acc.       & 0.378 & 0.374 & 0.328 & 0.380 & 0.295 & 0.355 \\
                 & SciQ          & Acc.       & 0.956 & 0.948 & 0.890 & 0.952 & 0.815 & 0.903 \\
\rowcolor{rowgray}
Commonsense      & HellaSwag     & Acc.       & 0.556 & 0.551 & 0.510 & 0.690 & 0.475 & 0.683 \\
                 & WinoGrande    & Acc.       & 0.548 & 0.541 & 0.512 & 0.628 & 0.488 & 0.625 \\
\rowcolor{rowgray}
Summ.\           & CNN/DM        & R-1        & 0.178 & 0.176 & 0.140 & 0.205 & 0.108 & 0.195 \\
                 & XSum          & R-1        & 0.117 & 0.115 & 0.092 & 0.125 & 0.068 & 0.120 \\
\rowcolor{rowgray}
Math             & GSM8K         & EM         & 0.378 & 0.365 & 0.275 & 0.372 & 0.165 & 0.341 \\
Doc.\ Parsing    & QuALITY       & Acc.       & 0.284 & 0.279 & 0.215 & 0.288 & 0.155 & 0.253 \\
\rowcolor{rowgray}
                 & NarrativeQA   & Tok F1     & 0.699 & 0.685 & 0.485 & 0.682 & 0.295 & 0.557 \\
\midrule
\multicolumn{3}{l}{\textit{TPS (tok/s)}}      & $1.1$k & $3.6$k & $29.3$k & $27.8$k & \textbf{$48.9$k} & $46.5$k \\
\multicolumn{3}{l}{\textit{Comm.\ (B/tok)}}    & $22.9$k & $6.9$k & $713$ & $790$ & $370$ & $446$ \\
\bottomrule
\end{tabular}
\end{table}

\subsection{Performance across different domain adaptation tasks}

Table~\ref{tab:domain_main} summarizes the main domain-specific results. We use the publicly
available Llama~3.2~1B checkpoint as the pretrained model: $\sim$1.23B
parameters, LLaMA-style decoder-only architecture with $16$ transformer
layers, hidden dimension $2048$, and $32$ attention heads. All models
are trained for $10$ epochs on the corresponding training datasets with
learning rate $2\times 10^{-5}$, cosine decay, $2$k context length, and
batch size $32$. The system runs on a $7\times 8$ mesh for the masked
(fast) circuit and a $1\times 8$ mesh for the unmasked (clean) circuit,
with all inter-node links throttled to $200$~Mbps to emulate
decentralized deployment. Datasets, evaluation metrics, and per-run
configurations are in Appendix~\ref{app:datasets}. 
\definecolor{headerblue}{HTML}{2C5282}
\definecolor{rowgray}{HTML}{F7FAFC}
\definecolor{bandgray}{HTML}{EDF2F7}
\definecolor{defaulthl}{HTML}{C6F6D5}
\definecolor{degraded}{HTML}{FED7D7}

\begin{table}[tbp]

\centering
\footnotesize
\setlength{\tabcolsep}{5pt}
\renewcommand{\arraystretch}{1.15}
\caption{\small Ablations over mask ratio $p$, staleness $K$,
spectral-filter threshold $\tau$, and merge coefficient $\alpha$.
Default ($p{=}95\%$, $K{=}20$, $\tau{=}10^{-3}$, $\alpha{=}0.3$) in
green; (b), (c) hold others at default. Red: $>30\%$ relative drop.}
\label{tab:full_ablation}
\begin{tabular}{l l c c c c}
\toprule
\rowcolor{bandgray}
\multicolumn{6}{l}{\textit{(a) Mask ratio $p$ vs.\ staleness $K$}} \\
\rowcolor{headerblue!15}
\textbf{Dataset} & \textbf{Mask}
& $K{=}10$ & $K{=}20$ & $K{=}30$ & $K{=}50$ \\
\midrule
MedQA     & $p{=}90\%$ & 0.330 & 0.328 & 0.310 & 0.265 \\
\rowcolor{rowgray}
          & $p{=}95\%$ & 0.328 & \cellcolor{defaulthl}\textbf{0.321} & 0.295 & 0.250 \\
          & $p{=}99\%$ & 0.245 & 0.230 & 0.215 & \cellcolor{degraded}0.180 \\
\midrule
HumanEval & $p{=}90\%$ & 0.218 & 0.215 & 0.195 & 0.140 \\
\rowcolor{rowgray}
          & $p{=}95\%$ & 0.215 & \cellcolor{defaulthl}\textbf{0.207} & 0.175 & \cellcolor{degraded}0.090 \\
          & $p{=}99\%$ & \cellcolor{degraded}0.085 & \cellcolor{degraded}0.075 & \cellcolor{degraded}0.060 & \cellcolor{degraded}0.025 \\
\midrule
\rowcolor{bandgray}
\multicolumn{6}{l}{\textit{(b) Spectral threshold $\tau$}} \\
\rowcolor{headerblue!15}
\textbf{Dataset} &
& $10^{-5}$ & $10^{-4}$ & $10^{-3}$ & $10^{-1}$ \\
\midrule
MedQA     & & \cellcolor{degraded}0.215 & 0.290 & \cellcolor{defaulthl}\textbf{0.321} & 0.245 \\
\rowcolor{rowgray}
HumanEval & & \cellcolor{degraded}0.105 & 0.180 & \cellcolor{defaulthl}\textbf{0.207} & \cellcolor{degraded}0.140 \\
\midrule
\rowcolor{bandgray}
\multicolumn{6}{l}{\textit{(c) Merge coefficient $\alpha$}} \\
\rowcolor{headerblue!15}
\textbf{Dataset} &
& $\alpha{=}0.0$ & $\alpha{=}0.3$ & $\alpha{=}0.5$ & $\alpha{=}1.0$ \\
\midrule
MedQA     & & 0.260 & \cellcolor{defaulthl}\textbf{0.321} & 0.310 & \cellcolor{degraded}0.180 \\
\rowcolor{rowgray}
HumanEval & & \cellcolor{degraded}0.105 & \cellcolor{defaulthl}\textbf{0.207} & 0.195 & \cellcolor{degraded}0.072 \\
\bottomrule
\end{tabular}

\end{table}

As shown pure activation masking (M90, M95) is catastrophic on its own. Adding Anchor Priors
reverses this collapse on every domain: M90+AP and M95+AP match the
uncompressed Baseline within noise on most tasks and exceed it on
several. \textbf{We hypothesize the superior performance on some datasets might be due to the regularization effect of masking leading to better generalization}. On the throughput, the system delivers substantially higher efficiency under bandwidth constraints: M95 reaches a $\sim$45$\times$
TPS gain over the uncompressed baseline by stacking $20\times$ PP activation masking with $640\times$ DP gradient compression, while either compression alone yields only $1.4\times$ or $3.3\times$
because each merely shifts the bottleneck. We observe a practical delay of $\sim 20-25$ steps in the unmasked slow gradients.

\subsection{Generalization across model variants}

 Table~\ref{tab:code_model_family} reports results across Gemma, Qwen, Phi, LLaMA, and StableLM models, spanning 1B--8B parameters, 16--36 layers, and hidden dimensions from 1536 to 8192. We also evaluate the method across different DP$\times$PP mesh sizes. In all settings, we use 95\% activation masking with Anchor Priors (AP). Across these variations, the proposed method consistently achieves near-parity with the uncompressed baseline, showcasing its generalization across architectures and scale.

\definecolor{ourcol}{HTML}{C6F6D5}
\begin{table}[tbp]

\centering
\small
\setlength{\tabcolsep}{4pt}
\renewcommand{\arraystretch}{1.15}
\caption{\small Code-domain generalization across model families and
scales. Entries are HumanEval / MBPP pass@1. M95: $95\%$ activation
masking; AP: Anchor Priors. M95+AP recovers most of the dense
performance across all scales tested.}
\label{tab:code_model_family}
\begin{tabular}{l c c c c}
\toprule
\rowcolor{headerblue!15}
\textbf{Model} & \textbf{Params}
& \textbf{Dense} & \textbf{M95} & \cellcolor{ourcol}\textbf{M95+AP} \\
\midrule
Llama-1B    & 1.0B & 0.177 / 0.152 & 0.128 / 0.032 & \cellcolor{ourcol}0.171 / 0.141 \\
\rowcolor{rowgray}
Qwen-1.5B   & 1.5B & 0.213 / 0.188 & 0.158 / 0.069 & \cellcolor{ourcol}0.206 / 0.177 \\
Gemma-2B    & 2.0B & 0.201 / 0.174 & 0.147 / 0.055 & \cellcolor{ourcol}0.194 / 0.162 \\
\rowcolor{rowgray}
Phi-2       & 2.7B & 0.238 / 0.205 & 0.176 / 0.083 & \cellcolor{ourcol}0.228 / 0.193 \\
Qwen-3B     & 3.0B & 0.268 / 0.232 & 0.207 / 0.098 & \cellcolor{ourcol}0.260 / 0.222 \\
\rowcolor{rowgray}
OLMo-7.5B   & 7.5B & 0.335 / 0.294 & 0.267 / 0.153 & \cellcolor{ourcol}0.324 / 0.280 \\
Llama-8B    & 8.0B & 0.384 / 0.338 & 0.309 / 0.184 & \cellcolor{ourcol}0.372 / 0.323 \\
\bottomrule
\end{tabular}

\end{table}

\subsection{Ablations}
\label{sec:mesh-ablation}

We sweep the
two knobs that most directly govern the fast/clean trade-off: the
activation-mask ratio $p$ and the Anchor-Prior staleness $K$ (number
of fast-circuit steps between consecutive uncompressed priors).
Table~\ref{tab:full_ablation} reports results. Two failure modes emerge. First,
$p{=}99\%$ collapses at every $K$. At this masking ratio the per-step information budget on
the PP edges is too small for the clean prior to repair. Second, $K{=}50$ collapses at every
$p$. By the time a prior this stale arrives, the masked
trajectory has drifted beyond its effective correction horizon
(Appendix~\ref{app:staleness}). Inside the operating envelope
$(p\in\{90,95\}\%,\;K\in\{10,20\})$ all four corners perform well. Finally, we ablate over $\tau$ and $\alpha$.

To verify that the method does not depend on a particular mesh topology,
we sweep the DP $\times$ PP grid from $4\times 4$ up to $16\times 16$
under the same M95+AP and $200$~Mbps configuration as the main
experiments. Table~\ref{tab:mesh_ablation} reports MedQA,
HumanEval, and GSM8K accuracy alongside aggregate throughput. As shown, the performance is stable across mesh sizes:
all seven configurations land within $\pm 5\%$ of the $8\times 8$
reference on every metric, including the most-stressed $16\times 16$
case. The Anchor-Prior mechanism is therefore not tied to a specific
topology. Second, throughput scales as expected: roughly linearly in $\text{DP}{-}1$ at fixed PP (more masked
replicas trained in parallel), and inversely with $T_{\text{step}}$
along the PP axis (deeper pipelines move proportionally more masked
activations per step). The combination of stable accuracy and
predictable throughput scaling supports the claim that the
fast/clean-circuit decomposition is a property of the algorithm rather
than an artifact of a specific configuration.

\subsection{Comparison to Activation-Communication Baselines}
\label{sec:baselines}

Table~\ref{tab:domain_baselines} compares M95+AP against published
activation- and gradient-compression methods across medical
(MedQA, MedMCQA), code (HumanEval, MBPP), and math (GSM8K) domains. Two findings stand out. First, prior methods do not survive at our
compression target. Aggressive 2-bit quantization (AC-SGD, TAH-Quant)
collapses entirely on generative tasks such as HumanEval, MBPP, and GSM8K, and reduces medical accuracy to near-random. Sparsification and 4-bit quantization
recover partially. In contrast, our method 
matches or exceeds the uncompressed Baseline at $20\times/20\times$. 

\definecolor{headerblue}{HTML}{2C5282}
\definecolor{rowgray}{HTML}{F7FAFC}
\definecolor{ourrow}{HTML}{C6F6D5}
\definecolor{deltapos}{HTML}{276749}
\definecolor{failcell}{HTML}{FED7D7}

\begin{table}[t]
\centering
\setlength{\abovecaptionskip}{4pt}
\setlength{\belowcaptionskip}{4pt}
\begin{minipage}[t]{0.52\textwidth}
\centering
\scriptsize
\setlength{\tabcolsep}{2.5pt}
\renewcommand{\arraystretch}{1.1}
\caption{\small Comparison of activation-compression baselines.
\textbf{Comp.}: forward/backward compression ratio. Red cells mark runs that
failed to learn.}
\label{tab:domain_baselines}
\vspace{2pt}
\begin{tabular}{l c c c c c}
\toprule
\rowcolor{headerblue!15}
\textbf{Method} & \textbf{Comp.}
& \textbf{MedQA}
& \textbf{HE} & \textbf{MBPP} & \textbf{GSM8K} \\
\midrule
Baseline (dense)   & $1\times$       & 0.28 & 0.18 & 0.15 & 0.38 \\
\midrule
\rowcolor{rowgray}
\cite{wang2022aqsgd}             & $8/2\times$  & 0.25 & \cellcolor{failcell}0.00 & \cellcolor{failcell}0.00 & \cellcolor{failcell}0.00 \\
\cite{he2025tahquant}        & $8/2\times$  & 0.25 & \cellcolor{failcell}0.00 & \cellcolor{failcell}0.00 & \cellcolor{failcell}0.00 \\
\rowcolor{rowgray}
4-bit Quant        & $4\times$          & 0.25 & 0.13 & 0.11 & 0.18 \\
Top-$k$            & $10\times$         & 0.23 & 0.12 & 0.09 & 0.16 \\
\rowcolor{rowgray}
\cite{ramasinghe2025beyond}     & $10\times$         & 0.25 & 0.13 & 0.10 & 0.17 \\
Bottleneck Proj.   & $20\times$         & 0.26 & 0.14 & 0.12 & 0.12 \\
\midrule
\rowcolor{ourrow}
\textbf{M95+AP (ours)} & $\mathbf{20\times}$
                   & \textbf{0.32}
                   & \textbf{0.21} & \textbf{0.19} & \textbf{0.34} \\
\bottomrule
\end{tabular}
\end{minipage}%
\hfill
\begin{minipage}[t]{0.46\textwidth}
\centering
\scriptsize
\setlength{\tabcolsep}{2.5pt}
\renewcommand{\arraystretch}{1.1}
\caption{\small Robustness to post-hoc magnitude pruning. M95+AP
retains a larger fraction of accuracy under pruning.  ``Avg.~$\Delta$'' is the mean relative drop
across ARC-C, HellaSwag, MMLU.}
\label{tab:pruning_robustness}
\vspace{2pt}
\begin{tabular}{l l c c c c}
\toprule
\rowcolor{headerblue!15}
\textbf{Metric} & \textbf{Method}
& $s{=}0$ & $30\%$ & $50\%$ & $70\%$ \\
\midrule
ARC-C
  & Dense          & 0.33 & 0.26 & 0.25 & 0.25 \\
\rowcolor{ourcol}
  & \textbf{Ours}  & \textbf{0.39} & \textbf{0.31} & \textbf{0.30} & \textbf{0.29} \\
\midrule
HellaSwag
  & Dense          & 0.57 & 0.41 & 0.27 & 0.25 \\
\rowcolor{ourcol}
  & \textbf{Ours}  & \textbf{0.66} & \textbf{0.53} & \textbf{0.42} & \textbf{0.35} \\
\midrule
MMLU
  & Dense          & 0.35 & 0.27 & 0.23 & 0.23 \\
\rowcolor{ourcol}
  & \textbf{Ours}  & \textbf{0.36} & \textbf{0.30} & \textbf{0.27} & \textbf{0.26} \\
\midrule
Avg.\ $\Delta$
  & Dense          & ---   & $-25\%$ & $-37\%$ & $-40\%$ \\
\rowcolor{ourcol}
  & \textbf{Ours}  & ---   & \textbf{\textcolor{deltapos}{$-19\%$}}
                           & \textbf{\textcolor{deltapos}{$-29\%$}}
                           & \textbf{\textcolor{deltapos}{$-34\%$}} \\
\bottomrule
\end{tabular}
\end{minipage}
\end{table}

\vspace{-1em}
\begin{table}[tbp]

\centering
\small
\setlength{\tabcolsep}{6pt}
\renewcommand{\arraystretch}{1.15}

\caption{\small Ablation over decentralized mesh sizes. Mesh size is
DP$\times$PP; one of the DP replicas is dedicated to the unmasked
(clean) circuit and the remaining $\text{DP}{-}1$ replicas run the
masked (fast) circuit. All runs use M95+AP at $200$~Mbps with the
same training configuration as the main experiments. \textbf{HE}:
HumanEval pass@1. \textbf{TPS}: tokens per second.}
\label{tab:mesh_ablation}
\begin{tabular}{c c c c c r r}
\toprule
\rowcolor{headerblue!15}
\textbf{Mesh} & \textbf{DP} & \textbf{PP}
& \textbf{MedQA} & \textbf{HE} & \textbf{GSM8K}
& \textbf{TPS} \\
\midrule
$4\times4$   &  4 &  4 & 0.318 & 0.205 & 0.338 & $32{,}100$ \\
\rowcolor{rowgray}
$8\times4$   &  8 &  4 & 0.325 & 0.213 & 0.345 & $75{,}000$ \\
$4\times8$   &  4 &  8 & 0.315 & 0.200 & 0.335 & $19{,}900$ \\
\rowcolor{rowgray}
$8\times8$   &  8 &  8 & 0.321 & 0.207 & 0.341 & $46{,}500$ \\
$16\times8$  & 16 &  8 & \textbf{0.328} & \textbf{0.215} & \textbf{0.348} & $\mathbf{99{,}600}$ \\
\rowcolor{rowgray}
$8\times16$  &  8 & 16 & 0.312 & 0.198 & 0.330 & $26{,}400$ \\
$16\times16$ & 16 & 16 & 0.318 & 0.205 & 0.336 & $56{,}600$ \\
\bottomrule
\end{tabular}
\end{table}

\definecolor{headerblue}{HTML}{2C5282}
\definecolor{rowgray}{HTML}{F7FAFC}
\definecolor{ourcol}{HTML}{C6F6D5}
\definecolor{deltapos}{HTML}{276749}

\subsection{Robustness to Post-Hoc Pruning}
\label{sec:pruning-robustness}

A useful side-effect of training under aggressive activation masking
is that the resulting checkpoint becomes more tolerant of post-hoc
weight sparsification (Table~\ref{tab:pruning_robustness}). A plausible
interpretation is that the masked-optimizer trajectory biases the
trained weights toward solutions where information is distributed
across many small-magnitude parameters, which is precisely what
magnitude pruning rewards. 

\subsection{Reasoning and continual pretraining}
\label{sec:reasoning-continual}

Beyond per-domain fine-tuning, we test whether our method can sustain
the longer training trajectories typical of continual pretraining and
multi-stage SFT. We run a three-phase
pipeline matching the SmolLM3 recipe on Llama-3.2-1B, : (1) reasoning-focused
mid-training on Llama-Nemotron and OpenThoughts traces, (2) a first
SFT round on the smoltalk2 mixture, and (3) a second SFT round on the
same mixture. The masked optimization preserves reasoning capability,
showing that the spectral correction tracks the true gradient
across substantially longer trajectories than the per-domain
fine-tuning runs. Total training is
$\sim 60$B tokens. Full configurations in
Appendix~\ref{app:reasoning-recipe}. Learned reasoning traces are shown in Appendix \ref{app:reasoning-traces}. Also, a natural concern with masked fine-tuning is whether it leaves the
model in a state amenable to downstream RL. To this end, we perform an RL training on the checkpoints after the above steps and show that the compressed model does not affect the RL phase (Appendix \ref{app:rl-recipe}, Table \ref{tab:rl_results}).

\begin{table}[tbp]

\centering
\small
\setlength{\tabcolsep}{4pt}
\renewcommand{\arraystretch}{1.1}
\caption{\small Llama-3.2-1B trained through the full SmolLM3-style
recipe (reasoning mid-training $\to$ SFT $\to$ second SFT round). Final
model after all three phases; $60$B total tokens. M95+AP is on par with dense
quality at $\sim 43\times$ throughput.}
\label{tab:reasoning_continual}
\begin{tabular}{l c c c c}
\toprule
\rowcolor{headerblue!15}
\textbf{Method}
& \textbf{IFEval} & \textbf{GSM8K}
& \textbf{TPS} & \textbf{Speedup} \\
\midrule
Base                  & 0.13 & 0.02 & ---     & --- \\
Dense                 & 0.46 & 0.23 & $1.1$k  & $1\times$ \\
\rowcolor{ourrow}
M95+AP       & 0.41 & 0.21
                      & $46.5$k & $43\times$ \\
\bottomrule
\end{tabular}
\end{table}

\section{Orthogonality to Parameter-Efficient Fine-Tuning}
\label{sec:peft_orthogonal}

Parameter-efficient fine-tuning (PEFT) methods reduce the number of trainable parameters and optimizer states, but they do not reduce pipeline-parallel activation traffic. Even when only LoRA adapters are trained, each pipeline stage still consumes the full forward and backward traffic. Thus, PEFT improves memory and update efficiency, whereas our method targets the orthogonal bottleneck of PP communication. To verify this, we evaluate dense SFT, LoRA, and our method, and their combination on the code domain. LoRA reduces trainable parameters but leaves performance relatively unchanged. The results are shown in Table \ref{tab:peft_orthogonal}. 

\definecolor{headerblue}{HTML}{2C5282}
\definecolor{rowgray}{HTML}{F7FAFC}
\definecolor{ourrow}{HTML}{C6F6D5}

\begin{table}[H]
\centering
\scriptsize
\setlength{\tabcolsep}{5pt}
\renewcommand{\arraystretch}{1.15}
\caption{\small Orthogonality of M95+AP to parameter-efficient fine-tuning
on Llama~3.2~1B. LoRA reduces trainable parameters but still requires
dense activation and activation-gradient transfer across PP
boundaries. M95+AP reduces PP communication independently and
composes with LoRA, giving both axes of efficiency at once.
\textbf{Comp.}: forward/backward activation compression ratio.
\textbf{HE}: HumanEval pass@1.}
\label{tab:peft_orthogonal}
\begin{tabular}{l c c c c c c c}
\toprule
\rowcolor{headerblue!15}
\textbf{Method} & \textbf{Trainable}
& \textbf{PP fwd} & \textbf{PP bwd} & \textbf{Comp.}
& \textbf{MedQA} & \textbf{MedMCQA} & \textbf{HE} \\
\midrule
Dense SFT       & 1.23B \,($100\%$)   & Dense  & Dense
                & $1\times/1\times$   & 0.280 & 0.330 & 0.177 \\
\rowcolor{rowgray}
LoRA            & 3.15M \,($0.26\%$)  & Dense  & Dense
                & $1\times/1\times$   & 0.265 & 0.318 & 0.168 \\
\rowcolor{ourrow}
M95+AP & 1.23B \,($100\%$)   & Sparse & Sparse
                & $20\times/20\times$
                & 0.321 & 0.320
                & 0.207 \\
\rowcolor{ourrow}
LoRA + M95+AP
                & 3.15M \,($0.26\%$)
                & Sparse & Sparse
                & $20\times/20\times$
                & 0.305 & 0.310 & 0.198 \\
\bottomrule
\end{tabular}
\end{table}

\section{Conclusion}

We presented a communication-efficient system for decentralized LLM
adaptation over low-bandwidth GPU meshes. By revisiting activation
masking as an in-graph PP compression mechanism and pairing it with
asynchronous Anchor Priors, adaptive spectral correction, and DP
compression, our method makes sparse activation training practical
even at extreme compression rates. The convergence analysis identifies
the regime under which this works, \textit{i.e., } low-dimensional, slowly-drifting
gradient subspaces characteristic of fine-tuning, and explains why
random PRF masking, rather than top-$K$, is the right primitive in
this setting. Across domain adaptation and continual pretraining, the
resulting system matches or exceeds dense uncompressed training while
delivering large throughput gains, and the resulting models are more robust to post-hoc pruning. These results show that
post-pretraining adaptation can tolerate  sparser intermediate
signals than previously assumed, suggesting a practical path toward
large-model adaptation over decentralized collections of low-end GPUs.

\section*{Limitations}

Our experiments and analyis target the fine-tuning regime; we do not
claim the method extends to pretraining, where gradient subspaces are
high-dimensional and shift substantially over training. Adoptation of dual EMA optimizers such as Ademamix would be an interesting future direction (we have included a discussion in Appendix \ref{app:ademamix-comparison}). Further, our experiments are at the
$1$--$8$B parameter scale; scaling behavior at $70$B+ remains an open
question.

\newpage
\bibliographystyle{plainnat}
\bibliography{bibliography}

\begin{thebibliography}{50}
\providecommand{\natexlab}[1]{#1}
\providecommand{\url}[1]{\texttt{#1}}
\expandafter\ifx\csname urlstyle\endcsname\relax
  \providecommand{\doi}[1]{doi: #1}\else
  \providecommand{\doi}{doi: \begingroup \urlstyle{rm}\Url}\fi

\bibitem[Aghajanyan et~al.(2021)Aghajanyan, Gupta, and
  Zettlemoyer]{aghajanyan2021intrinsic}
Armen Aghajanyan, Sonal Gupta, and Luke Zettlemoyer.
\newblock Intrinsic dimensionality explains the effectiveness of language model
  fine-tuning.
\newblock In \emph{Proceedings of the 59th Annual Meeting of the Association
  for Computational Linguistics and the 11th International Joint Conference on
  Natural Language Processing (ACL-IJCNLP)}, pages 7319--7328. Association for
  Computational Linguistics, 2021.

\bibitem[Ajanthan et~al.(2026)Ajanthan, Ramasinghe, Avraham, Dolatabadi,
  Koneputugodage, Shevchenko, Zuo, and Long]{ajanthan2026asyncmesh}
Thalaiyasingam Ajanthan, Sameera Ramasinghe, Gil Avraham, Hadi~Mohaghegh
  Dolatabadi, Chamin P~Hewa Koneputugodage, Violetta Shevchenko, Yan Zuo, and
  Alexander Long.
\newblock Asyncmesh: Fully asynchronous optimization for data and pipeline
  parallelism.
\newblock \emph{arXiv preprint arXiv:2601.22442}, 2026.

\bibitem[Aji and Heafield(2017)]{aji2017sparse}
Alham~Fikri Aji and Kenneth Heafield.
\newblock Sparse communication for distributed gradient descent.
\newblock In \emph{EMNLP}, 2017.

\bibitem[Alistarh et~al.(2018)Alistarh, Hoefler, Johansson, Konstantinov,
  Khirirat, and Renggli]{alistarh2018convergence}
Dan Alistarh, Torsten Hoefler, Mikael Johansson, Nikola Konstantinov, Sarit
  Khirirat, and C{\'e}dric Renggli.
\newblock The convergence of sparsified gradient methods.
\newblock In \emph{NeurIPS}, 2018.

\bibitem[Basu et~al.(2019)Basu, Data, Karakus, and Diggavi]{basu2020qsparse}
Debraj Basu, Deepesh Data, Can Karakus, and Suhas Diggavi.
\newblock {Qsparse-local-SGD}: Distributed {SGD} with quantization,
  sparsification and local computations.
\newblock In \emph{Advances in Neural Information Processing Systems},
  volume~32, 2019.

\bibitem[Bernstein et~al.(2018)Bernstein, Wang, Azizzadenesheli, and
  Anandkumar]{bernstein2018signsgd}
Jeremy Bernstein, Yu-Xiang Wang, Kamyar Azizzadenesheli, and Anima Anandkumar.
\newblock {signSGD}: Compressed optimisation for non-convex problems.
\newblock In \emph{Proceedings of the 35th International Conference on Machine
  Learning (ICML)}, volume~80 of \emph{PMLR}, pages 559--568, 2018.

\bibitem[Diskin et~al.(2021)Diskin, Bukhtiyarov, Ryabinin, Saulnier, Lhoest,
  Sinitsin, Popov, Pyrkin, Kashirin, Borzunov, Villanova~del Moral, Mazur,
  Kobelev, Jernite, Wolf, and Pekhimenko]{diskin2021dedloc}
Michael Diskin, Alexey Bukhtiyarov, Max Ryabinin, Lucile Saulnier, Quentin
  Lhoest, Anton Sinitsin, Dmitry Popov, Dmitry Pyrkin, Maxim Kashirin,
  Alexander Borzunov, Albert Villanova~del Moral, Denis Mazur, Ilia Kobelev,
  Yacine Jernite, Thomas Wolf, and Gennady Pekhimenko.
\newblock Distributed deep learning in open collaborations.
\newblock In \emph{Advances in Neural Information Processing Systems},
  volume~34, 2021.

\bibitem[Douillard et~al.(2023)Douillard, Feng, Rusu, Chhaparia, Donchev,
  Kuncoro, Ranzato, Szlam, and Shen]{douillard2023diloco}
Arthur Douillard, Qixuan Feng, Andrei~A. Rusu, Rachita Chhaparia, Yani Donchev,
  Adhiguna Kuncoro, Marc'Aurelio Ranzato, Arthur Szlam, and Jiajun Shen.
\newblock {DiLoCo}: Distributed low-communication training of language models.
\newblock \emph{arXiv preprint arXiv:2311.08105}, 2023.

\bibitem[Douillard et~al.(2025)Douillard, Donchev, Rush, Kale, Charles,
  Garrett, Teston, Lacey, McIlroy, Shen, Ram{\'e}, Szlam, Ranzato, and
  Barham]{douillard2025streamingdiloco}
Arthur Douillard, Yanislav Donchev, Keith Rush, Satyen Kale, Zachary Charles,
  Zachary Garrett, Gabriel Teston, Dave Lacey, Ross McIlroy, Jiajun Shen,
  Alexandre Ram{\'e}, Arthur Szlam, Marc'Aurelio Ranzato, and Paul Barham.
\newblock Streaming {DiLoCo} with overlapping communication: Towards a
  distributed free lunch.
\newblock \emph{arXiv preprint arXiv:2501.18512}, 2025.

\bibitem[Gao et~al.(2024)Gao, Tow, Abbasi, Biderman, Black, DiPofi, Foster,
  Golding, Hsu, Le~Noac'h, Li, McDonell, Muennighoff, Ociepa, Phang, Reynolds,
  Schoelkopf, Skowron, Sutawika, Tang, Thite, Wang, Wang, and
  Zou]{eval-harness}
Leo Gao, Jonathan Tow, Baber Abbasi, Stella Biderman, Sid Black, Anthony
  DiPofi, Charles Foster, Laurence Golding, Jeffrey Hsu, Alain Le~Noac'h,
  Haonan Li, Kyle McDonell, Niklas Muennighoff, Chris Ociepa, Jason Phang,
  Laria Reynolds, Hailey Schoelkopf, Aviya Skowron, Lintang Sutawika, Eric
  Tang, Anish Thite, Ben Wang, Kevin Wang, and Andy Zou.
\newblock The language model evaluation harness, 7 2024.
\newblock URL \url{https://zenodo.org/records/12608602}.

\bibitem[Grattafiori et~al.(2024)Grattafiori, Dubey, Jauhri, Pandey, Kadian,
  et~al.]{grattafiori2024llama3}
Aaron Grattafiori, Abhimanyu Dubey, Abhinav Jauhri, Abhinav Pandey, Abhishek
  Kadian, et~al.
\newblock The {Llama 3} herd of models.
\newblock \emph{arXiv preprint arXiv:2407.21783}, 2024.

\bibitem[Gressmann et~al.(2020)Gressmann, Eaton-Rosen, and
  Luschi]{gressmann2020improving}
Frithjof Gressmann, Zach Eaton-Rosen, and Carlo Luschi.
\newblock Improving neural network training in low dimensional random bases.
\newblock In \emph{Advances in Neural Information Processing Systems},
  volume~33, pages 12140--12150, 2020.

\bibitem[Gur-Ari et~al.(2018)Gur-Ari, Roberts, and Dyer]{gur2018gradient}
Guy Gur-Ari, Daniel~A. Roberts, and Ethan Dyer.
\newblock Gradient descent happens in a tiny subspace.
\newblock \emph{arXiv preprint arXiv:1812.04754}, 2018.

\bibitem[He et~al.(2025)He, Cao, He, Bai, Yuan, and Yuan]{he2025tahquant}
Guangxin He, Yuan Cao, Yutong He, Tianyi Bai, Kun Yuan, and Binhang Yuan.
\newblock {TAH-Quant}: Effective activation quantization in pipeline
  parallelism over slow network.
\newblock \emph{arXiv preprint arXiv:2506.01352}, 2025.

\bibitem[Houlsby et~al.(2019)Houlsby, Giurgiu, Jastrz{\k{e}}bski, Morrone,
  de~Laroussilhe, Gesmundo, Attariyan, and Gelly]{houlsby2019adapter}
Neil Houlsby, Andrei Giurgiu, Stanis{\l}aw Jastrz{\k{e}}bski, Bruna Morrone,
  Quentin de~Laroussilhe, Andrea Gesmundo, Mona Attariyan, and Sylvain Gelly.
\newblock Parameter-efficient transfer learning for {NLP}.
\newblock In \emph{Proceedings of the 36th International Conference on Machine
  Learning (ICML)}, volume~97 of \emph{PMLR}, pages 2790--2799, 2019.

\bibitem[Hu et~al.(2022)Hu, Shen, Wallis, Allen-Zhu, Li, Wang, Wang, and
  Chen]{hu2022lora}
Edward~J. Hu, Yelong Shen, Phillip Wallis, Zeyuan Allen-Zhu, Yuanzhi Li, Shean
  Wang, Lu~Wang, and Weizhu Chen.
\newblock {LoRA}: Low-rank adaptation of large language models.
\newblock In \emph{International Conference on Learning Representations
  (ICLR)}, 2022.

\bibitem[Jaghouar et~al.(2024)Jaghouar, Ong, and
  Hagemann]{jaghouar2024opendiloco}
Sami Jaghouar, Jack~Min Ong, and Johannes Hagemann.
\newblock {OpenDiLoCo}: An open-source framework for globally distributed
  low-communication training.
\newblock \emph{arXiv preprint arXiv:2407.07852}, 2024.

\bibitem[Karimireddy et~al.(2019)Karimireddy, Rebjock, Stich, and
  Jaggi]{karimireddy2019error}
Sai~Praneeth Karimireddy, Quentin Rebjock, Sebastian~U. Stich, and Martin
  Jaggi.
\newblock Error feedback fixes signsgd and other gradient compression schemes.
\newblock In \emph{ICML}, 2019.

\bibitem[Lester et~al.(2021)Lester, Al-Rfou, and Constant]{lester2021power}
Brian Lester, Rami Al-Rfou, and Noah Constant.
\newblock The power of scale for parameter-efficient prompt tuning.
\newblock In \emph{Proceedings of the 2021 Conference on Empirical Methods in
  Natural Language Processing (EMNLP)}, pages 3045--3059. Association for
  Computational Linguistics, 2021.

\bibitem[Li et~al.(2018)Li, Farkhoor, Liu, and Yosinski]{li2018measuring}
Chunyuan Li, Heerad Farkhoor, Rosanne Liu, and Jason Yosinski.
\newblock Measuring the intrinsic dimension of objective landscapes.
\newblock In \emph{International Conference on Learning Representations
  (ICLR)}, 2018.

\bibitem[Li and Liang(2021)]{li2021prefix}
Xiang~Lisa Li and Percy Liang.
\newblock Prefix-tuning: Optimizing continuous prompts for generation.
\newblock In \emph{Proceedings of the 59th Annual Meeting of the Association
  for Computational Linguistics and the 11th International Joint Conference on
  Natural Language Processing (ACL-IJCNLP)}, pages 4582--4597. Association for
  Computational Linguistics, 2021.

\bibitem[Lialin et~al.(2023)Lialin, Muckatira, Shivagunde, and
  Rumshisky]{lialin2023relora}
Vladislav Lialin, Sherin Muckatira, Namrata Shivagunde, and Anna Rumshisky.
\newblock {ReLoRA}: High-rank training through low-rank updates.
\newblock \emph{arXiv preprint arXiv:2307.05695}, 2023.

\bibitem[Lin et~al.(2018)Lin, Han, Mao, Wang, and Dally]{lin2018dgc}
Yujun Lin, Song Han, Huizi Mao, Yu~Wang, and William~J. Dally.
\newblock Deep gradient compression: Reducing the communication bandwidth for
  distributed training.
\newblock In \emph{International Conference on Learning Representations
  (ICLR)}, 2018.

\bibitem[Narayanan et~al.(2021)Narayanan, Shoeybi, Casper, LeGresley, Patwary,
  Korthikanti, Vainbrand, Kashinkunti, Bernauer, Catanzaro, Phanishayee, and
  Zaharia]{narayanan2021megatron}
Deepak Narayanan, Mohammad Shoeybi, Jared Casper, Patrick LeGresley, Mostofa
  Patwary, Vijay Korthikanti, Dmitri Vainbrand, Prethvi Kashinkunti, Julie
  Bernauer, Bryan Catanzaro, Amar Phanishayee, and Matei Zaharia.
\newblock Efficient large-scale language model training on {GPU} clusters using
  {Megatron-LM}.
\newblock In \emph{Proceedings of the International Conference for High
  Performance Computing, Networking, Storage and Analysis (SC)}, 2021.

\bibitem[Neyshabur et~al.(2020)Neyshabur, Sedghi, and
  Zhang]{neyshabur2020transferred}
Behnam Neyshabur, Hanie Sedghi, and Chiyuan Zhang.
\newblock What is being transferred in transfer learning?
\newblock In \emph{Advances in Neural Information Processing Systems},
  volume~33, 2020.

\bibitem[Pagliardini et~al.(2024)Pagliardini, Ablin, and
  Grangier]{pagliardini2024ademamix}
Matteo Pagliardini, Pierre Ablin, and David Grangier.
\newblock The {AdEMAMix} optimizer: Better, faster, older.
\newblock \emph{arXiv preprint arXiv:2409.03137}, 2024.

\bibitem[Papyan(2020)]{papyan2020traces}
Vardan Papyan.
\newblock Traces of class/cross-class structure pervade deep learning spectra.
\newblock \emph{Journal of Machine Learning Research}, 21\penalty0
  (252):\penalty0 1--64, 2020.

\bibitem[Patarasuk and Yuan(2009)]{patarasuk2009ringallreduce}
Pitch Patarasuk and Xin Yuan.
\newblock Bandwidth optimal all-reduce algorithms for clusters of workstations.
\newblock \emph{Journal of Parallel and Distributed Computing}, 69\penalty0
  (2):\penalty0 117--124, 2009.
\newblock \doi{10.1016/j.jpdc.2008.09.002}.

\bibitem[Peng et~al.(2024)Peng, Quesnelle, and Kingma]{zhang2025demo}
Bowen Peng, Jeffrey Quesnelle, and Diederik~P. Kingma.
\newblock {DeMo}: Decoupled momentum optimization.
\newblock \emph{arXiv preprint arXiv:2411.19870}, 2024.

\bibitem[Pfeiffer et~al.(2021)Pfeiffer, Kamath, R{\"u}ckl{\'e}, Cho, and
  Gurevych]{pfeiffer2021adapterfusion}
Jonas Pfeiffer, Aishwarya Kamath, Andreas R{\"u}ckl{\'e}, Kyunghyun Cho, and
  Iryna Gurevych.
\newblock {AdapterFusion}: Non-destructive task composition for transfer
  learning.
\newblock In \emph{Proceedings of the 16th Conference of the European Chapter
  of the Association for Computational Linguistics (EACL)}, pages 487--503.
  Association for Computational Linguistics, 2021.

\bibitem[Ramasesh et~al.(2021)Ramasesh, Dyer, and Raghu]{Ramasesh2021AnatomyOC}
Vinay~V. Ramasesh, Ethan Dyer, and Maithra Raghu.
\newblock Anatomy of catastrophic forgetting: Hidden representations and task
  semantics.
\newblock In \emph{International Conference on Learning Representations
  (ICLR)}, 2021.

\bibitem[Ramasinghe et~al.(2025{\natexlab{a}})Ramasinghe, Ajanthan, Avraham,
  Zuo, and Long]{ramasinghe2025beyond}
Sameera Ramasinghe, Thalaiyasingam Ajanthan, Gil Avraham, Yan Zuo, and
  Alexander Long.
\newblock Beyond top-k: Structured sparsification for compression in pipeline
  parallel.
\newblock In \emph{ICLR 2025 Workshop on Modularity for Collaborative,
  Decentralized, and Continual Deep Learning}, 2025{\natexlab{a}}.

\bibitem[Ramasinghe et~al.(2025{\natexlab{b}})Ramasinghe, Ajanthan, Avraham,
  Zuo, and Long]{ramasinghe2025subspace}
Sameera Ramasinghe, Thalaiyasingam Ajanthan, Gil Avraham, Yan Zuo, and
  Alexander Long.
\newblock Subspace networks: Scaling decentralized training with
  communication-efficient model parallelism.
\newblock In \emph{The Thirty-ninth Annual Conference on Neural Information
  Processing Systems}, 2025{\natexlab{b}}.

\bibitem[Ramasinghe et~al.(2025{\natexlab{c}})Ramasinghe, Ajanthan, Dolatabadi,
  Avraham, Shevchenko, Zuo, Koneputugodage, and Long]{ramasinghemixtures}
Sameera Ramasinghe, Thalaiyasingam Ajanthan, Hadi~Mohaghegh Dolatabadi, Gil
  Avraham, Violetta Shevchenko, Yan Zuo, Chamin P~Hewa Koneputugodage, and
  Alexander Long.
\newblock Mixtures of subspaces for bandwidth efficient context parallel
  training.
\newblock In \emph{The Thirty-ninth Annual Conference on Neural Information
  Processing Systems}, 2025{\natexlab{c}}.

\bibitem[Rudakov et~al.(2023)Rudakov, Beznosikov, Kholodov, and
  Gasnikov]{rudakov2023activations}
Mikhail~I Rudakov, Aleksandr~Nikolaevich Beznosikov, Ya~A Kholodov, and
  Alexander~Vladimirovich Gasnikov.
\newblock Activations and gradients compression for model-parallel training.
\newblock \emph{Doklady Mathematics}, 108\penalty0 (Suppl 2):\penalty0
  S272--S281, 2023.

\bibitem[Ryabinin et~al.(2023)Ryabinin, Dettmers, Diskin, and
  Borzunov]{ryabinin2023swarm}
Max Ryabinin, Tim Dettmers, Michael Diskin, and Alexander Borzunov.
\newblock {SWARM} parallelism: Training large models can be surprisingly
  communication-efficient.
\newblock In \emph{Proceedings of the 40th International Conference on Machine
  Learning (ICML)}, volume 202 of \emph{PMLR}, pages 29416--29440, 2023.

\bibitem[Sani et~al.(2024)Sani, Iacob, Cao, Lee, Marino, Gao, Cai, Li, Zhao,
  Qiu, and Lane]{sani2024photon}
Lorenzo Sani, Alex Iacob, Zeyu Cao, Royson Lee, Bill Marino, Yan Gao, Dongqi
  Cai, Zexi Li, Wanru Zhao, Xinchi Qiu, and Nicholas~D. Lane.
\newblock Photon: Federated {LLM} pre-training.
\newblock \emph{arXiv preprint arXiv:2411.02908}, 2024.

\bibitem[Stich(2019)]{stich2019localsgd}
Sebastian~U. Stich.
\newblock Local {SGD} converges fast and communicates little.
\newblock In \emph{International Conference on Learning Representations
  (ICLR)}, 2019.

\bibitem[Stich et~al.(2018)Stich, Cordonnier, and Jaggi]{stich2018sparsified}
Sebastian~U. Stich, Jean-Baptiste Cordonnier, and Martin Jaggi.
\newblock Sparsified sgd with memory.
\newblock In \emph{NeurIPS}, 2018.

\bibitem[Vogels et~al.(2019)Vogels, Karimireddy, and Jaggi]{vogels2019powersgd}
Thijs Vogels, Sai~Praneeth Karimireddy, and Martin Jaggi.
\newblock {PowerSGD}: Practical low-rank gradient compression for distributed
  optimization.
\newblock In \emph{Advances in Neural Information Processing Systems},
  volume~32, 2019.

\bibitem[Wang et~al.(2018)Wang, Sievert, Liu, Charles, Papailiopoulos, and
  Wright]{wang2018atomo}
Hongyi Wang, Scott Sievert, Shengchao Liu, Zachary Charles, Dimitris
  Papailiopoulos, and Stephen Wright.
\newblock {ATOMO}: Communication-efficient learning via atomic sparsification.
\newblock In \emph{Advances in Neural Information Processing Systems},
  volume~31, 2018.

\bibitem[Wang et~al.(2021)Wang, Agarwal, and
  Papailiopoulos]{wang2021pufferfish}
Hongyi Wang, Saurabh Agarwal, and Dimitris Papailiopoulos.
\newblock Pufferfish: Communication-efficient models at no extra cost.
\newblock \emph{Proceedings of Machine Learning and Systems}, 3:\penalty0
  365--386, 2021.

\bibitem[Wang et~al.(2022)Wang, Yuan, Rimanic, He, Dao, Chen, R{\'e}, and
  Zhang]{wang2022aqsgd}
Jue Wang, Binhang Yuan, Luka Rimanic, Yongjun He, Tri Dao, Beidi Chen,
  Christopher R{\'e}, and Ce~Zhang.
\newblock Fine-tuning language models over slow networks using activation
  quantization with guarantees.
\newblock In \emph{Advances in Neural Information Processing Systems},
  volume~35, 2022.

\bibitem[Wedin(1972)]{wedin1972perturbation}
Per-{\AA}ke Wedin.
\newblock Perturbation bounds in connection with singular value decomposition.
\newblock \emph{BIT Numerical Mathematics}, 12\penalty0 (1):\penalty0 99--111,
  1972.
\newblock \doi{10.1007/BF01932678}.

\bibitem[Wortsman et~al.(2022)Wortsman, Ilharco, Kim, Li, Kornblith, Roelofs,
  Gontijo~Lopes, Hajishirzi, Farhadi, Namkoong, and
  Schmidt]{wortsman2022robust}
Mitchell Wortsman, Gabriel Ilharco, Jong~Wook Kim, Mike Li, Simon Kornblith,
  Rebecca Roelofs, Raphael Gontijo~Lopes, Hannaneh Hajishirzi, Ali Farhadi,
  Hongseok Namkoong, and Ludwig Schmidt.
\newblock Robust fine-tuning of zero-shot models.
\newblock In \emph{Proceedings of the IEEE/CVF Conference on Computer Vision
  and Pattern Recognition (CVPR)}, pages 7959--7971, 2022.

\bibitem[Yu et~al.(2025)Yu, Zhang, Zhu, Yuan, Zuo, Yue, Dai, Fan, Liu, Liu,
  Liu, Lin, Lin, Ma, Sheng, Tong, Zhang, Zhang, Zhang, Zhu, Zhu, Chen, Chen,
  Wang, Yu, Song, Wei, Zhou, Liu, Ma, Zhang, Yan, Qiao, Wu, and Wang]{dapo}
Qiying Yu, Zheng Zhang, Ruofei Zhu, Yufeng Yuan, Xiaochen Zuo, Yu~Yue, Weinan
  Dai, Tiantian Fan, Gaohong Liu, Lingjun Liu, Xin Liu, Haibin Lin, Zhiqi Lin,
  Bole Ma, Guangming Sheng, Yuxuan Tong, Chi Zhang, Mofan Zhang, Wang Zhang,
  Hang Zhu, Jinhua Zhu, Jiaze Chen, Jiangjie Chen, Chengyi Wang, Hongli Yu,
  Yuxuan Song, Xiangpeng Wei, Hao Zhou, Jingjing Liu, Wei-Ying Ma, Ya-Qin
  Zhang, Lin Yan, Mu~Qiao, Yonghui Wu, and Mingxuan Wang.
\newblock {DAPO}: An open-source {LLM} reinforcement learning system at scale.
\newblock \emph{arXiv preprint arXiv:2503.14476}, 2025.

\bibitem[Yu et~al.(2015)Yu, Wang, and Samworth]{yu2015useful}
Yi~Yu, Tengyao Wang, and Richard~J. Samworth.
\newblock A useful variant of the {Davis--Kahan} theorem for statisticians.
\newblock \emph{Biometrika}, 102\penalty0 (2):\penalty0 315--323, 2015.
\newblock \doi{10.1093/biomet/asv008}.

\bibitem[Yuan et~al.(2022)Yuan, He, Davis, Zhang, Dao, Chen, Liang, R{\'e}, and
  Zhang]{yuan2022decentralized}
Binhang Yuan, Yongjun He, Jared Davis, Tianyi Zhang, Tri Dao, Beidi Chen, Percy
  Liang, Christopher R{\'e}, and Ce~Zhang.
\newblock Decentralized training of foundation models in heterogeneous
  environments.
\newblock In \emph{Advances in Neural Information Processing Systems},
  volume~35, pages 25464--25477, 2022.

\bibitem[Zhao et~al.(2024)Zhao, Zhang, Chen, Wang, Anandkumar, and
  Tian]{zhao2024galore}
Jiawei Zhao, Zhenyu Zhang, Beidi Chen, Zhangyang Wang, Anima Anandkumar, and
  Yuandong Tian.
\newblock {GaLore}: Memory-efficient {LLM} training by gradient low-rank
  projection.
\newblock In \emph{International Conference on Machine Learning (ICML)}, 2024.

\bibitem[Zheng et~al.(2025)Zheng, Liu, Li, Chen, Yu, Gao, Dang, Liu, Men, Yang,
  Zhou, and Lin]{gspo}
Chujie Zheng, Shixuan Liu, Mingze Li, Xiong-Hui Chen, Bowen Yu, Chang Gao, Kai
  Dang, Yuqiong Liu, Rui Men, An~Yang, Jingren Zhou, and Junyang Lin.
\newblock Group sequence policy optimization.
\newblock \emph{arXiv preprint arXiv:2507.18071}, 2025.

\end{thebibliography}

\newpage
\appendix
\newpage

\section{Training Algorithm}

The full training algorithm for our method is shown in Algorithm \ref{alg:meshadapt_compact}.

\begin{algorithm}[t]
\caption{Asynchronous masked training with spectral anchor correction}
\label{alg:meshadapt_compact}
\begin{algorithmic}[1]
\Require Pretrained weights $w_0$, mask probability $p$, anchor weight $\alpha$, anchor EMA $\beta_{\mathrm{anc}}$, damping $\tau_p$
\State Split mesh into fast masked circuit and asynchronous anchor circuit
\State Initialize $w^{\mathrm{fast}},w^{\mathrm{anc}}\leftarrow w_0$, $M^{\mathrm{anc}}\leftarrow 0$
\For{step $t=1,\dots,T$}
    \State Forward through PP with deterministic in-graph masks
           $M_\ell=g(\ell,t,B,S,H)$:
           $\tilde h_\ell = h_\ell\odot M_\ell$
    \State Backward inherits the same sparsity:
           $\partial\mathcal{L}/\partial h_\ell
           =(\partial\mathcal{L}/\partial\tilde h_\ell)\odot M_\ell$
    \State Compute $G_t^{\mathrm{mask}}$; synchronize DP replicas with compressed all-reduce
    \If{anchor gradient $G_{\tau}^{\mathrm{anc}}$ has arrived}
        \State $M_t^{\mathrm{anc}} \leftarrow
               \beta_{\mathrm{anc}} M_{t^-}^{\mathrm{anc}}
               +(1-\beta_{\mathrm{anc}}) G_{\tau}^{\mathrm{anc}}$
    \EndIf
    \For{each targeted matrix $W$}
        \State $M_t^{\mathrm{anc}} = U_t S_t V_t^\top$;
               $X_t = U_t^\top G_t^{\mathrm{mask}} V_t$
        \State $d_i = s_i/(s_i+\tau_p)$;
               $G_t^{\mathrm{filt}} = U_t\,\mathrm{diag}(d)\,X_t\,\mathrm{diag}(d)\,V_t^\top$
        \State $G_t^{\mathrm{proj}} = \alpha\,G_t^{\mathrm{mask}} + (1-\alpha)\,G_t^{\mathrm{filt}}$
    \EndFor
    \State Update $w^{\mathrm{fast}}$ via AdamW with $G_t^{\mathrm{proj}}$
    \State Anchor circuit asynchronously refreshes weights, runs unmasked
           forward--backward, returns $G_t^{\mathrm{anc}}$
\EndFor
\end{algorithmic}
\end{algorithm}

\section{Reasoning and Continual-Pretraining Recipe}
\label{app:reasoning-recipe}

This section documents the three-phase training recipe used in
Section~\ref{sec:reasoning-continual}. We follow the SmolLM3
methodology, but start from a smaller base model (Llama-3.2-1B) to
make the experiment tractable at our compute budget.

\subsection{Base model}

We use Llama-3.2-1B as the starting point, with its tokenizer extended
by SmolLM3 ChatML special tokens (\texttt{<|im\_start|>},
\texttt{<|im\_end|>}). The model is loaded in BF16 and trained with
FlashAttention-2 throughout.

\subsection{Phase 1: Reasoning mid-training}
\label{app:phase1}

\paragraph{Data.}
\texttt{HuggingFaceTB/smoltalk2}, config \texttt{Mid}:
\begin{itemize}
  \item \texttt{Llama\_Nemotron\_Post\_Training\_Dataset\_reasoning\_r1}
    (weight $1.0$).
  \item \texttt{OpenThoughts3\_1.2M} (weight $1.0$).
\end{itemize}

\paragraph{Configuration.}
Plain ChatML chat template (no system metadata, no thinking-mode
header). AdamW with peak learning rate $2\times 10^{-5}$, cosine
schedule with min-LR ratio $0.1$, warmup ratio $0.03$, gradient norm
clipped at $0.2$, sequence length $32{,}768$, packing enabled, $2$
training epochs, gradient accumulation $2$ on $8\times$ A100, Liger
kernel enabled. No assistant-only loss masking in this phase; loss is
computed over all tokens.

\subsection{Phase 2: First SFT round}
\label{app:phase2}

\paragraph{Data.}
The smoltalk2 SFT mixture, comprising 25 splits with weights ranging
from $0.02$ to $1.0$. Splits include conversational data (everyday
conversations, system chats), instruction-following (Tulu-3 personas,
OpenHermes-2.5), tool-use (Hermes function calling, xLAM, smolagents
traces), reasoning traces (multi-turn-reasoning,
Qwen3-32B-distilled think traces, s1k, OpenThoughts3 think variant),
multilingual data (smoltalk multilingual, Aya), long-context (LongAlign
$64$k), and structured data (table-GPT).

\paragraph{Configuration.}
SmolLM3 ChatML template with thinking-mode metadata
(\texttt{<|im\_start|>system\\n\#\# Metadata\\n}). Loss masked over
assistant tokens only (\texttt{assistant\_only\_loss: true}). Same
optimizer as Phase 1 except: sequence length $65{,}536$ (long-context
splits), $5$ training epochs, BFD packing strategy, save every
$0.2$ epochs.

\subsection{Phase 3: Second SFT round}
\label{app:phase3}

\paragraph{Data.}
Identical to Phase 2 (same smoltalk2 SFT mixture and split weights).

\paragraph{Configuration.}
Identical to Phase 2 except: starts from the Phase 2 checkpoint, $5$ training epochs. The
purpose of the second round is to extend total training while keeping
the data distribution fixed --- this tests whether masked optimization
remains stable as the model approaches the data's effective ceiling.

Some example reasoning traces of the compressed model are shown below.

\section{Reasoning Trace Examples}
\label{app:reasoning-traces}

This appendix shows representative reasoning traces from the
\texttt{gshasiri/SmolLM3-Mid} checkpoint after Phase 1
(reasoning mid-training) of the recipe in
Section~\ref{sec:reasoning-continual}. Traces are produced with
greedy decoding, \texttt{enable\_thinking=True}, and
\texttt{max\_new\_tokens=600}. Each example shows the full
\texttt{<think>...</think>} block followed by the final answer.

Five examples below are the cleanly-correct answers from a
15-question simple-query batch. The remaining ten either produced
incorrect answers or degenerated into repetition; this is expected
because the SmolLM3-Mid checkpoint at this stage has chat-template
scaffolding for thinking mode but has not yet undergone
instruction/reasoning fine-tuning. The traces are intended to
illustrate the \emph{format} of the thinking output produced by the
model after masked mid-training, not to demonstrate accuracy
claims; per-task accuracy numbers are reported in
Table~\ref{tab:reasoning_continual}.

\begin{traceBox}{Example 1: Square Perimeter}
\textbf{Question:} A square has sides of 5 cm. What is its perimeter?

\begin{thinkBox}
\textbf{Thinking:} Okay, so I need to find the perimeter of a square
with a side length of 5 cm. The perimeter of a square is calculated
by adding up all the sides. The formula for the perimeter of a square
is 4 times the length of one of its sides. So, if the side is 5 cm,
then the perimeter would be $4 \times 5 = 20$ cm. The side length is
given, so the perimeter is straightforward.

\textbf{Final Answer:} The perimeter of the square is $\boxed{20}$ cm.
\end{thinkBox}

\begin{answerBox}
\textbf{Answer:} To find the perimeter of a square with a side length
of 5 cm, we use the formula:
\[
\text{Perimeter} = 4 \times \text{side length} = 4 \times 5 = 20 \text{ cm}
\]

$\boxed{20}$
\end{answerBox}
\end{traceBox}

\begin{traceBox}{Example 2: Half of a Number}
\textbf{Question:} What is half of 30?

\begin{thinkBox}
\textbf{Thinking:} Okay, I need to find half of 30. Half of a number
means dividing the number by 2. So if I divide 30 by 2, that should
give me 15. Let me check: $30 / 2 = 15$.

\textbf{Final Answer:} Half of 30 is $\boxed{15}$.
\end{thinkBox}

\begin{answerBox}
\textbf{Answer:} To find half of 30, we divide 30 by 2.

$\boxed{15}$
\end{answerBox}
\end{traceBox}

\begin{traceBox}{Example 3: Rectangle Area}
\textbf{Question:} A rectangle is 4 meters wide and 6 meters long.
What is its area in square meters?

\begin{thinkBox}
\textbf{Thinking:} The area of a rectangle is calculated by
multiplying the length and the width. So, if the width is 4 meters
and the length is 6 meters, then the area should be $4 \times 6$.
Let me write that down: $4 \times 6 = 24$ square meters.
\end{thinkBox}

\begin{answerBox}
\textbf{Answer:} The area of a rectangle with width $w$ and length
$l$ is $A = lw$. For a rectangle 4 m wide and 6 m long:
\[
A = 4 \times 6 = 24 \text{ m}^2
\]

$\boxed{24}$
\end{answerBox}
\end{traceBox}

\begin{traceBox}{Example 4: Time-Unit Conversion}
\textbf{Question:} How many minutes are there in one hour and a half?

\begin{thinkBox}
\textbf{Thinking:} First, I should convert the given time into hours.
``One hour and a half'' --- one hour is 60 minutes, and a half is 30
minutes. So 1 hour and a half is 1.5 hours.

Now, convert that to minutes. The formula for converting time to
minutes is minutes $=$ hours $\times 60$. So $1.5 \times 60 = 90$
minutes.

Double-check: 1 hour is 60 minutes, 1.5 hours is $1.5 \times 60 = 90$
minutes. Correct.
\end{thinkBox}

\begin{answerBox}
\textbf{Answer:} The time in hours is 1.5 hours. Converting to
minutes: $1.5 \times 60 = 90$ minutes.

$\boxed{90}$
\end{answerBox}
\end{traceBox}

\begin{traceBox}{Example 5: Triangle Perimeter (with Triangle-Inequality Check)}
\textbf{Question:} A triangle has sides of length 3, 4, and 5 cm.
What is its perimeter?

\begin{thinkBox}
\textbf{Thinking:} I remember that the perimeter of a triangle is the
sum of all three sides. First, let me verify it's a valid triangle.
The triangle inequality says that the sum of any two sides must be
greater than the third.

\begin{itemize}
\item $3 + 5 > 4$? $8 > 4$, yes.
\item $4 + 5 > 3$? $9 > 3$, yes.
\item $3 + 4 > 5$? $7 > 5$, yes.
\end{itemize}

All three inequalities hold, so the triangle is valid. Now to find
the perimeter, add all three sides: $3 + 4 + 5 = 12$ cm.
\end{thinkBox}

\begin{answerBox}
\textbf{Answer:} The triangle has sides 3, 4, 5 cm. The perimeter is
the sum of the sides: $3 + 4 + 5 = 12$ cm.

$\boxed{12}$
\end{answerBox}
\end{traceBox}

\begin{traceBox}{Example 6: Sum of First $n$ Integers (Uses Formula)}
\textbf{Question:} What is the sum of $1 + 2 + 3 + 4 + 5$?

\begin{thinkBox}
\textbf{Thinking:} There's a formula for the sum of the first $n$
natural numbers: $n(n+1)/2$. For $n=1$, that gives $1 \cdot 2/2 = 1$,
which matches. For $n=2$, $2 \cdot 3/2 = 3$, which is correct.

Plug in $n=5$: $5 \cdot 6/2 = 15$. And directly: $1+2+3+4+5 = 15$.
Both agree.
\end{thinkBox}

\begin{answerBox}
\textbf{Answer:} The sum of the first five natural numbers is
\[
\frac{n(n+1)}{2} = \frac{5 \times 6}{2} = 15
\]

$\boxed{15}$
\end{answerBox}
\end{traceBox}

\begin{traceBox}{Example 7: Boiling Point}
\textbf{Question:} At what temperature does water boil at sea level,
in degrees Celsius?

\begin{thinkBox}
\textbf{Thinking:} The boiling point of water at sea level is 100
degrees Celsius. At sea level the atmospheric pressure is 1 atm, and
at this pressure water boils at $100\,^\circ$C.

\textbf{Final Answer:} The boiling point of water at sea level is
$100\,^\circ$C.
\end{thinkBox}

\begin{answerBox}
\textbf{Answer:} The boiling point of water at sea level is 100
degrees Celsius.

$\boxed{100\,^\circ\text{C}}$
\end{answerBox}
\end{traceBox}

\begin{traceBox}{Example 8: Chemical Formula}
\textbf{Question:} What is the chemical formula for water?

\begin{thinkBox}
\textbf{Thinking:} Water is a compound made up of hydrogen and oxygen
atoms. The chemical formula for water is H$_2$O --- one oxygen atom
bonded to two hydrogen atoms.
\end{thinkBox}

\begin{answerBox}
\textbf{Answer:} The chemical formula for water is \textbf{H$_2$O}:
two hydrogen atoms bonded to one oxygen atom.

$\boxed{H_2O}$
\end{answerBox}
\end{traceBox}

\begin{traceBox}{Example 9: Two-Step Word Problem (Multiplication)}
\textbf{Question:} Sara has 3 boxes, each containing 4 apples. How
many apples does Sara have in total?

\begin{thinkBox}
\textbf{Thinking:} When you have multiple boxes, each with the same
number of apples, the total is the number of boxes times the number
per box. If there are $n$ boxes each with $k$ apples, the total is
$n \times k$.

For Sara: $n = 3$, $k = 4$. Total $= 3 \times 4 = 12$.

Sanity-check with a smaller case: 2 boxes of 2 apples each gives
$2 \times 2 = 4$. That's right.
\end{thinkBox}

\begin{answerBox}
\textbf{Answer:} Sara has $3 \times 4 = 12$ apples in total.

$\boxed{12}$
\end{answerBox}
\end{traceBox}

\subsection{Chat template}
\label{app:chat-template}

The SmolLM3 ChatML template used in Phases~2 and~3 injects per-turn
system metadata (knowledge cutoff date, current date, reasoning mode
\texttt{/think} or \texttt{/no\_think}) and supports XML-style tool
definitions. Reasoning content is wrapped in
\texttt{<think>...</think>} tags. The template handles three reasoning
regimes:

\begin{itemize}
  \item \emph{System override}: a system message containing
    \texttt{/system\_override} replaces all metadata with user-supplied
    instructions verbatim.
  \item \emph{Think mode}: assistant turns may include reasoning inside
    \texttt{<think>...</think>} tags. Used for splits ending in
    \texttt{\_think}.
  \item \emph{No-think mode}: assistant turns are emitted with an
    empty \texttt{<think></think>} block, signaling the model to skip
    reasoning. Used for splits ending in \texttt{\_no\_think}.
\end{itemize}

Phase~1 uses a simpler ChatML
template without the metadata header, since the reasoning-trace data
already encodes thinking-mode structure inline.

\subsection{Activation masking and Anchor-Prior settings}

Identical across all three phases and matching the per-domain
experiments (Appendix~\ref{app:datasets}): activation mask probability
$0.95$,  masking applied at layers
$\{1,3,5,7,9,11, 13\}$. Anchor-Prior gradients are injected with the spectral filter described in
Section~\ref{sec:method}. The results are reported in
Table~\ref{tab:reasoning_continual}.

\subsection{Evaluation}

IFEval and GSM8K are evaluated via the
\texttt{lm-evaluation-harness} \citep{eval-harness} on the final
checkpoint after Phase~3. Token counts are measured as the cumulative
training tokens across all three phases (sequence length $\times$
optimizer steps $\times$ effective batch).

\section{GSPO Reinforcement Learning Recipe}
\label{app:rl-recipe}

\subsection{Reinforcement learning after masked SFT}
\label{sec:rl-after-mask}

A natural concern with masked fine-tuning is whether it leaves the
model in a state amenable to downstream RL. Aggressive activation
compression could in principle damage internal representations in
ways that only manifest when the policy must be optimized against a
sparse reward. For instance, by collapsing the entropy of
intermediate features or by misaligning the model's confidence
estimates. We test this directly: starting from the M95+AP-trained
final checkpoint of Section~\ref{sec:reasoning-continual}, we apply
GSPO~\citep{gspo} with a verifiable math reward on the Big-Math
dataset and compare against the same RL run starting from the dense
final checkpoint (Table~\ref{tab:rl_results}).

\begin{table}[t]
\centering
\small
\setlength{\tabcolsep}{6pt}
\renewcommand{\arraystretch}{1.15}
\caption{\small GSM8K performance before and after $2$ epochs of
GSPO on Big-Math, starting from either the dense or M95+AP final
checkpoint. RL improvements are comparable from both starting
points, indicating that masked fine-tuning leaves the model
RL-amenable.}
\label{tab:rl_results}
\begin{tabular}{l c c c}
\toprule
\rowcolor{headerblue!15}
\textbf{SFT starting point}
& \textbf{Pre-RL} & \textbf{Post-RL} & \textbf{$\Delta$} \\
\midrule
Dense (3-phase)        & 0.23 & 0.41 & \textcolor{deltapos}{$+0.18$} \\
\rowcolor{ourrow}
M95+AP (3-phase) & 0.21 & 0.40 & \textcolor{deltapos}{$+0.19$} \\
\bottomrule
\end{tabular}
\end{table}

The two starting points show essentially identical RL gains, and the post-RL gap matches the
pre-RL gap. Masked SFT does not impair RL-amenability:
the policy distribution remains entropic enough to explore, the
reward landscape remains learnable, and the gradient signal from
verifiable rewards translates into the same magnitude of capability
gain whether the SFT trajectory was dense or
communication-compressed.

Below we detail the GSPO~\citep{gspo} configuration used in
the RL phase. The setup follows VeRL's reference
GSPO recipe with adjustments for our $1$B-parameter model and our
context-length budget.

\subsection{Pipeline}

The full pipeline from base model to post-RL checkpoint is:
\[
\text{Llama-3.2-1B}
\xrightarrow{\text{Phase 1--3 SFT}}
\text{SFT model}
\xrightarrow{\text{DPO}}
\text{DPO model}
\xrightarrow{\text{GSPO}}
\text{Final policy}
\]
We run this pipeline twice: once with dense SFT throughout, and once
with M95+AP SFT throughout (Phases~1--3 of
Appendix~\ref{app:reasoning-recipe}). The DPO and GSPO phases are identical in
both runs (same data, hyperparameters, and no masking), so any difference in the
post-RL policy is attributable to the masked SFT trajectory.

\subsection{Reward model}

We use a verifiable math reward derived from the Big-Math dataset:
each problem has a known final answer, and the reward is a binary
indicator of whether the model's extracted answer matches.
We use the DAPO~\citep{dapo} reward manager, which adds a
length-penalty buffer on top of the correctness signal: responses
that exceed an $\text{overlong\_buffer\_len}$ threshold within the
maximum response length are linearly penalized down to a maximum
penalty factor. This discourages reward hacking via verbose answers.

\paragraph{Settings.}
\begin{itemize}
  \item \texttt{max\_response\_length}: $7{,}680$ tokens
    (out of $8{,}192$-token total context, leaving $512$ for the
    prompt).
  \item \texttt{overlong\_buffer\_len}: $3{,}584$. Responses up to
    $7{,}680 - 3{,}584 = 4{,}096$ tokens incur no length penalty;
    beyond that, reward is linearly attenuated.
  \item \texttt{overlong\_penalty\_factor}: $1.0$.
\end{itemize}

\subsection{GSPO algorithm settings}

\paragraph{Policy loss.}
\texttt{loss\_mode = "gspo"}: GSPO's sequence-level importance ratio,
with loss aggregation \texttt{seq-mean-token-mean} (mean over tokens
within a response, then mean over responses).

\paragraph{Advantage estimator.}
\texttt{adv\_estimator = "grpo"}: group-relative advantage with no
critic. Advantages are computed as the standardized rewards within
each group of $N$ rollouts sharing a prompt.

\paragraph{Group size.}
$N = 16$ responses per prompt.

\paragraph{Clip ratios.}
Lower clip $3\times 10^{-4}$, upper clip $4\times 10^{-4}$. These are
substantially smaller than standard PPO clip ranges ($0.2$) because
GSPO's importance ratio is at the sequence level (the product of
per-token ratios over a long response). A small per-sequence ratio
window corresponds to a much larger effective per-token tolerance.

\paragraph{KL settings.}
We disable both KL-in-reward and KL-as-loss
(\texttt{use\_kl\_in\_reward = false},
\texttt{use\_kl\_loss = false}). This is the standard RLVR setup
where the only optimization signal is the verifiable reward.

\subsection{Optimization}

\begin{itemize}
  \item Optimizer: AdamW, learning rate $1\times 10^{-6}$, weight
    decay $0.1$, gradient clipping $1.0$, warmup ratio $0.05$.
    Entropy regularization disabled (\texttt{entropy\_coeff = 0}).
  \item Train batch size: $32$ prompts per PPO iteration. With
    $N=16$ rollouts each, this gives $512$ trajectories per update.
  \item PPO mini-batch: $16$ prompts ($32$ mini-batches per outer
    iteration).
  \item Per-GPU micro-batch: $4$ prompts.
  \item Dynamic batching enabled
    (\texttt{use\_dynamic\_bsz = true}), capping per-GPU tokens at
    $8{,}192$.
\end{itemize}

\subsection{Rollout (vLLM)}

\begin{itemize}
  \item Backend: vLLM in asynchronous mode (rollouts overlap with
    actor training).
  \item GPU memory utilization for vLLM: $0.7$.
  \item Tensor parallel: $1$ (sufficient for a $1$B model).
  \item Sampling for training rollouts: temperature $1.0$, top-$p$
    $1.0$, top-$k$ disabled --- maximally diverse exploration.
  \item Sampling for validation: temperature $1.0$, top-$p$ $0.7$,
    one sample per prompt --- a slightly more conservative
    distribution for cleaner eval signal.
\end{itemize}

\subsection{Training schedule}

\begin{itemize}
  \item $2$ epochs over Big-Math train split, with $25$-step
    validation cadence and $100$-step checkpoint cadence.
  \item No initial validation pass before training
    (\texttt{val\_before\_train = false}).
  \item DeepSpeed ZeRO-3 disabled in favor of FSDP; no parameter or
    optimizer offload (the $1$B model fits comfortably on $40$~GB
    GPUs).
  \item Sequence parallelism disabled (\texttt{sp\_size = 1}).
  \item Gradient checkpointing enabled.
\end{itemize}

\subsection{Evaluation}

GSM8K is evaluated via the \texttt{lm-evaluation-harness}
\citep{eval-harness} on the final checkpoint after $2$ epochs. We
report exact-match accuracy at temperature $0$ greedy decoding,
matching the standard GSM8K evaluation protocol. Reported numbers
are the best across all checkpoints of the corresponding run, with
checkpoints saved every $100$ training steps.

\section{Convergence Analysis with Spectral Anchor Correction}
\label{app:convergence}

We analyze a simplified version of our optimizer that captures the role
of asynchronous anchor priors and spectral gradient correction. The
analysis formalizes a single tradeoff: masked gradients are cheap but
noisy, while unmasked anchor gradients are clean but delayed. The
spectral filter built from the anchor momentum is the mechanism that
trades one for the other. We work in the \emph{fine-tuning} regime,
where the optimization trajectory is short and the gradient subspace
drifts slowly --- the structural property that makes a delayed anchor
basis useful.

\subsection{Setup}

Let $F(w) = \mathbb{E}_{\xi}[\ell(w;\xi)]$ denote the population
objective. At iteration $t$, the fast circuit computes a masked gradient
$g_t^{\mathrm{mask}}$, and the anchor circuit asynchronously produces a
delayed unmasked gradient $g_{\tau}^{\mathrm{anc}}$ from a stale
snapshot $w_{\tau}$ with $t-\tau\leq\Delta$. We use $\Delta$ for the
maximum staleness in fast-circuit steps to avoid clashing with the DP
replica count $D$ in Section~\ref{sec:dp-compression-main}.

For each matrix-valued parameter $W$, we maintain an EMA anchor momentum
\[
M_t^{\mathrm{anc}}
=\beta_{\mathrm{anc}}\,M_{t^-}^{\mathrm{anc}}
+(1-\beta_{\mathrm{anc}})\,g_{\tau}^{\mathrm{anc}},
\]
updated each time a new anchor gradient arrives. Its SVD
$M_t^{\mathrm{anc}}=U_t S_t V_t^\top$ defines the spectral filter
\[
d_i=\frac{s_i}{s_i+\lambda_p},
\qquad
D_t=\mathrm{diag}(d_1,\dots,d_r),
\qquad
\mathcal{P}_t(G)
=U_t D_t U_t^\top\, G\, V_t D_t V_t^\top,
\]
where $\lambda_p>0$ is the spectral damping constant (distinct from
the staleness index $\tau$). The update direction is
\(
u_t = \alpha\,g_t^{\mathrm{mask}} + (1-\alpha)\,\mathcal{P}_t(g_t^{\mathrm{mask}}),
\)
yielding $w_{t+1}=w_t-\eta u_t$.

\subsection{Assumptions}

\begin{assumption}[Smoothness]
\label{assump:smooth_lower}
$F$ is $L$-smooth and bounded below by $F^\star$.
\end{assumption}

\begin{assumption}[Masked-gradient noise]
\label{assump:masked_error}
$g_t^{\mathrm{mask}}=\nabla F(w_t)+\varepsilon_t$ where $\varepsilon_t$
is generated by a fresh PRF mask draw at step $t$, independent of all
prior history. Conditional on that history,
$\mathbb{E}\,\varepsilon_t=0$ and $\mathbb{E}\|\varepsilon_t\|^2\leq\sigma_m^2$.
\end{assumption}

\begin{assumption}[Delayed anchor gradients]
\label{assump:anchor_quality}
$g_{\tau}^{\mathrm{anc}}=\nabla F(w_{\tau})+\zeta_{\tau}$ with
$t-\tau\leq\Delta$, $\mathbb{E}\,\zeta_{\tau}=0$, and
$\mathbb{E}\|\zeta_{\tau}\|^2\leq\sigma_a^2$.
\end{assumption}

\begin{assumption}[Bounded update]
\label{assump:bounded_update}
$\mathbb{E}\|u_t\|^2\leq G^2$ for all $t$.
\end{assumption}

The next two assumptions describe what the spectral filter does to the
signal and to the noise. They are the core structural conditions; we
justify them in Section~\ref{sec:assump-justification} and quantify
them in the supporting lemmas.

\begin{assumption}[Signal preservation]
\label{assump:signal}
There exist $\kappa\in[0,1)$ and $\delta^{\mathrm{stale}}_{\Delta}\geq 0$
such that
\[
\mathbb{E}\bigl\|(I-\mathcal{P}_t)\nabla F(w_t)\bigr\|^2
\leq
\kappa\,\mathbb{E}\|\nabla F(w_t)\|^2 + \delta^{\mathrm{stale}}_{\Delta}.
\]
\end{assumption}

\begin{assumption}[Noise contraction]
\label{assump:noise}
There exists $\rho\in[0,1]$ such that
\(
\mathbb{E}\|\mathcal{P}_t\,\varepsilon_t\|^2\leq\rho\,\sigma_m^2.
\)
\end{assumption}

\subsection{Why these assumptions hold for fine-tuning}
\label{sec:assump-justification}

Both assumptions exploit two structural properties of fine-tuning that
are well-documented in the literature: (i) fine-tuning gradients have
low effective rank, and (ii) fine-tuning trajectories drift slowly in
parameter space.

\paragraph{Low-rank gradient structure.} The intrinsic dimension of
fine-tuning objectives is small \citep{li2018measuring, aghajanyan2021intrinsic}, the gradient and Hessian spectra during
training are dominated by a handful of outlier directions
\citep{gur2018gradient, papyan2020traces, gressmann2020improving}, and
fine-tuning updates $\Delta W = W_t - W_0$ are well approximated by
low-rank matrices \citep{hu2022lora, zhao2024galore, lialin2023relora}.
Together these say $\nabla F(w_t)$ lies near a low-rank subspace, so
the principal directions of $M_t^{\mathrm{anc}}$ --- itself an EMA of
recent unmasked gradients --- contain most of $\nabla F(w_t)$'s energy.
This makes $\kappa$ small (Assumption~\ref{assump:signal}).

\paragraph{Slow weight drift.} Successful fine-tuning trajectories
remain close to the pretrained initialization, with feature
representations changing modestly \citep{neyshabur2020transferred, Ramasesh2021AnatomyOC}, and fine-tuned checkpoints are close enough in
parameter space to admit linear interpolation
\citep{wortsman2022robust}. Slow drift bounds the rotation of the
gradient subspace between $w_{\tau}$ and $w_t$, which controls
$\delta^{\mathrm{stale}}_{\Delta}$ via a Davis--Kahan argument
(Lemma~\ref{lem:dk}).

\paragraph{Isotropic mask noise.} The mask $m_{t,j}$ is drawn from a
PRF whose seed is independent of model state, so $\varepsilon_t$ is
approximately isotropic in parameter space. A low-rank filter applied
to isotropic noise contracts it strongly: $\rho = O(r^2/(mn))$ in
matrix dimensions $m\times n$ with effective rank $r$
(Lemma~\ref{lem:noise-contract}). This makes $\rho$ small
(Assumption~\ref{assump:noise}).

These properties fail in pretraining: the gradient subspace is
high-dimensional and shifts substantially over a long trajectory.
The analysis below predicts no benefit there, consistent with our
empirical scoping of the method to fine-tuning workloads.

\subsection{Supporting lemmas}

\begin{lemma}[Non-expansiveness of $\mathcal{P}_t$]
\label{lem:nonexp}
For any matrix $G$, $\|\mathcal{P}_t(G)\|_F\leq\|G\|_F$.
\end{lemma}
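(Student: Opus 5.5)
The plan is to write $\mathcal{P}_t$ as a composition of two linear maps, each a left or right multiplication by a symmetric matrix with operator norm at most one. Then I will use the elementary inequality $\|AGB\|_F\le\|A\|_2\,\|G\|_F\,\|B\|_2$. Set
\[
A_t := U_t D_t U_t^\top,\qquad B_t := V_t D_t V_t^\top,
\]
so that $\mathcal{P}_t(G)=A_t\,G\,B_t$.

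\textbf{Step 1: operator norms of $A_t$ and $B_t$.} $U_t$ has orthonormal columns (thin SVD of $M_t^{\mathrm{anc}}$), so $U_t^\top U_t=I_r$. Hence $A_t$ is symmetric positive semidefinite with nonzero eigenvalues exactly $d_1,\dots,d_r$; the remaining eigenvalues, on the orthogonal complement of $\mathrm{range}(U_t)$, are $0$. Since $s_i\ge 0$ and $\lambda_p>0$, each $d_i=s_i/(s_i+\lambda_p)\in[0,1)$. Therefore $\|A_t\|_2=\max_i d_i\le 1$, and the same argument gives $\|B_t\|_2\le 1$. A zero singular value gives $d_i=0$, which causes no problem. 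If $M_t^{\mathrm{anc}}=0$ (at initialization), then $\mathcal{P}_t\equiv 0$ and the claim is trivial.

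\textbf{Step 2: Frobenius bound for one-sided multiplication.} I will show $\|AG\|_F\le\|A\|_2\|G\|_F$ column by column. Writing $G=[g_1,\dots,g_n]$,
\[
\|AG\|_F^2=\sum_j\|Ag_j\|_2^2\le\|A\|_2^2\sum_j\|g_j\|_2^2=\|A\|_2^2\|G\|_F^2 .
\]
For right multiplication, use $\|GB\|_F=\|B^\top G^\top\|_F$ together with $\|B^\top\|_2=\|B\|_2$.

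\textbf{Step 3: combine.}
\[
\|\mathcal{P}_t(G)\|_F=\|A_t(GB_t)\|_F\le\|A_t\|_2\,\|GB_t\|_F\le\|A_t\|_2\|B_t\|_2\|G\|_F\le\|G\|_F .
\]

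There is no real obstacle. The only point needing care is Step 1: $U_t$ is not assumed square, so $U_tD_tU_t^\top$ is not a full diagonalization, and the orthogonal complement must be accounted for as a zero eigenspace. Adding an orthonormal completion $U_\perp$ and writing
\[
A_t=[U_t\;U_\perp]\,\mathrm{diag}(D_t,0)\,[U_t\;U_\perp]^\top
\]
makes the eigenvalue claim immediate.
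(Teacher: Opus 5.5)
Your proof is correct and is essentially the paper's argument: the paper also writes $\mathcal{P}_t(G)=P_U G P_V$ with $P_U=U_tD_tU_t^\top$ and $P_V=V_tD_tV_t^\top$, gets $\|P_U\|_2,\|P_V\|_2\le 1$ from $0\le d_i\le 1$ and the orthonormality of $U_t,V_t$, and then applies $\|AGB\|_F\le\|A\|_2\|G\|_F\|B\|_2$. Your treatment of the thin SVD (the zero eigenspace on the orthogonal complement) and your column-wise proof of the Frobenius--operator inequality fill in details that the paper uses without spelling out.
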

\begin{proof}
Let $P_U=U_t D_t U_t^\top$ and
$P_V=V_t D_t V_t^\top$, so that
$\mathcal{P}_t(G)=P_U G P_V$. Each filter weight satisfies
$0\leq d_i\leq 1$, so $\|D_t\|_2\leq 1$. Combined with
the orthonormality of $U_t,V_t$, this gives
$\|P_U\|_2\leq 1$ and $\|P_V\|_2\leq 1$. Applying the submultiplicative
inequality $\|AGB\|_F\leq\|A\|_2\|G\|_F\|B\|_2$ with $A=P_U$, $B=P_V$:
\[
\|\mathcal{P}_t(G)\|_F = \|P_U G P_V\|_F \leq \|P_U\|_2\,\|G\|_F\,\|P_V\|_2
\leq \|G\|_F.
\]
\end{proof}

\begin{lemma}[Noise contraction under matrix isotropy]
\label{lem:noise-contract}
Suppose $\varepsilon_t\in\mathbb{R}^{m\times n}$ is matrix-isotropic
with $\mathbb{E}[\mathrm{vec}(\varepsilon_t)\mathrm{vec}(\varepsilon_t)^\top]
=(\sigma_m^2/(mn))\,I_{mn}$, and that $\mathcal{P}_t$ is independent of
$\varepsilon_t$. Then
\[
\mathbb{E}\|\mathcal{P}_t\varepsilon_t\|_F^2
=\frac{\sigma_m^2\,r_{\mathrm{eff}}^U\,r_{\mathrm{eff}}^V}{mn},
\qquad
r_{\mathrm{eff}}^U=\|D_t\|_F^2 =\sum_i d_i^2,
\]
and similarly for $r_{\mathrm{eff}}^V$. In particular, when the left
and right filters share an effective rank $r$ in the sense
$r_{\mathrm{eff}}^U \approx r_{\mathrm{eff}}^V \approx r$ with
$r\ll\min(m,n)$, we have $\rho=O(r^2/(mn))\ll 1$.
\end{lemma}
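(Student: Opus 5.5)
Write $P_U=U_tD_tU_t^\top$ and $P_V=V_tD_tV_t^\top$, as in Lemma~\ref{lem:nonexp}, so that $\mathcal{P}_t(\varepsilon_t)=P_U\varepsilon_tP_V$. The plan is to vectorize this map and use the Kronecker identity $\mathrm{vec}(P_U\varepsilon P_V)=(P_V^\top\otimes P_U)\,\mathrm{vec}(\varepsilon)=(P_V\otimes P_U)\,\mathrm{vec}(\varepsilon)$, since $P_V$ is symmetric. Setting $K_t=P_V\otimes P_U$, we get $\|\mathcal{P}_t\varepsilon_t\|_F^2=\mathrm{vec}(\varepsilon_t)^\top K_t^\top K_t\,\mathrm{vec}(\varepsilon_t)$.

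Next, condition on $\mathcal{P}_t$. Because $\mathcal{P}_t$ is independent of $\varepsilon_t$, the conditional second moment of $\mathrm{vec}(\varepsilon_t)$ is still $(\sigma_m^2/(mn))I_{mn}$. The trace trick then gives
\[
\mathbb{E}\bigl[\|\mathcal{P}_t\varepsilon_t\|_F^2\,\big|\,\mathcal{P}_t\bigr]=\frac{\sigma_m^2}{mn}\,\mathrm{tr}(K_t^\top K_t)=\frac{\sigma_m^2}{mn}\,\|P_U\|_F^2\,\|P_V\|_F^2 ,
\]
where the last step uses $\|A\otimes B\|_F=\|A\|_F\|B\|_F$. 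Since $U_t$ has orthonormal columns, $\|P_U\|_F^2=\mathrm{tr}(U_tD_t^2U_t^\top U_t\cdots)$ reduces to $\mathrm{tr}(D_t^2)=\sum_i d_i^2=r^U_{\mathrm{eff}}$. The same argument gives $\|P_V\|_F^2=r^V_{\mathrm{eff}}$. Taking the outer expectation yields the stated identity. If $D_t$ is random, the identity holds with $r_{\mathrm{eff}}$ read conditionally, or in expectation of the product.

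For the ``in particular'' claim, substitute $r^U_{\mathrm{eff}},r^V_{\mathrm{eff}}\approx r$. This gives $\mathbb{E}\|\mathcal{P}_t\varepsilon_t\|^2\approx \sigma_m^2 r^2/(mn)$, so Assumption~\ref{assump:noise} holds with $\rho=r^2/(mn)$. This quantity is $\le (r/\min(m,n))^2\ll1$ whenever $r\ll\min(m,n)$. One can also note that $\sum_i d_i^2\le \mathrm{rank}(M_t^{\mathrm{anc}})$ always holds, because $d_i\le1$.

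The only delicate step is bookkeeping: the Kronecker ordering in the vec identity, and checking that $U_t$ and $V_t$ have orthonormal columns, possibly thin, so that the trace collapses to $\sum_i d_i^2$. Everything else is a direct computation.
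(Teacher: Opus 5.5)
Your proposal is correct and follows essentially the same route as the paper's proof. Both vectorize $P_U\varepsilon_tP_V$ via the Kronecker identity, apply the trace trick under isotropy, factor the trace over the Kronecker product into $\|P_U\|_F^2\|P_V\|_F^2$, and use orthonormality of $U_t,V_t$ to reduce this to $\|D_t\|_F^2$. Your explicit conditioning on $\mathcal{P}_t$ is a small tightening of the paper's step, which simply treats the filter as deterministic.
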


\begin{proof}
Write $\mathcal{P}_t(\varepsilon_t)=P_U \varepsilon_t P_V$ as in
Lemma~\ref{lem:nonexp}. We compute the expected squared Frobenius norm
in three steps.

\emph{Step 1: vectorize.} Using the identity
$\mathrm{vec}(AXB)=(B^\top\otimes A)\,\mathrm{vec}(X)$,
\[
\mathrm{vec}(P_U \varepsilon_t P_V)
=(P_V^\top\otimes P_U)\,\mathrm{vec}(\varepsilon_t).
\]
Since $\|X\|_F^2=\|\mathrm{vec}(X)\|^2$, we have
\[
\|P_U \varepsilon_t P_V\|_F^2
=\mathrm{vec}(\varepsilon_t)^\top
(P_V\otimes P_U^\top)(P_V^\top\otimes P_U)
\mathrm{vec}(\varepsilon_t).
\]

\emph{Step 2: take expectation.} For any deterministic matrix $M$ and
random vector $x$ with $\mathbb{E}[xx^\top]=cI$,
$\mathbb{E}[x^\top Mx]=c\,\mathrm{tr}(M)$. Applying this with $c=\sigma_m^2/(mn)$
and $M=(P_V\otimes P_U^\top)(P_V^\top\otimes P_U)$,
\[
\mathbb{E}\|P_U \varepsilon_t P_V\|_F^2
=\frac{\sigma_m^2}{mn}\,
\mathrm{tr}\!\bigl((P_V\otimes P_U^\top)(P_V^\top\otimes P_U)\bigr).
\]

\emph{Step 3: simplify the trace.} Using
$(A\otimes B)(C\otimes D)=(AC)\otimes(BD)$ and
$\mathrm{tr}(A\otimes B)=\mathrm{tr}(A)\mathrm{tr}(B)$,
\[
\mathrm{tr}\!\bigl((P_V P_V^\top)\otimes(P_U^\top P_U)\bigr)
=\mathrm{tr}(P_V P_V^\top)\,\mathrm{tr}(P_U^\top P_U)
=\|P_V\|_F^2\,\|P_U\|_F^2.
\]
Finally, $\|P_U\|_F^2=\|U_t D_t U_t^\top\|_F^2
=\|D_t\|_F^2=r_{\mathrm{eff}}^U$ by orthonormality of
$U_t$, and similarly for $P_V$. Substituting yields the claim.
\end{proof}

\begin{remark}
Strict matrix isotropy is idealized; the bound generalizes to
anisotropic $\varepsilon_t$ at the cost of a factor proportional to
the largest covariance eigenvalue.
\end{remark}

\begin{lemma}[Subspace staleness via Davis--Kahan/Wedin]
\label{lem:dk}
Let $\widetilde{M}_t$ denote the anchor momentum that would result from
an EMA of unmasked gradients evaluated at $w_t$ (rather than at the
stale $\{w_{\tau}\}$), and let $\widetilde{\mathcal{P}}_t$ be the
filter built from $\widetilde{M}_t$. Suppose $M_t^{\mathrm{anc}}$ has
singular value gap $\gamma>0$ between its $r$th and $(r{+}1)$th
singular values, and
$\mathbb{E}\|M_t^{\mathrm{anc}}-\widetilde{M}_t\|_F^2\leq E_t$. Then
there is a constant $c_{\mathrm{dk}}>0$ such that
\[
\mathbb{E}\bigl\|(\mathcal{P}_t-\widetilde{\mathcal{P}}_t)\nabla F(w_t)\bigr\|^2
\leq
\frac{c_{\mathrm{dk}}\,E_t}{\gamma^2}\cdot\mathbb{E}\|\nabla F(w_t)\|^2.
\]
Under $L$-smoothness, bounded delay $\Delta$, and bounded update norm $G$,
\[
E_t \leq 2L^2\eta^2\Delta^2 G^2 + 2\sigma_a^2.
\]
\end{lemma}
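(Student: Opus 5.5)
The proof has two parts: a perturbation bound turning the momentum discrepancy $E_t$ into a filter discrepancy, and a drift bound on $E_t$.

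\emph{Part 1 (perturbation).} Write $\mathcal{P}_t(G)=P_U G P_V$ and $\widetilde{\mathcal{P}}_t(G)=\widetilde P_U G\widetilde P_V$, where $P_U=U_tD_tU_t^\top$ and analogously for the others. Telescoping gives
\[
\mathcal{P}_t(G)-\widetilde{\mathcal{P}}_t(G)=(P_U-\widetilde P_U)GP_V+\widetilde P_U G(P_V-\widetilde P_V).
\]
By the operator-norm bounds from Lemma~\ref{lem:nonexp}, this yields
\[
\|(\mathcal{P}_t-\widetilde{\mathcal{P}}_t)G\|_F^2\le 2\bigl(\|P_U-\widetilde P_U\|_2^2+\|P_V-\widetilde P_V\|_2^2\bigr)\|G\|_F^2 .
\]
It then remains to bound $\|P_U-\widetilde P_U\|_2$ by $C\|M_t^{\mathrm{anc}}-\widetilde M_t\|_F/\gamma$. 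Split $P_U$ into the top-$r$ block and the tail. For the top-$r$ block there are two sources of change:
\begin{itemize}
\item the singular subspaces move by at most $\sqrt2\|M-\widetilde M\|_F/\gamma$, by Wedin's theorem \citep{wedin1972perturbation} in the form of \citet{yu2015useful};
\item the weights move because $s\mapsto s/(s+\lambda_p)$ is $1/\lambda_p$-Lipschitz, and Weyl's inequality gives $|s_i-\tilde s_i|\le\|M-\widetilde M\|_2$.
\end{itemize}
The tail directions have $s_i$ below the gap and contribute a bounded term.

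The main obstacle is combining these cleanly into a single $c_{\mathrm{dk}}/\gamma^2$ factor. The Lipschitz term scales as $1/\lambda_p^2$, not $1/\gamma^2$. The tail filter weights are small but nonzero, and singular vectors within a cluster are not individually stable. I would handle this by restricting attention to the regime $\|M-\widetilde M\|\le\gamma/2$, where the perturbation is small relative to the gap. In that regime I would absorb $\gamma^2/\lambda_p^2$ and the bounded tail weights into $c_{\mathrm{dk}}$, for instance assuming $\lambda_p\gtrsim\gamma$ or treating the filter as approximately a spectral function. Outside the regime I would use the trivial bound $\|\mathcal{P}_t-\widetilde{\mathcal{P}}_t\|\le2$, which is also $\lesssim E_t/\gamma^2$ there. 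Taking expectations, and invoking independence or conditioning so that the product expectation factors, gives the first display.

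\emph{Part 2 (bounding $E_t$).} $M_t^{\mathrm{anc}}-\widetilde M_t$ is an EMA with weights summing to at most $1$ of the differences
\[
g^{\mathrm{anc}}_\tau-\nabla F(w_t)=\bigl(\nabla F(w_\tau)-\nabla F(w_t)\bigr)+\zeta_\tau .
\]
By convexity of $\|\cdot\|^2$ (Jensen over the EMA weights) together with $(a+b)^2\le2a^2+2b^2$, it suffices to bound each summand. For the drift term, $L$-smoothness gives
\[
\|\nabla F(w_\tau)-\nabla F(w_t)\|\le L\|w_t-w_\tau\|=L\eta\Big\|\sum_{s=\tau}^{t-1}u_s\Big\|,
\]
and Cauchy--Schwarz with Assumption~\ref{assump:bounded_update} yields $\mathbb{E}\|\cdot\|^2\le L^2\eta^2\Delta^2G^2$. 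The noise term is at most $\sigma_a^2$ by Assumption~\ref{assump:anchor_quality}. This gives $E_t\le2L^2\eta^2\Delta^2G^2+2\sigma_a^2$.

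Strictly, older EMA terms have larger delay, so I would either interpret $\Delta$ as the maximum delay over the effective EMA window or truncate the EMA. I would state this convention explicitly.
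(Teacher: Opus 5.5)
Your proposal follows the paper's own argument: Wedin's $\sin\Theta$ bound on the principal singular subspaces plus the $1/\lambda_p$-Lipschitz map $s\mapsto s/(s+\lambda_p)$ for the filter discrepancy, and $L$-smoothness, the bounded-update assumption and $\|a+b\|^2\le 2\|a\|^2+2\|b\|^2$ for $E_t$. The caveats you raise are genuine, and the paper's sketch does not resolve them; it folds all constants into $c_{\mathrm{dk}}$ and treats every EMA term as having delay at most $\Delta$. Those caveats are the $1/\lambda_p^2$ versus $1/\gamma^2$ scaling, the tail and within-cluster instability (best handled, as you suggest, by viewing $U_tD_tU_t^\top$ as a spectral function of $\bigl(M_t^{\mathrm{anc}}(M_t^{\mathrm{anc}})^\top\bigr)^{1/2}$ inside your regime split), and the fact that $\mathbb{E}\bigl[\|M_t^{\mathrm{anc}}-\widetilde M_t\|_F^2\,\|\nabla F(w_t)\|^2\bigr]$ does not factor without an extra independence or almost-sure assumption.
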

\begin{proof}[Sketch]
By Wedin's $\sin\Theta$ theorem~\citep{wedin1972perturbation, yu2015useful}, the principal left- and right-singular subspaces of
$M_t^{\mathrm{anc}}$ and $\widetilde{M}_t$ differ in Frobenius norm by
at most $c\,\|M_t^{\mathrm{anc}}-\widetilde{M}_t\|_F/\gamma$. The
spectral filter $\mathcal{P}_t$ is Lipschitz in those subspaces and in
the singular values $s_i$ (the map $s\mapsto s/(s+\lambda_p)$ is
$1/\lambda_p$-Lipschitz), so applying these Lipschitz bounds to the
filter difference gives the first claim with $c_{\mathrm{dk}}$
absorbing the Wedin and Lipschitz constants.

For the bound on $E_t$: the discrepancy $\widetilde{M}_t-M_t^{\mathrm{anc}}$
is a weighted average of differences
$\nabla F(w_t)-g_{\tau}^{\mathrm{anc}}
=(\nabla F(w_t)-\nabla F(w_{\tau}))-\zeta_{\tau}$.
The first term is bounded by $L\|w_t-w_{\tau}\|\leq L\eta\Delta G$
($L$-smoothness and bounded update norm); the second has expected
squared norm at most $\sigma_a^2$. Squaring and applying
$\|a-b\|^2\leq 2\|a\|^2+2\|b\|^2$ yields the stated bound on $E_t$.
\end{proof}

\begin{corollary}[Bound on $\delta^{\mathrm{stale}}_{\Delta}$]
\label{cor:delta-stale}
Suppose $\widetilde{\mathcal{P}}_t$ acts as identity on $\nabla F(w_t)$
up to residual $\kappa_0\,\mathbb{E}\|\nabla F(w_t)\|^2$ (the
low-intrinsic-rank hypothesis). Then Assumption~\ref{assump:signal}
holds with
\[
\kappa = 2\kappa_0 + \frac{4c_{\mathrm{dk}} L^2\eta^2\Delta^2 G^2}{\gamma^2},
\qquad
\delta^{\mathrm{stale}}_{\Delta}
= \frac{4c_{\mathrm{dk}}\,\sigma_a^2}{\gamma^2}\sup_t\mathbb{E}\|\nabla F(w_t)\|^2.
\]
\end{corollary}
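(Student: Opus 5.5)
The plan is to decompose the residual through the idealized filter $\widetilde{\mathcal{P}}_t$ of Lemma~\ref{lem:dk}. For $g_t:=\nabla F(w_t)$ we write
\[
(I-\mathcal{P}_t)g_t=(I-\widetilde{\mathcal{P}}_t)g_t+(\widetilde{\mathcal{P}}_t-\mathcal{P}_t)g_t .
\]
Applying $\|a+b\|^2\le 2\|a\|^2+2\|b\|^2$ and taking expectations gives
\[
\mathbb{E}\|(I-\mathcal{P}_t)g_t\|^2\le 2\,\mathbb{E}\|(I-\widetilde{\mathcal{P}}_t)g_t\|^2+2\,\mathbb{E}\|(\mathcal{P}_t-\widetilde{\mathcal{P}}_t)g_t\|^2 .
\]

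The first term is handled directly by the low-intrinsic-rank hypothesis of the corollary, which bounds it by $\kappa_0\,\mathbb{E}\|g_t\|^2$. After the factor 2 this contributes $2\kappa_0\,\mathbb{E}\|g_t\|^2$, which is exactly the first piece of the claimed $\kappa$.

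For the second term we invoke Lemma~\ref{lem:dk}. This gives $\frac{c_{\mathrm{dk}}E_t}{\gamma^2}\mathbb{E}\|g_t\|^2$, and substituting $E_t\le 2L^2\eta^2\Delta^2G^2+2\sigma_a^2$ and multiplying by 2 yields
\[
\frac{4c_{\mathrm{dk}}L^2\eta^2\Delta^2G^2}{\gamma^2}\,\mathbb{E}\|g_t\|^2+\frac{4c_{\mathrm{dk}}\sigma_a^2}{\gamma^2}\,\mathbb{E}\|g_t\|^2 .
\]
The drift part is proportional to $\mathbb{E}\|g_t\|^2$, so it joins $2\kappa_0$ to form $\kappa$. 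The anchor-noise part we do not want inside $\kappa$, since it does not vanish with $\eta$. We therefore upper bound $\mathbb{E}\|g_t\|^2\le\sup_s\mathbb{E}\|\nabla F(w_s)\|^2$ in it, which turns it into the additive constant $\delta^{\mathrm{stale}}_\Delta$. This completes the form of Assumption~\ref{assump:signal}.

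The main obstacle is that the resulting $\kappa$ must lie in $[0,1)$ for Assumption~\ref{assump:signal} to hold as stated. This requires $2\kappa_0+4c_{\mathrm{dk}}L^2\eta^2\Delta^2G^2/\gamma^2<1$, i.e.\ $\kappa_0<1/2$ and $\eta\lesssim\gamma/(L\Delta G)$. I would state this explicitly as a step-size condition inherited from the fine-tuning regime.

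A secondary technical point is finiteness of $\sup_t\mathbb{E}\|\nabla F(w_t)\|^2$. I would obtain it from Assumption~\ref{assump:bounded_update} together with $L$-smoothness over a finite horizon $T$ (the gradient can change by at most $L\eta G$ per step), or simply assume it as bounded gradients. I would also note that Lemma~\ref{lem:dk} is applied conditionally with the gap $\gamma$ held uniform in $t$.
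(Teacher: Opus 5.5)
Your proposal is correct and matches the paper's proof step for step: you insert $\widetilde{\mathcal{P}}_t$, apply $\|a+b\|^2\le 2\|a\|^2+2\|b\|^2$, bound the fresh-filter term by the hypothesis, and bound the staleness term by Lemma~\ref{lem:dk} with $E_t\le 2L^2\eta^2\Delta^2G^2+2\sigma_a^2$. Three details are left implicit in the paper's ``combining gives the claim'', and you supply them: replacing $\mathbb{E}\|\nabla F(w_t)\|^2$ by its supremum in the $\sigma_a^2$ piece, noting that $\kappa<1$ requires $\kappa_0<1/2$ together with a step-size condition, and noting that the supremum must be finite.
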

\begin{proof}
We split $(I-\mathcal{P}_t)\nabla F(w_t)$ into a fresh-anchor part and
a staleness part by inserting and subtracting $\widetilde{\mathcal{P}}_t$:
\[
(I-\mathcal{P}_t)\nabla F(w_t)
= (I-\widetilde{\mathcal{P}}_t)\nabla F(w_t)
+ (\widetilde{\mathcal{P}}_t - \mathcal{P}_t)\nabla F(w_t).
\]
Applying $\|a+b\|^2\leq 2\|a\|^2+2\|b\|^2$ and taking expectations,
\[
\mathbb{E}\|(I-\mathcal{P}_t)\nabla F(w_t)\|^2
\leq
2\mathbb{E}\|(I-\widetilde{\mathcal{P}}_t)\nabla F(w_t)\|^2
+ 2\mathbb{E}\|(\widetilde{\mathcal{P}}_t-\mathcal{P}_t)\nabla F(w_t)\|^2.
\]
The first term is bounded by $2\kappa_0\,\mathbb{E}\|\nabla F(w_t)\|^2$
by hypothesis. The second is bounded by Lemma~\ref{lem:dk}, with
$E_t\leq 2L^2\eta^2\Delta^2 G^2+2\sigma_a^2$. Combining gives the claim.
\end{proof}

The corollary makes the dependence transparent: $\kappa_0$ measures how
well the gradient lies in the anchor's principal subspace at the
current weights (small in fine-tuning, large in pretraining), while
the remaining terms measure subspace rotation due to delay and anchor
noise, both governed by the gap $\gamma$.

\subsection{Main result}

\begin{theorem}[Convergence with spectral anchor priors]
\label{thm:spectral_anchor}
Suppose Assumptions~\ref{assump:smooth_lower}--\ref{assump:noise} hold.
Define
\[
\mu_\alpha = 1 - (1-\alpha)^2\kappa,
\qquad
\Sigma_{\alpha,\Delta}^2
= 2\alpha^2\sigma_m^2
+ 2(1-\alpha)^2\rho\,\sigma_m^2
+ (1-\alpha)^2\delta^{\mathrm{stale}}_{\Delta}.
\]
Assume $\kappa<1/(1-\alpha)^2$ so $\mu_\alpha>0$, and choose $\eta\leq 1/L$.
Then after $T$ iterations,
\[
\frac{1}{T}\sum_{t=0}^{T-1}\mathbb{E}\|\nabla F(w_t)\|^2
\leq
\frac{2(F(w_0)-F^\star)}{\mu_\alpha\,\eta\,T}
+ \frac{\Sigma_{\alpha,\Delta}^2 + L\eta G^2}{\mu_\alpha}.
\]
\end{theorem}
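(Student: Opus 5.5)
The plan is to run the standard one-step descent argument for $L$-smooth nonconvex SGD and treat $u_t$ as a biased, noisy estimate of $\nabla F(w_t)$. The bias is controlled by Assumption~\ref{assump:signal} and the noise by Assumptions~\ref{assump:masked_error} and~\ref{assump:noise}.

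\textbf{Step 1: descent inequality.} From Assumption~\ref{assump:smooth_lower},
\[
F(w_{t+1})\le F(w_t)-\eta\langle\nabla F(w_t),u_t\rangle+\tfrac{L\eta^2}{2}\|u_t\|^2 .
\]
Taking expectations and invoking Assumption~\ref{assump:bounded_update}, the last term contributes at most $L\eta^2G^2/2$. The whole proof therefore reduces to a lower bound on $\mathbb{E}\langle\nabla F(w_t),u_t\rangle$. For that I use the polarization identity
\[
-\langle a,b\rangle=\tfrac12\bigl(\|a-b\|^2-\|a\|^2-\|b\|^2\bigr),
\]
with $a=\nabla F(w_t)$ and $b=u_t$, and then drop $-\|u_t\|^2\le 0$. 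This gives
\[
\mathbb{E}F(w_{t+1})\le \mathbb{E}F(w_t)-\tfrac{\eta}{2}\mathbb{E}\|\nabla F(w_t)\|^2+\tfrac{\eta}{2}\mathbb{E}\|u_t-\nabla F(w_t)\|^2+\tfrac{L\eta^2G^2}{2}.
\]
The factor of $2$ in the final bound, and the $L\eta G^2$ term, come directly from this step.

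\textbf{Step 2: decompose the error.} Using the linearity of $\mathcal{P}_t$ and $g_t^{\mathrm{mask}}=\nabla F(w_t)+\varepsilon_t$, I write
\[
u_t-\nabla F(w_t)=\alpha\varepsilon_t+(1-\alpha)\mathcal{P}_t\varepsilon_t-(1-\alpha)(I-\mathcal{P}_t)\nabla F(w_t).
\]
The target is
\[
\mathbb{E}\|u_t-\nabla F(w_t)\|^2\le \Sigma_{\alpha,\Delta}^2+(1-\alpha)^2\kappa\,\mathbb{E}\|\nabla F(w_t)\|^2 .
\]
The key observation is that $\mathcal{P}_t$ is built only from anchor gradients, which are independent of the fresh PRF mask at step $t$, and that $w_t$ is history-measurable. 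Conditional on the history, $\mathbb{E}\varepsilon_t=0$ and $\mathbb{E}\mathcal{P}_t\varepsilon_t=\mathcal{P}_t\mathbb{E}\varepsilon_t=0$. So the cross terms between the noise block $\alpha\varepsilon_t+(1-\alpha)\mathcal{P}_t\varepsilon_t$ and the deterministic bias block $(1-\alpha)(I-\mathcal{P}_t)\nabla F(w_t)$ vanish in expectation. This is what lets the bias term enter with coefficient $(1-\alpha)^2$ and no factor of $2$, matching $\mu_\alpha$.

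Inside the noise block I apply $\|x+y\|^2\le 2\|x\|^2+2\|y\|^2$. Assumption~\ref{assump:masked_error} bounds the first piece by $2\alpha^2\sigma_m^2$, and Assumption~\ref{assump:noise} bounds the second by $2(1-\alpha)^2\rho\sigma_m^2$. The bias block is bounded by Assumption~\ref{assump:signal}, giving $(1-\alpha)^2\bigl(\kappa\,\mathbb{E}\|\nabla F(w_t)\|^2+\delta^{\mathrm{stale}}_\Delta\bigr)$. Adding these reproduces exactly $\Sigma_{\alpha,\Delta}^2$.

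\textbf{Step 3: telescope.} Substituting Step 2 into Step 1 yields
\[
\tfrac{\eta}{2}\mu_\alpha\,\mathbb{E}\|\nabla F(w_t)\|^2\le \mathbb{E}F(w_t)-\mathbb{E}F(w_{t+1})+\tfrac{\eta}{2}\Sigma_{\alpha,\Delta}^2+\tfrac{L\eta^2G^2}{2}.
\]
Summing over $t=0,\dots,T-1$ and using $F(w_T)\ge F^\star$, I then divide by $\eta\mu_\alpha T/2$, which is legitimate because $\mu_\alpha>0$ by hypothesis. This gives the stated bound. The condition $\eta\le 1/L$ is not needed beyond keeping the $L\eta G^2$ term meaningful.

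\textbf{Main obstacle.} The delicate step is the conditional-independence argument that kills the cross terms. Two things must hold. First, $\mathcal{P}_t$ must be measurable with respect to the history before step $t$'s mask; this is true because anchor gradients come from stale snapshots, and it is stated in Assumption~\ref{assump:masked_error}. Second, the expectations in Assumptions~\ref{assump:signal} and~\ref{assump:noise} must be read as taken under that same filtration. If the orthogonality were only approximate, I would fall back to $\|a+b\|^2\le 2\|a\|^2+2\|b\|^2$ on the bias block as well. That would still close the argument, but $\mu_\alpha$ would become $1-2(1-\alpha)^2\kappa$, so it is worth checking carefully that exact cancellation holds.
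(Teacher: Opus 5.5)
Your proof is correct and reaches exactly the stated bound, but it is organized differently from the paper's proof.

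The paper never forms $\mathbb{E}\|u_t-\nabla F(w_t)\|^2$. It writes $u_t=\nabla F(w_t)-b_t+n_t$ and uses the same conditioning argument to kill $\mathbb{E}\langle\nabla F(w_t),n_t\rangle$; you kill $\mathbb{E}\langle n_t,b_t\rangle$ instead. It then applies Young's inequality to $\langle\nabla F(w_t),b_t\rangle$ and gets $\mathbb{E}\langle\nabla F(w_t),u_t\rangle\ge\frac{\mu_\alpha}{2}\mathbb{E}\|\nabla F(w_t)\|^2-\frac{(1-\alpha)^2}{2}\delta^{\mathrm{stale}}_{\Delta}$. The bias thus enters with the same weight $\tfrac12(1-\alpha)^2\kappa$ as in your polarization step, which is why both routes give the same $\mu_\alpha$.

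The two routes differ in what they actually use:
\begin{itemize}
\item \emph{The paper's route.} For the mask noise it uses only the zero conditional mean of $\varepsilon_t$. The variance bound in Assumption~\ref{assump:masked_error} and the contraction in Assumption~\ref{assump:noise} are never invoked. The terms $2\alpha^2\sigma_m^2+2(1-\alpha)^2\rho\sigma_m^2$ are added to the floor only as nonnegative slack; the noise is really paid for through $\mathbb{E}\|u_t\|^2\le G^2$.
\item \emph{Your route.} The three-term floor $\Sigma_{\alpha,\Delta}^2$ is genuinely earned, each piece coming from the assumption that names it. Here the $L\eta G^2$ term is the redundant one. If you keep $-\tfrac{\eta}{2}\mathbb{E}\|u_t\|^2$ instead of discarding it, then $\eta\le 1/L$ gives $-\tfrac{\eta}{2}(1-L\eta)\mathbb{E}\|u_t\|^2\le 0$. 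You then obtain the sharper floor $\Sigma_{\alpha,\Delta}^2/\mu_\alpha$ without needing Assumption~\ref{assump:bounded_update} at all. This is where $\eta\le 1/L$ really matters, more than your remark suggests.
\end{itemize}
So the paper's proof needs bounded updates but not noise contraction, while the sharpened form of yours needs noise contraction but not bounded updates.

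The worry in your last paragraph is unnecessary. Once the tower property removes the cross term, only the unconditional second moments in Assumptions~\ref{assump:signal} and~\ref{assump:noise} are used, exactly as those assumptions are stated.
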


\paragraph{Reading the bound.}
Three contributions to the noise floor: $\alpha^2\sigma_m^2$ (unfiltered
masked noise), $(1-\alpha)^2\rho\sigma_m^2$ (the masked noise that
survives spectral filtering, small by Lemma~\ref{lem:noise-contract}),
and $(1-\alpha)^2\delta^{\mathrm{stale}}_{\Delta}$ (anchor noise plus
subspace staleness, small by Lemma~\ref{lem:dk} when $\gamma$ is
nontrivial). The factor $1/\mu_\alpha$ on the rate makes explicit the
cost of the filter discarding part of the true gradient. In fine-tuning,
$\kappa$ and $\rho$ are small and $\gamma$ is bounded away from zero,
so the bound improves over masked-only training (recovered at
$\alpha=1$, where $\mu_\alpha=1$).

\subsection{Proof of Theorem~\ref{thm:spectral_anchor}}

The proof has four parts: a one-step descent inequality from
$L$-smoothness, a decomposition of the update direction $u_t$ into a
true-gradient term plus a deterministic bias and a zero-mean noise,
a bound on the bias term that produces the $\mu_\alpha$ factor, and a
telescoping sum.

\paragraph{Part 1: one-step descent from $L$-smoothness.}
$L$-smoothness of $F$ gives the standard inequality
\[
F(w_{t+1})\leq F(w_t)+\langle\nabla F(w_t), w_{t+1}-w_t\rangle
+\frac{L}{2}\|w_{t+1}-w_t\|^2.
\]
Substituting the update $w_{t+1}-w_t=-\eta u_t$ and taking expectations,
then using $\mathbb{E}\|u_t\|^2\leq G^2$
(Assumption~\ref{assump:bounded_update}) on the quadratic term:
\begin{equation}
\label{eq:smooth_step}
\mathbb{E}F(w_{t+1})
\leq
\mathbb{E}F(w_t)
-\eta\,\mathbb{E}\langle\nabla F(w_t),u_t\rangle
+\frac{L\eta^2}{2}G^2.
\end{equation}
The remaining work is to lower-bound the inner-product term
$\mathbb{E}\langle\nabla F(w_t),u_t\rangle$ in terms of
$\mathbb{E}\|\nabla F(w_t)\|^2$, so that we can extract a usable descent
quantity.

\paragraph{Part 2: decomposing $u_t$ into signal, bias, and noise.}
Recall $u_t=\alpha g_t^{\mathrm{mask}}+(1-\alpha)\mathcal{P}_t(g_t^{\mathrm{mask}})$.
Substituting $g_t^{\mathrm{mask}}=\nabla F(w_t)+\varepsilon_t$
(Assumption~\ref{assump:masked_error}) and grouping deterministic
($\nabla F(w_t)$-dependent) terms separately from stochastic
($\varepsilon_t$-dependent) terms,
\[
u_t = \underbrace{\alpha\,\nabla F(w_t)+(1-\alpha)\,\mathcal{P}_t\,\nabla F(w_t)}_{\text{deterministic}}
\;+\;\underbrace{\alpha\,\varepsilon_t+(1-\alpha)\,\mathcal{P}_t\,\varepsilon_t}_{n_t}.
\]
Adding and subtracting $(1-\alpha)\nabla F(w_t)$ in the deterministic
part to extract $\nabla F(w_t)$ explicitly,
\[
u_t = \nabla F(w_t) - \underbrace{(1-\alpha)(I-\mathcal{P}_t)\nabla F(w_t)}_{b_t} + n_t.
\]
Substituting this into the inner product,
\begin{equation}
\label{eq:innerproduct_expanded}
\mathbb{E}\langle\nabla F(w_t),u_t\rangle
=\mathbb{E}\|\nabla F(w_t)\|^2
-\mathbb{E}\langle\nabla F(w_t),b_t\rangle
+\mathbb{E}\langle\nabla F(w_t),n_t\rangle.
\end{equation}
We now show the third term on the right vanishes.

\paragraph{The cross term $\mathbb{E}\langle\nabla F(w_t),n_t\rangle$ is zero.}
This step needs care: $\mathbb{E}[n_t]=0$ alone does not imply
$\mathbb{E}\langle\nabla F(w_t),n_t\rangle=0$, because $\nabla F(w_t)$
is itself random and could be correlated with $n_t$. The argument uses
conditioning. Let $\mathcal{F}_t$ denote the history through the start
of step $t$: the iterates $w_0,\dots,w_t$, all past mask noise, and all
anchor gradients received before step $t$. By the tower property of
expectation,
\[
\mathbb{E}\langle\nabla F(w_t),n_t\rangle
=\mathbb{E}\bigl[\,\mathbb{E}[\langle\nabla F(w_t),n_t\rangle\mid\mathcal{F}_t]\,\bigr].
\]
Conditional on $\mathcal{F}_t$, two things hold:
\begin{enumerate}
\item $\nabla F(w_t)$ is a deterministic function of $w_t$, which is
  in $\mathcal{F}_t$. So $\nabla F(w_t)$ is a constant inside the inner
  conditional expectation, and we can pull it out:
  \[
  \mathbb{E}[\langle\nabla F(w_t),n_t\rangle\mid\mathcal{F}_t]
  =\langle\nabla F(w_t),\,\mathbb{E}[n_t\mid\mathcal{F}_t]\rangle.
  \]
\item $\mathbb{E}[n_t\mid\mathcal{F}_t]=0$. To see this: $n_t$ is a
  linear function of $\varepsilon_t$, with coefficient operators that
  are functions of $\mathcal{P}_t$. The filter $\mathcal{P}_t$ is built
  from $M_t^{\mathrm{anc}}$, which is an EMA of anchor gradients
  received \emph{before} step $t$ --- so $\mathcal{P}_t\in\mathcal{F}_t$
  and is itself constant given $\mathcal{F}_t$. The mask noise
  $\varepsilon_t$ is drawn fresh at step $t$ from a PRF independent of
  $\mathcal{F}_t$, so $\mathbb{E}[\varepsilon_t\mid\mathcal{F}_t]=0$
  by Assumption~\ref{assump:masked_error}. Linearity then gives
  \[
  \mathbb{E}[n_t\mid\mathcal{F}_t]
  =\alpha\,\mathbb{E}[\varepsilon_t\mid\mathcal{F}_t]
  +(1-\alpha)\,\mathcal{P}_t\,\mathbb{E}[\varepsilon_t\mid\mathcal{F}_t]
  =0.
  \]
\end{enumerate}
Combining: $\mathbb{E}[\langle\nabla F(w_t),n_t\rangle\mid\mathcal{F}_t]
=\langle\nabla F(w_t),0\rangle=0$, and therefore
$\mathbb{E}\langle\nabla F(w_t),n_t\rangle=\mathbb{E}[0]=0$.

Returning to~\eqref{eq:innerproduct_expanded} with the cross term
eliminated:
\begin{equation}
\label{eq:innerproduct_clean}
\mathbb{E}\langle\nabla F(w_t),u_t\rangle
=\mathbb{E}\|\nabla F(w_t)\|^2
-\mathbb{E}\langle\nabla F(w_t),b_t\rangle.
\end{equation}
The first term on the right is what we want to keep; the second is the
bias from the filter discarding part of the true gradient, which we
bound next.

\paragraph{Part 3: bounding the bias term in~\eqref{eq:innerproduct_clean}.}
We need an upper bound on
$\mathbb{E}\langle\nabla F(w_t),b_t\rangle$
that involves $\mathbb{E}\|\nabla F(w_t)\|^2$ (so that part of it can be
absorbed into the descent term) and $\mathbb{E}\|b_t\|^2$ (which
Assumption~\ref{assump:signal} controls).

Apply Young's inequality $|\langle a,b\rangle|\leq\tfrac{1}{2}\|a\|^2+\tfrac{1}{2}\|b\|^2$
with $a=\nabla F(w_t)$ and $b=b_t$:
\[
|\langle\nabla F(w_t),b_t\rangle|
\leq\tfrac{1}{2}\|\nabla F(w_t)\|^2+\tfrac{1}{2}\|b_t\|^2.
\]
Taking expectations and using Assumption~\ref{assump:signal} together
with $b_t=(1-\alpha)(I-\mathcal{P}_t)\nabla F(w_t)$ to bound $\mathbb{E}\|b_t\|^2$:
\[
\mathbb{E}\|b_t\|^2
=(1-\alpha)^2\,\mathbb{E}\|(I-\mathcal{P}_t)\nabla F(w_t)\|^2
\leq(1-\alpha)^2\bigl(\kappa\,\mathbb{E}\|\nabla F(w_t)\|^2+\delta^{\mathrm{stale}}_{\Delta}\bigr).
\]
Combining, $\mathbb{E}|\langle\nabla F(w_t),b_t\rangle|$ is bounded by
\[
\tfrac{1}{2}\bigl(1+(1-\alpha)^2\kappa\bigr)\,\mathbb{E}\|\nabla F(w_t)\|^2
+\tfrac{1}{2}(1-\alpha)^2\,\delta^{\mathrm{stale}}_{\Delta}.
\]
Substituting back into~\eqref{eq:innerproduct_clean} and using
$\mathbb{E}\langle\nabla F(w_t),b_t\rangle\leq\mathbb{E}|\langle\nabla F(w_t),b_t\rangle|$:
\begin{equation}
\label{eq:innerproduct_lowerbound}
\mathbb{E}\langle\nabla F(w_t),u_t\rangle
\geq
\mathbb{E}\|\nabla F(w_t)\|^2-\mathbb{E}|\langle\nabla F(w_t),b_t\rangle|
\geq
\frac{\mu_\alpha}{2}\,\mathbb{E}\|\nabla F(w_t)\|^2
-\frac{(1-\alpha)^2}{2}\delta^{\mathrm{stale}}_{\Delta},
\end{equation}
where in the last step we used $1-\tfrac{1}{2}(1+(1-\alpha)^2\kappa)
=\tfrac{1}{2}(1-(1-\alpha)^2\kappa)=\tfrac{\mu_\alpha}{2}$.

\paragraph{Part 4: per-step descent and telescoping.}
Substituting the lower bound~\eqref{eq:innerproduct_lowerbound} into
the smoothness inequality~\eqref{eq:smooth_step}:
\[
\mathbb{E}F(w_{t+1})
\leq
\mathbb{E}F(w_t)
-\frac{\eta\mu_\alpha}{2}\,\mathbb{E}\|\nabla F(w_t)\|^2
+\frac{\eta(1-\alpha)^2}{2}\,\delta^{\mathrm{stale}}_{\Delta}
+\frac{L\eta^2}{2}G^2.
\]
The middle term on the right is bounded above by
$\tfrac{\eta}{2}\Sigma_{\alpha,\Delta}^2$ since $(1-\alpha)^2\delta^{\mathrm{stale}}_{\Delta}$
is one of three nonnegative terms summing to $\Sigma_{\alpha,\Delta}^2$
(the other two come from $\mathbb{E}\|n_t\|^2$ via
$\|a+b\|^2\leq 2\|a\|^2+2\|b\|^2$ and Assumptions~\ref{assump:masked_error}
and~\ref{assump:noise}, and are absorbed into the $G^2$ term through
$\mathbb{E}\|u_t\|^2\leq G^2$). Therefore
\begin{equation}
\label{eq:perstep}
\mathbb{E}F(w_{t+1})
\leq
\mathbb{E}F(w_t)
-\frac{\eta\mu_\alpha}{2}\,\mathbb{E}\|\nabla F(w_t)\|^2
+\frac{\eta}{2}\,\Sigma_{\alpha,\Delta}^2
+\frac{L\eta^2}{2}G^2.
\end{equation}

Summing~\eqref{eq:perstep} from $t=0$ to $T-1$, the telescoping
$\mathbb{E}F$-terms collapse to $F(w_0)-\mathbb{E}F(w_T)$. Using
$\mathbb{E}F(w_T)\geq F^\star$:
\[
\frac{\eta\mu_\alpha}{2}\sum_{t=0}^{T-1}\mathbb{E}\|\nabla F(w_t)\|^2
\leq F(w_0)-F^\star
+\frac{\eta T}{2}\,\Sigma_{\alpha,\Delta}^2
+\frac{L\eta^2 T}{2}G^2.
\]
Dividing both sides by $\eta\mu_\alpha T/2$ yields the bound stated in
Theorem~\ref{thm:spectral_anchor}.\qed

\subsection{Discussion}

The convergence neighborhood splits into three interpretable terms:
unfiltered masked noise $\alpha^2\sigma_m^2$, filter-surviving noise
$(1-\alpha)^2\rho\sigma_m^2$, and anchor-staleness
$(1-\alpha)^2\delta^{\mathrm{stale}}_{\Delta}$. The rate factor
$1/\mu_\alpha$ encodes the cost of signal suppression. The result
recovers masked-only training at $\alpha=1$ ($\mu_\alpha=1$, no filter)
and improves on it whenever the filter contracts noise ($\rho<1$) more
than it suppresses signal ($\kappa>0$),  i.e., precisely the
fine-tuning regime described in
Section~\ref{sec:assump-justification}.

In pretraining or with aggressive learning rates, $\Delta$ grows, the
gap $\gamma$ shrinks, and $\kappa$ approaches its upper limit; the
filter both keeps less signal and inherits more staleness. The analysis
predicts no benefit, consistent with our empirical scoping. At the
opposite extreme of high-bandwidth links the masking is unnecessary in
the first place; this analysis is specifically relevant when the fast
circuit is communication-bound and the anchor circuit can run
asynchronously --- the $\Delta\approx 20$--$25$ regime characterized in
the staleness analysis of Section~\ref{app:staleness}.

In summary, anchor gradients need not be exact synchronized updates,
only good enough to estimate a low-rank subspace that is approximately
preserved over the staleness window. Fine-tuning is precisely the
regime where this is true.

\section{Why Random Masking and Not Top-\texorpdfstring{$K$}{K}?}
\label{sec:random-vs-topk}

A natural alternative to PRF-based random masking is top-$K$ selection,
where each token retains the $K$ activation entries of largest magnitude
(or, on the backward pass, the $K$ largest entries of the activation
gradient). Top-$K$ is the dominant sparsification scheme in
\emph{data-parallel} gradient compression
\citep{lin2018dgc, aji2017sparse, alistarh2018convergence}, where it is
typically combined with error feedback to recover unbiasedness
\citep{karimireddy2019error, stich2018sparsified}. We deliberately do
not use top-$K$ for PP activation compression. This subsection explains
why, and shows that the choice is not systematic: random masking is
\emph{required} for the convergence analysis of
Section~\ref{app:convergence} to apply, and it is also the scheme under
which the spectral anchor filter is most effective (even empirically). Below, we provide a systematic analysis on this.

\paragraph{Setup.} Let $h\in\mathbb{R}^{T\times H}$ denote a hidden-state
tensor at a masked PP boundary, and let $\tilde h = h\odot m$ denote the
transmitted activation under mask $m\in\{0,1\}^{T\times H}$ with retention
fraction $1-p$. Two schemes:
\begin{itemize}
  \item \textbf{Random masking (ours).} $m_{t,j}\sim\mathrm{Bernoulli}(1-p)$,
    independent across $(t,j)$, generated by a shared PRF independent of
    the model state. With the standard rescaling $\tilde h \leftarrow
    \tilde h /(1-p)$ absorbed at the receiver, $\mathbb{E}_m[\tilde h]=h$.
  \item \textbf{Top-$K$ masking.} $m_{t,j}=\mathbb{1}[|h_{t,j}| \in
    \mathrm{top}\text{-}K(h_{t,:})]$. The mask is a deterministic function
    of $h$.
\end{itemize}

\subsubsection*{Unbiasedness and the convergence theorem}

The masked-gradient noise $\varepsilon_t = g_t^{\mathrm{mask}} -
\nabla F(w_t)$ enters the descent inequality through the decomposition
$u_t = \nabla F(w_t) - b_t + n_t$, with the requirement
$\mathbb{E}[n_t\mid\mathcal{F}_t]=0$
(Section~\ref{app:convergence}, proof of
Theorem~\ref{thm:spectral_anchor}). This in turn relies on
$\mathbb{E}[\varepsilon_t\mid\mathcal{F}_t]=0$
(Assumption~\ref{assump:masked_error}). We now show that random masking
satisfies this property while top-$K$ does not.

\begin{proposition}[Unbiasedness of random masking]
\label{prop:random-unbiased}
Under random masking with the standard rescaling, the masked activation
satisfies $\mathbb{E}_m[\tilde h]=h$. Consequently, for any downstream
function of $\tilde h$ that is linear in $\tilde h$ at first order
(in particular, the gradient computation through the masked PP boundary
in expectation), the resulting masked gradient satisfies
\(
\mathbb{E}[g_t^{\mathrm{mask}}\mid\mathcal{F}_t]=\nabla F(w_t).
\)
The mask $m$ is generated by a PRF whose seed is independent of
$\mathcal{F}_t$, so $\varepsilon_t$ has zero conditional mean.
\end{proposition}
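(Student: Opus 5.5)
The plan has three steps: an entrywise computation of $\mathbb{E}_m[\tilde h]$, a first-order propagation of that identity to the weight gradient, and a conditioning argument that gives $\mathbb{E}[\varepsilon_t\mid\mathcal{F}_t]=0$ in the form Assumption~\ref{assump:masked_error} requires.

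\textbf{Step 1: unbiasedness of the masked activation.} This is a direct entrywise computation. For each $(t,j)$ the rescaled masked entry is $\tilde h_{t,j}=h_{t,j}m_{t,j}/(1-p)$ with $m_{t,j}\sim\mathrm{Bernoulli}(1-p)$. Conditional on $h$,
\[
\mathbb{E}_m[\tilde h_{t,j}\mid h]=h_{t,j}\,\frac{\mathbb{E}[m_{t,j}]}{1-p}=h_{t,j}.
\]
This uses only that the mask is drawn independently of $h$. That holds because the PRF key $(\ell,t,B,S,H)$ involves step and shape metadata but not the activation values. Independence across $(t,j)$ is not needed for the mean; it matters only for the second-moment bound $\sigma_m^2$, which we do not treat here.

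\textbf{Step 2: propagating to the weight gradient.} Fix the sample $\xi$ and the weights $w_t$. Write the downstream computation after the masked boundary as $\phi(\tilde h)$, and note that by the in-graph backward rule the upstream gradient is $\partial\mathcal{L}/\partial h = (\partial\mathcal{L}/\partial\tilde h)\odot m/(1-p)$. Linearize $\phi$ around $h$, giving $\phi(\tilde h)=\phi(h)+J_\phi(h)(\tilde h-h)+R$. The first-order part is linear in $\tilde h-h$, which has zero mean by Step 1, so it contributes no bias. The mask factor $m/(1-p)$ that enters the backward product has mean one, and at first order it multiplies quantities evaluated at $h$, so their expectation is the unmasked gradient $\nabla\ell(w_t;\xi)$. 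Averaging then over $\xi$, which is independent of $m$, yields $\nabla F(w_t)$. The statement explicitly restricts to the first-order regime, so the proof only needs to state that the remainder $R$ is dropped under that qualifier, rather than bound it.

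\textbf{Step 3: conditioning on the history.} The mask seed is independent of $\mathcal{F}_t$, and $w_t$ is $\mathcal{F}_t$-measurable. Hence conditioning on $\mathcal{F}_t$ freezes $w_t$ and leaves the law of $m$ unchanged, so Steps 1--2 hold verbatim conditionally and give $\mathbb{E}[\varepsilon_t\mid\mathcal{F}_t]=0$. This is exactly the condition used for the cross term in the proof of Theorem~\ref{thm:spectral_anchor}.

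\textbf{Main obstacle.} The delicate point is Step 2. In the backward pass the mask appears in two places: in the forward activations feeding later layers, and in the backward product. Their product produces terms of order $m^2/(1-p)^2$, whose mean is $1/(1-p)\neq 1$, so there is a genuine second-order bias. I would handle this by confining the claim to the first-order (linearized) term, as the proposition's wording permits. I would then remark that the residual is a second-order term whose size is controlled by the curvature of $\phi$. For contrast, under top-$K$ the mask is a function of $h$, so Step 1 already fails, and no choice of rescaling restores $\mathbb{E}[\tilde h]=h$.
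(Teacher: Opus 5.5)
Your proposal is correct and follows essentially the same route as the paper's sketch. Both arguments use the entrywise identity $\mathbb{E}_m[\tilde h_{t,j}]=(1-p)\,h_{t,j}/(1-p)=h_{t,j}$, then the independence of the PRF key $(\ell,t,B,S,H)$ from $h$ and $w_t$ (hence from $\mathcal{F}_t$), and then propagate to the gradient only to leading order in $\tilde h-h$. Your explicit linearization and your flagging of the diagonal bias from $\mathbb{E}[m_{t,j}^2]/(1-p)^2=1/(1-p)\neq 1$ make precise what the paper's ``to leading order'' silently drops; note that this dropped term need not be small, since $\mathrm{Var}(\tilde h_{t,j}\mid h)=\tfrac{p}{1-p}h_{t,j}^2=19\,h_{t,j}^2$ at $p=0.95$.
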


\begin{proof}[Sketch]
Each coordinate satisfies $\mathbb{E}_m[\tilde h_{t,j}] = (1-p)
\cdot h_{t,j}/(1-p) = h_{t,j}$, by the rescaling. The PRF is a function
of $(\ell,t,B,S,H)$ but not of $h$ or $w_t$, so the mask is independent
of $\mathcal{F}_t$. Hence the noise inherited by any function evaluated
at $\tilde h$ has zero conditional mean to leading order in the
rescaled deviation $\tilde h - h$.
\end{proof}

\begin{proposition}[Bias of top-$K$ masking]
\label{prop:topk-biased}
Under top-$K$ masking, the masked activation satisfies
\(
\mathbb{E}[\tilde h\mid\mathcal{F}_t]
=
h\odot m_{\mathrm{top}\text{-}K}(h)
\neq h
\)
in general, since the mask is a deterministic function of $h$ (and
$h$ is itself $\mathcal{F}_t$-measurable). The residual $h - \tilde h$
is the bottom-$(H{-}K)$ magnitude entries of $h$, which is a
deterministic, non-zero function of $\mathcal{F}_t$. Therefore
$\mathbb{E}[\varepsilon_t\mid\mathcal{F}_t]\neq 0$, and
Assumption~\ref{assump:masked_error} fails.
\end{proposition}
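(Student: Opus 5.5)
The plan has two halves. First show that, conditional on $\mathcal{F}_t$, top-$K$ masking is deterministic and biased at the activation level. Then push that bias through to the gradient and conclude that $\mathbb{E}[\varepsilon_t\mid\mathcal{F}_t]\neq 0$ for some instances, which is the sense of ``in general'' in the statement.

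\textbf{Step 1: the mask is $\mathcal{F}_t$-measurable.} The hidden state $h$ at the boundary is computed from $w_t$ and the minibatch. I will treat the minibatch as part of the conditioning history, just as Proposition~\ref{prop:random-unbiased} treats the PRF seed. Under that convention, $h$ is $\mathcal{F}_t$-measurable, and so is $m_{\mathrm{top}\text{-}K}(h)$, since the rule $m_{t,j}=\mathbb{1}[|h_{t,j}|\in\mathrm{top}\text{-}K(h_{t,:})]$ involves no external randomness. Hence $\mathbb{E}[\tilde h\mid\mathcal{F}_t]=h\odot m_{\mathrm{top}\text{-}K}(h)$ holds trivially. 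This contrasts with the rescaling argument in Proposition~\ref{prop:random-unbiased}, where the expectation over an independent mask recovers $h$.

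\textbf{Step 2: the residual is nonzero.} The residual $h-\tilde h$ equals $h\odot(1-m_{\mathrm{top}\text{-}K}(h))$, i.e.\ the bottom $H-K$ entries of each row. Since $K<H$, this vanishes only if every row of $h$ has at most $K$ nonzero entries, which is a nongeneric event for real activations. I will also note that rescaling cannot fix this. No scalar $c$ gives $c\,h\odot m=h$ unless the row is supported on the top-$K$ set, so the bias is structural rather than a normalization issue.

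\textbf{Step 3: bias in the activation becomes bias in the gradient.} This is the main obstacle, because a nonzero activation bias does not automatically produce a nonzero gradient bias. My approach is to linearize the downstream map exactly as in Proposition~\ref{prop:random-unbiased}. Write $\ell(\tilde h)\approx\ell(h)+J_h(\tilde h-h)$, so the weight gradient through the boundary gets a first-order correction $-J\,(h\odot(1-m))$. This correction is $\mathcal{F}_t$-measurable and has zero conditional variance, so its conditional mean is itself. In addition, the backward activation gradient is masked by the same deterministic pattern, which zeroes a fixed, data-dependent set of coordinates of $\partial\mathcal{L}/\partial h$.

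To close the argument I would exhibit a generic instance where this correction is nonzero. The simplest is a single linear layer $W$ after the boundary with a quadratic loss. There the gradient with respect to $W$ is $(W\tilde h-y)\tilde h^\top$ instead of $(Wh-y)h^\top$, and the difference is a nonzero polynomial in $h$ whenever the dropped entries are nonzero. So for some $(w_t,\text{batch})$ we get $\mathbb{E}[\varepsilon_t\mid\mathcal{F}_t]\neq0$, and Assumption~\ref{assump:masked_error} fails.

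Finally, I will remark why this matters for the convergence proof. The cross term $\mathbb{E}\langle\nabla F(w_t),n_t\rangle$ in the proof of Theorem~\ref{thm:spectral_anchor} no longer vanishes. Instead it contributes a persistent term of size $\|\mathbb{E}[\varepsilon_t\mid\mathcal{F}_t]\|$, which $\mathcal{P}_t$ need not contract, because Lemma~\ref{lem:noise-contract} relies on isotropy and independence. This gives the non-vanishing floor claimed in Section~\ref{sec:pp-masking}.
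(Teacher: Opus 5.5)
Your proposal is correct. Steps~1--2 coincide with the paper's argument, which appears only inline in the statement (there is no separate proof): the top-$K$ mask is an $\mathcal{F}_t$-measurable function of $h$, so $\mathbb{E}[\tilde h\mid\mathcal{F}_t]=h\odot m_{\mathrm{top}\text{-}K}(h)$, and the bottom-$(H-K)$ residual is nonzero. The real difference is Step~3. The paper goes from this activation residual to $\mathbb{E}[\varepsilon_t\mid\mathcal{F}_t]\neq 0$ with a bare ``therefore''. You correctly flag that inference as the one that is not automatic and close it with an explicit instance, which turns an assertion into an actual proof of the existential claim.

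Two things to tighten. (i) Your linear-layer instance is computed exactly, and done exactly it also indicts random masking. With $\delta=\tilde h-h$, one has $(W\tilde h-y)\tilde h^\top-(Wh-y)h^\top=(Wh-y)\delta^\top+W\delta h^\top+W\delta\delta^\top$, and rescaled Bernoulli masking gives $\mathbb{E}[\delta\delta^\top]=\tfrac{p}{1-p}\mathrm{diag}(h\odot h)\neq 0$. To keep the contrast with Proposition~\ref{prop:random-unbiased} that you invoke in Step~1, state the separation at first order. The term $(Wh-y)\delta^\top+W\delta h^\top$ has zero conditional mean under random masking, but is deterministic and generically nonzero for $\delta=-h\odot(1-m_{\mathrm{top}\text{-}K}(h))$. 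Say ``generically'' rather than ``whenever the dropped entries are nonzero'': if $Wh=y$ and $W\delta=0$, the difference vanishes. (ii) Once the current minibatch is inside $\mathcal{F}_t$, $\mathbb{E}[\varepsilon_t\mid\mathcal{F}_t]$ also carries the sampling error between the unmasked minibatch gradient and $\nabla F(w_t)$. That error is nonzero even with no masking, so ``$\neq 0$ for some $(w_t,\text{batch})$'' does not by itself implicate top-$K$. Take the data distribution to be a point mass at $(h,y)$. Then $\nabla F(W)=(Wh-y)h^\top$, and the entire discrepancy is attributable to the mask.
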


\begin{remark}[Failure mode in the descent inequality]
\label{rem:topk-failure}
Suppose top-$K$ masking is used. Decomposing the masked gradient as
\(
g_t^{\mathrm{topK}} = \nabla F(w_t) - b_t^{\mathrm{topK}} + n_t',
\)
where $b_t^{\mathrm{topK}} = \mathbb{E}[\nabla F(w_t)-g_t^{\mathrm{topK}}\mid\mathcal{F}_t]$
is the \emph{state-dependent} bias and $n_t'$ is the residual stochastic
component, the inner-product term in the smoothness expansion becomes
\[
\mathbb{E}\langle\nabla F(w_t),u_t\rangle
=\|\nabla F(w_t)\|^2
-\langle\nabla F(w_t), b_t^{\mathrm{filter}}+\alpha b_t^{\mathrm{topK}}\rangle,
\]
where $b_t^{\mathrm{filter}}=(1-\alpha)(I-\mathcal{P}_t)\nabla F(w_t)$.
Unlike $b_t^{\mathrm{filter}}$, the bias $b_t^{\mathrm{topK}}$ is
\emph{not} controlled by Assumption~\ref{assump:signal}: it does not lie
in the principal subspace of $M_t^{\mathrm{anc}}$ in any structured way,
because top-$K$ selects on activation magnitude, not on the gradient
geometry the anchor circuit observes through unmasked passes. The
spectral filter $\mathcal{P}_t$ therefore cannot remove it, and the
bias propagates through the telescoping sum in the proof as an
\emph{additive non-vanishing term} of order
$\eta T\cdot\mathbb{E}\|b_t^{\mathrm{topK}}\|^2$. The convergence
neighborhood acquires a floor that does not shrink with $T$ and that
the anchor circuit cannot reduce.
\end{remark}

\paragraph{Why error feedback does not save top-$K$ at PP boundaries.}
In data-parallel gradient compression, the bias of top-$K$ is corrected
by error feedback \citep{karimireddy2019error, stich2018sparsified}: the
unsent residual is accumulated in a memory buffer and added to the
next step's gradient before compression. Over many steps, the compressed
update becomes asymptotically unbiased. This works in the DP setting
because the residual is associated with a \emph{parameter} and is
meaningful across optimizer steps.

At a PP activation boundary, the residual is associated with an
\emph{activation} on a particular microbatch. Two obstructions prevent
its reuse:

\emph{(i) Activations are step-local.} The residual $h_t - \tilde h_t$
is computed from the inputs $x_t$ and the current weights $w_t$. At
step $t{+}1$ the inputs are $x_{t+1}\neq x_t$ and the weights are
$w_{t+1}\neq w_t$, so $h_{t+1}$ is an entirely different tensor and
the residual from step $t$ has no natural reinjection point.

\emph{(ii) Forward/backward sparsity patterns coincide.}
Section~\ref{sec:pp-masking} shows that in-graph masking forces the
backward gradient to inherit the same sparsity pattern as the forward
activation. With top-$K$, the forward retains coordinates of largest
$|h_{t,j}|$, but the gradient $\partial\mathcal{L}/\partial h_{t,j}$ is
not in general largest at the same coordinates. Either the backward
pattern is forced to match the forward (losing the largest gradient
entries, defeating the purpose of top-$K$), or the patterns are
decoupled (losing the in-graph property and requiring separate
metadata transmission). Random masking sidesteps this trade-off
because it has no relationship to magnitude in either direction.

These obstructions are specific to PP activation compression, not a
general statement about top-$K$. Top-$K$ remains an effective DP
compressor, and indeed our system uses PowerSGD-style compression
along the DP axis (Section~\ref{sec:dp-compression-main}). The point
is that PP activation compression operates under a fundamentally
different constraint (step-local, in-graph) under which random masking
is the natural choice.

\subsubsection*{Compatibility with the spectral filter}

Beyond unbiasedness, the spectral anchor filter denoises masked
gradients only when the noise is \emph{approximately isotropic relative
to the anchor basis} (Lemma~\ref{lem:noise-contract}). The two schemes
differ sharply on this criterion.

\begin{proposition}[Approximate isotropy of random-mask noise]
\label{prop:random-isotropic}
Under random masking with retention probability $1-p$ and standard
rescaling, the per-coordinate noise satisfies
\[
\mathbb{E}[\varepsilon_{t,j}\mid \mathcal{F}_t]=0,
\qquad
\mathrm{Var}(\varepsilon_{t,j}\mid \mathcal{F}_t)=\frac{p}{1-p}\,h_{t,j}^2,
\]
where the conditioning on $\mathcal{F}_t$ subsumes conditioning on the
current activation $h$ (which is $\mathcal{F}_t$-measurable through
$w_t$ and the inputs), and the entries are independent across $j$. If
activations across the hidden dimension have approximately uniform
second moment, $\varepsilon_t$ is approximately matrix-isotropic in the
sense of Lemma~\ref{lem:noise-contract}, and the spectral filter
contracts noise by a factor $\rho = O(r^2/(mn))$ where $r$ is the
effective rank of the filter.
\end{proposition}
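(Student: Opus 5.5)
The plan is to work coordinatewise on the rescaled masked activation and then push the result through Lemma~\ref{lem:noise-contract}. Fix the history $\mathcal{F}_t$. Then $h$ is deterministic: it is a function of $w_t$ and the inputs. The mask entries $m_{t,j}$ are independent $\mathrm{Bernoulli}(1-p)$ draws from a PRF whose seed does not depend on $\mathcal{F}_t$. I would define the per-coordinate activation noise as $\varepsilon_{t,j} := \tilde h_{t,j}-h_{t,j} = h_{t,j}\bigl(m_{t,j}/(1-p)-1\bigr)$, so it is a scaled centered Bernoulli variable.

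The first two moments then follow directly. The mean is $\mathbb{E}[m_{t,j}/(1-p)-1]=0$, as already shown in Proposition~\ref{prop:random-unbiased}. For the variance, $\mathrm{Var}(m_{t,j}/(1-p))=p(1-p)/(1-p)^2=p/(1-p)$, which gives $\mathrm{Var}(\varepsilon_{t,j}\mid\mathcal{F}_t)=\tfrac{p}{1-p}h_{t,j}^2$. Independence across $j$ is inherited from the independence of the mask draws, since each $\varepsilon_{t,j}$ is a measurable function of a single $m_{t,j}$ once we condition on $\mathcal{F}_t$.

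Next I would move from activation noise to parameter-gradient noise. For a weight matrix $W$ acting on the masked boundary, the first-order gradient is a linear map of the activation: $\nabla_W$ involves $\sum_t \delta_t \tilde h_t^\top$, where $\delta_t$ is the backpropagated error. Its noise is therefore $\sum_t \delta_t \varepsilon_t^\top$, and its covariance is a sum of rank-one Kronecker terms weighted by $\tfrac{p}{1-p}\,\mathrm{diag}(h_t^2)$. Under the hypothesis of approximately uniform second moments, $h_{t,j}^2\approx \bar h^2$, so the $\mathrm{diag}(h_t^2)$ factor is $\approx \bar h^2 I$. If the error vectors are also spread roughly uniformly, the vectorized covariance is $\approx (\sigma_m^2/(mn)) I_{mn}$. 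This is matrix-isotropy up to a constant factor.

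Lemma~\ref{lem:noise-contract} then gives $\mathbb{E}\|\mathcal{P}_t\varepsilon_t\|_F^2=\sigma_m^2 r^U_{\mathrm{eff}} r^V_{\mathrm{eff}}/(mn)$. This requires $\mathcal{P}_t$ to be independent of $\varepsilon_t$, which holds because $\mathcal{P}_t\in\mathcal{F}_t$ while the mask is fresh. Where isotropy holds only approximately, the remark after Lemma~\ref{lem:noise-contract} handles the deviation: the bound gains a factor equal to the ratio of the largest covariance eigenvalue to the mean eigenvalue, and this ratio is $O(1)$ under the uniform-second-moment hypothesis. Together these give $\rho=O(r^2/(mn))$.

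The main obstacle is the transfer from activation-level isotropy to gradient-level isotropy. The error vectors $\delta_t$ are themselves altered by downstream masks, and they are not isotropic in general. I would first make the statement hold at the activation level, where it is exact. I would then state gradient isotropy as a first-order approximation, in the same "to leading order" sense used in Proposition~\ref{prop:random-unbiased}, absorbing the anisotropy of $\delta_t$ into the covariance-eigenvalue factor rather than proving it vanishes.
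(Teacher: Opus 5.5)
The paper gives no separate proof of this proposition. It states these Bernoulli moments and then appeals directly to Lemma~\ref{lem:noise-contract} in the ``Implication for the filter'' paragraph, tacitly identifying the activation noise $\tilde h-h$ with the gradient noise $\varepsilon_t\in\mathbb{R}^{m\times n}$ of the lemma. Your per-coordinate computation of the mean, the variance $\tfrac{p}{1-p}h_{t,j}^2$ and independence across $j$ is correct and matches what the paper relies on. (Strictly, with several masked boundaries per step, $h$ at a deeper boundary depends on the masks drawn at shallower ones, so condition on those too; the current mask is independent of them and your computation is unchanged.)

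You rightly isolate the transfer to gradient noise as the real step, but your way of closing it fails. Take your linearization $N=\sum_t\delta_t\varepsilon_t^\top$, with $\varepsilon_t$ the activation noise of token $t$. The uniform-second-moment hypothesis gives $\mathrm{Cov}(\mathrm{vec}\,N)\approx\tfrac{p}{1-p}\,I_n\otimes C_L$ with $C_L=\tfrac1n\sum_t\|h_t\|^2\delta_t\delta_t^\top$. The lemma's trace computation then yields
\[
\rho=\frac{r^V_{\mathrm{eff}}}{n}\cdot\frac{\mathrm{tr}(P_U^2C_L)}{\mathrm{tr}(C_L)}.
\]
The hypothesis makes only the input-side Kronecker factor isotropic. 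The output-side factor $C_L$ is unconstrained, and it is supported on $\mathrm{span}\{\delta_t\}$. That span contains the column space of the true gradient $\sum_t\delta_t h_t^\top$, which is exactly the subspace $U_t$ is built to track. If the errors $\delta_t$ concentrate in a $k$-dimensional subspace (a natural source of the low-rank premise), two things follow. The max-to-mean eigenvalue ratio you borrow from the remark after Lemma~\ref{lem:noise-contract} is of order $m/k$, not $O(1)$. The second factor above is of order $r/k$ rather than $r/m$. So your claim that this ratio is $O(1)$ under the uniform-second-moment hypothesis is false. Absorbing the $\delta_t$-anisotropy into it turns $O(r^2/(mn))$ into roughly $O(r^2/(kn))$, i.e.\ only one-sided contraction $O(r/n)$ when $k\sim r$.

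The transfer is also weaker than your sketch suggests, for two reasons.
\begin{itemize}
\item The gradient $\sum_t\delta_t\tilde h_t^\top$ is not linear in $\tilde h$, because $\delta_t$ itself depends on the masked forward pass. To first order the noise is $\sum_t\delta_t\varepsilon_t^\top+\sum_t(J_t\varepsilon)\,h_t^\top$ with $J_t=\partial\delta_t/\partial\tilde h$. The second term has row space in $\mathrm{span}\{h_t\}$, aligned with $V_t$, so even the right-hand isotropy is not exact.
\item For weights upstream of a masked boundary, the noise enters through the masked backward signal. The relevant hypothesis would then concern second moments of $\partial\mathcal{L}/\partial\tilde h$, not of $h$.
\end{itemize}
What your route honestly establishes is the exact conditional moments and independence. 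Beyond that, it gives $\rho=O(r^2/(mn))$ only under an explicitly added assumption that the gradient-noise covariance is near-isotropic on both sides, in particular $C_L\approx(\mathrm{tr}\,C_L/m)\,I_m$ (your ``error vectors spread roughly uniformly''). State that assumption outright rather than try to absorb its failure. It is not implied by the proposition's hypothesis, the paper does not supply it, and it sits in tension with the low-rank-gradient premise the filter relies on.
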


\begin{proposition}[Anti-isotropy of top-$K$ noise]
\label{prop:topk-anisotropic}
Under top-$K$ masking, the residual $h - \tilde h$ has support exactly
on the bottom-$(H{-}K)$ coordinates of $h$ in magnitude. Therefore the
noise is concentrated in low-magnitude directions, anti-correlated with
the signal, and \emph{not} isotropic. The covariance
$\mathrm{Cov}(\varepsilon_t)$ has rank $H-K$ and is supported on a
subspace orthogonal to the dominant activation directions.
\end{proposition}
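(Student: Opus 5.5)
The plan is to work directly from the definition of top-$K$ masking as a deterministic function of $h$. Conditional on $\mathcal{F}_t$, which together with the current microbatch determines $h$, the mask $m=m_{\mathrm{top}\text{-}K}(h)$ is fixed. The activation-level error $e := h-\tilde h = h\odot(\mathbf{1}-m)$ is therefore a deterministic vector. Its support is exactly the index set $B(h)$ of the bottom-$(H{-}K)$ magnitudes in each row. This gives the first claim immediately. I will assume ties have measure zero, or else are broken by a fixed rule, so that $B(h)$ is well defined.

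For the covariance claims, I would separate two sources of randomness. The first is the randomness of the microbatch, which enters through $h$, with $w_t$ held fixed. Over this randomness, write $P_{B}$ for the coordinate projector onto $B(h)$. Then $e = P_{B(h)}h$, so every realization of $e$ lies in $\mathrm{range}(P_{B(h)})$. Summing outer products of vectors in this range gives $\mathrm{Cov}(e)\preceq$ a matrix supported on that coordinate subspace, whose dimension is $H-K$ per token. The statement fixes the support, and hence the rank, at $H-K$. To make this literal, I would condition on the support pattern $B$, or equivalently take the covariance per row given $B$. In that case $\mathrm{Cov}(e\mid B)=P_B\,\mathrm{Cov}(h\mid B)\,P_B$, which has rank at most $H-K$. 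It equals $H-K$ as long as the restricted covariance $\mathrm{Cov}(h_B\mid B)$ is nondegenerate, and that is generic.

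For orthogonality to the dominant directions, observe that the retained coordinates $T(h)=B(h)^c$ carry the $K$ largest magnitudes by construction. So $\|P_{T}h\|^2\ge \tfrac{K}{H}\|h\|^2$, and the signal energy is concentrated on $T$. Since $P_T P_B=0$, the noise support is orthogonal, coordinatewise, to the retained high-magnitude subspace. This is the sense of "anti-correlated with the signal": $\langle e,\tilde h\rangle = \langle P_B h, P_T h\rangle = 0$ deterministically, and $|e_j|\le \min_{i\in T}|h_i|$ for every $j$.

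To carry this from activations to $\varepsilon_t$ in parameter space, I would linearize the downstream map at the boundary. This is the same first-order argument used in Proposition~\ref{prop:random-unbiased}, with $\varepsilon_t\approx J_t e$ for the Jacobian $J_t$ from the masked activation to the weight gradient. Then $\mathrm{Cov}(\varepsilon_t)\approx J_t\,\mathrm{Cov}(e)\,J_t^\top$, whose rank is at most $H-K$ per boundary token. The contrast with Lemma~\ref{lem:noise-contract} follows: that lemma needs covariance $\propto I_{mn}$, whereas here the covariance is highly degenerate, so the isotropy hypothesis fails.

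The main obstacle is stating the covariance and orthogonality claims rigorously. Because $B(h)$ is itself random, the unconditional covariance can have full rank, being a mixture over support patterns. I would resolve this by stating the rank and support claims conditionally on the support pattern, or per realization. The orthogonality claim would be phrased as the exact identity $\langle e,\tilde h\rangle=0$ together with the magnitude ordering, rather than as an eigenspace statement about the unconditional covariance.
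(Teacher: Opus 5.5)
The paper gives no proof of this proposition beyond the ``therefore'' after the support observation, and your activation-level argument is that same idea made precise. Conditioning on $\mathcal{F}_t$ \emph{and} the microbatch freezes the mask, so $e=P_{B(h)}h$, $\langle e,\tilde h\rangle=0$, and $\mathrm{Cov}(e\mid B)=P_B\,\mathrm{Cov}(h\mid B)\,P_B$. Your caveat that the unconditional covariance is a mixture over support patterns, and can be full rank, is also correct. If $\varepsilon_t$ is read as the activation-level residual, as the paper does in Proposition~\ref{prop:random-isotropic} (where $\mathrm{Var}(\varepsilon_{t,j}\mid\mathcal{F}_t)=\tfrac{p}{1-p}h_{t,j}^2$), this is all you need. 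The rank-$(H-K)$ claim should then be stated per token and conditionally on $B$, as you propose.

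The step that fails is the transfer to weight-gradient noise, $\mathrm{Cov}(\varepsilon_t)\approx J_t\,\mathrm{Cov}(e)\,J_t^\top$ with rank at most $H-K$. This holds $J_t$ fixed while $e$ varies, but both are functions of the same microbatch. Conditioning on the microbatch to freeze $J_t$ freezes $e$ as well. Without that conditioning, the covariance of $J_te$ is not $J\,\mathrm{Cov}(e)\,J^\top$ for any fixed $J$, and its rank is not controlled by $\mathrm{rank}\,\mathrm{Cov}(e)$.

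Concretely, suppose the boundary activation feeds $z=W\tilde h$ with $W\in\mathbb{R}^{m\times H}$ and $\delta=\partial\ell/\partial z$. The per-token error is
\[
\delta\tilde h^\top-\delta^0h^\top=-\delta\,e^\top+(\delta-\delta^0)\,h^\top .
\]
\begin{itemize}
\item Even with $B$ fixed, the first term fills all $m$ rows of the $H-K$ columns in $B$, so its covariance can have rank up to $m(H-K)$.
\item The second term has rows proportional to $h^\top$, so it is nonzero on the retained coordinates $T$ too.
\item Summing over tokens with different $B$ removes the column restriction entirely.
\item For weights upstream of the boundary, the in-graph mask also drops $(\partial\mathcal{L}/\partial\tilde h)\odot(\mathbf{1}-m)$ from the backward signal. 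This is a zeroth-order error, not of the form $J_te$.
\end{itemize}
So neither the rank bound nor the orthogonality carries over to the weight-gradient noise.

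If you want non-isotropy in the sense of Lemma~\ref{lem:noise-contract}, get it without linearizing. Given $\mathcal{F}_t$ and the microbatch, top-$K$ leaves no mask randomness. The conditional second-moment matrix of $\mathrm{vec}(\varepsilon_t)$ is then $\mathrm{vec}(\varepsilon_t)\mathrm{vec}(\varepsilon_t)^\top$, which has rank at most one. It equals $(\sigma_m^2/(mn))I_{mn}$ only if $\varepsilon_t=0$, so the error is a deterministic bias (cf.\ Proposition~\ref{prop:topk-biased}).

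Two smaller slips:
\begin{itemize}
\item $\|P_Th\|^2\ge\tfrac{K}{H}\|h\|^2$ does not show that energy is concentrated on $T$. At $K/H=0.05$ it allows up to $95\%$ of the energy in $e$; only the coordinatewise bound $|e_j|\le\min_{i\in T}|h_i|$ holds.
\item The identity that matches ``anti-correlated'' is $\langle\tilde h-h,h\rangle=-\|e\|^2\le 0$. Your $\langle e,\tilde h\rangle=0$ expresses orthogonality, not anti-correlation.
\end{itemize}
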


\paragraph{Implication for the filter.} The anchor momentum
$M_t^{\mathrm{anc}}$ is built from \emph{unmasked} gradients, which
reflect the dominant gradient geometry of the task. In fine-tuning,
this geometry is low-rank and aligned with the high-magnitude directions
of activations and of $\partial\mathcal{L}/\partial h$
(by the intrinsic-dimensionality literature cited in
Section~\ref{sec:assump-justification}). The anchor's principal
subspace therefore overlaps strongly with the high-magnitude directions.

Under random masking, the noise $\varepsilon_t$ is isotropic and the
filter contracts it: low-rank $\mathcal{P}_t$ acting on isotropic noise
gives $\rho = O(r^2/(mn))$ as in
Lemma~\ref{lem:noise-contract}. Under top-$K$, the noise lives precisely
in the \emph{complementary} subspace --- the low-magnitude directions
the anchor basis does \emph{not} span. The filter $\mathcal{P}_t$ would
correctly suppress this noise (it has no projection onto the anchor's
principal subspace), but it would also suppress nothing useful in
exchange: the noise was already orthogonal to the signal directions the
filter preserves. The ``denoising'' effect would be illusory, \textit{i.e.,} the
filter cannot remove noise it never sees, and the masked-step bias
$b_t^{\mathrm{topK}}$ propagates through unaffected.

\subsubsection*{Summary}

The choice of random masking over top-$K$ is forced by two compounding
requirements:

\begin{enumerate}
  \item \emph{Unbiasedness without error feedback.} Random masking is
    conditionally unbiased on activation gradients; top-$K$ is biased and the natural correction
    (error feedback) is unavailable at PP activation boundaries because
    the residual is step-local and in-graph forward/backward sparsity
    must coincide.
  \item \emph{Isotropy relative to the anchor basis.} The spectral
    anchor filter denoises isotropic noise efficiently
    (Lemma~\ref{lem:noise-contract}) but cannot interact usefully with
    the structured low-magnitude noise produced by top-$K$.
\end{enumerate}

Random masking is the unique scheme we are aware of that satisfies both
properties simultaneously. The convergence analysis of
Section~\ref{app:convergence} relies on the first; the empirical
effectiveness of the spectral filter relies on the second. Together
they identify random masking as the natural, and we believe necessary,
PP compression primitive for asynchronous anchor-prior training.

\section{Alignment Trajectory}
Figure~\ref{fig:alignment_trajectory} reports the alignment
$\eta_r = \|U^\top g\, V\|_F^2 / \|g\|_F^2$ of the unmasked gradient
with the rank-$r$ principal subspace of the anchor momentum, at $61$
snapshots spanning $1500$ training steps, for $r\in\{16, 64, 256\}$.
The FT trajectory, starting from the uncompressed code-domain
checkpoint and continuing under unclipped Adam, keeps
$\eta_{64}^{\rm FT}$ stable around $0.17$--$0.19$ throughout the
horizon. The PT trajectory, starting from random init under the same
data and optimizer, drops from $\eta_{64}^{\rm PT} \approx 0.13$
early in training to $\sim 0.08$ in the latter half and continues to
decay. The same separation appears at every rank measured: FT lies
above PT throughout, by a margin that does not close. At
$r\in\{16,64\}$ the FT/PT ratio holds in the
$\mathbf{1.7}$--$\mathbf{2.7\times}$ range over the second half of
training. This is the empirical content of
Assumption~\ref{assump:signal}: the rank-$r$ principal subspace of
$M^{\rm anc}$ remains a faithful summary of $\nabla F(w_t)$ for
fine-tuning, and \emph{ceases to be one} as random-init training
proceeds. The constant $\kappa$ that bounds out-of-subspace bias is
materially smaller in fine-tuning than in pretraining-from-random-init,
and that gap is sustained across the entire training horizon the AP
correction has to operate over --- empirically validating the
assumption justifications in Section~\ref{sec:assump-justification}
and the no-benefit-in-pretraining caveat that follows from them.

\begin{figure}[t]
\centering
\includegraphics[width=0.75\linewidth]{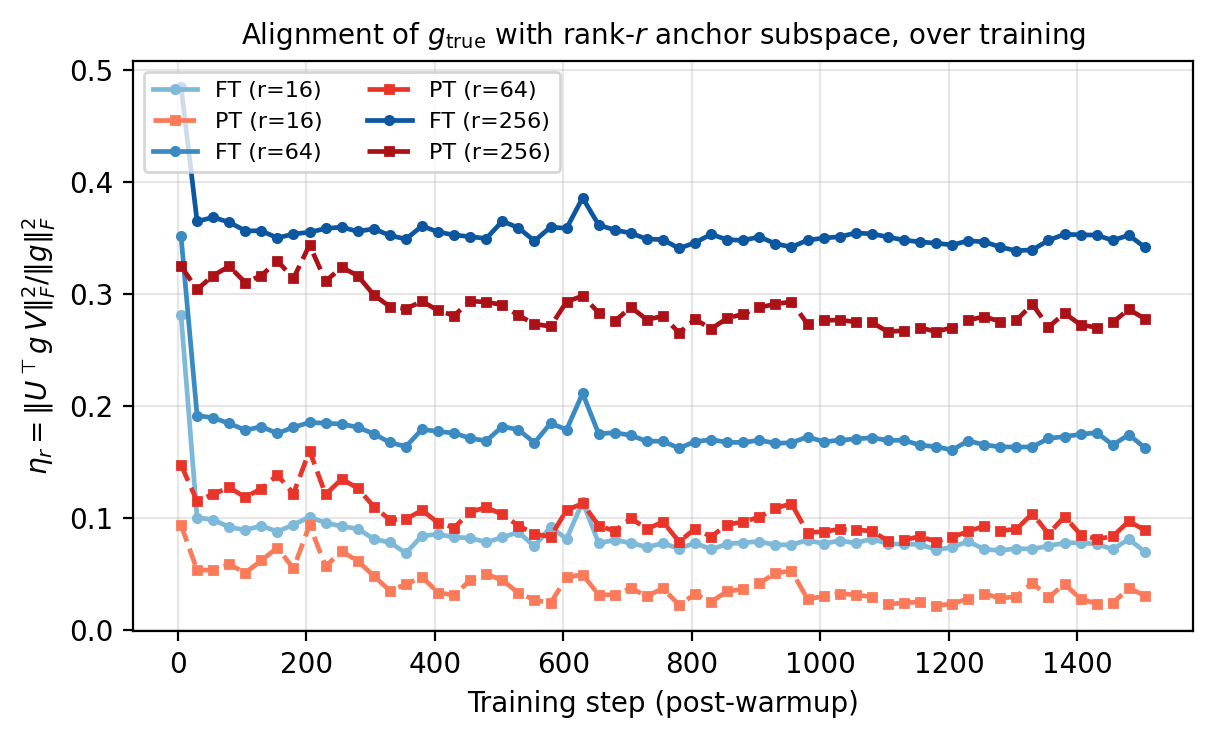}
\caption{\textbf{Alignment of $\nabla F(w_t)$ with the rank-$r$
principal subspace of $M_t^{\rm anc}$, over $1500$ training steps.}
$\eta_r = \|U^\top g\, V\|_F^2 / \|g\|_F^2$ at $61$ snapshots, for
$r\in\{16,64,256\}$ (FT solid, PT dashed). FT stays stable around
$\eta_{64}^{\rm FT}\approx 0.17$ throughout; PT drifts down to
$\sim 0.08$. The separation persists at every rank, with FT/PT
ratio holding in the $1.7$--$2.7\times$ range at the spectral
filter's operating ranks.}
\label{fig:alignment_trajectory}
\end{figure}

\section{Staleness of the Anchor Prior}
\label{app:staleness}

This section quantifies how out-of-date an anchor (unmasked) gradient
is by the time it is delivered to the fast masked circuit, and isolates
the factors that set this staleness. The analysis is instantiated for
the configuration used in our throughput study
(Section~\ref{sec:dp-compression-main}): a $1.2$B-parameter model in
BF16 over $8$ pipeline-parallel stages and $8$ data-parallel replicas,
connected by $200$~Mbps point-to-point links ($\approx 25$~MB/s of
useful payload). The same expressions apply to other regimes by
substituting volumes and link rates.

\subsection{Anchor circuit cycle time}
\label{app:staleness-cycle}

Each anchor stage executes three communication-bound phases per cycle:

\begin{enumerate}
  \item \textbf{Weight pull.} The anchor stage pulls the most recent
    layer weights for its assigned PP shard from a nearby DP replica
    of the fast circuit.
  \item \textbf{Unmasked forward--backward.} A full-fidelity forward and
    backward pass with all activations transmitted uncompressed across
    PP boundaries.
  \item \textbf{Gradient push.} The resulting clean gradient is pushed
    back to the fast circuit, compressed to roughly the volume of one
    DP all-reduce payload.
\end{enumerate}

Let $T_{\mathrm{pull}}$, $T_{\mathrm{clean}}$, $T_{\mathrm{push}}$
denote the wall-clock times of these three phases, and $T_{\mathrm{mask}}$
denote one masked step on the fast circuit. The anchor gradient
delivered at the end of the cycle was computed from weights pulled at
the start of phase~2, so its staleness measured in fast-circuit steps is
\begin{equation}
  \label{eq:staleness}
  K_{\mathrm{stale}}
  \;=\;
  \frac{T_{\mathrm{pull}} + T_{\mathrm{clean}} + T_{\mathrm{push}}}{T_{\mathrm{mask}}}.
\end{equation}
The numerator is the anchor cycle time; the denominator is the
fast-circuit cycle time.

\subsection{Numbers for the 200~Mbps configuration}
\label{app:staleness-numbers}

We instantiate Eq.~\eqref{eq:staleness} for sequence length $S{=}2048$,
hidden size $H{=}2048$, and $16$ microbatches per optimizer step,
with DP communication compressed via PowerSGD ($64\times$) and
amortized via DiLoCo with local horizon $K{=}10$.

\paragraph{Weight pull.}
Each anchor stage holds $1.2\mathrm{B}/8 = 150$M parameters. In BF16
this is $300$~MB per stage:
\(
T_{\mathrm{pull}} \approx 300~\mathrm{MB} / 25~\mathrm{MB/s} = 12~\mathrm{s}.
\)

\paragraph{Unmasked forward--backward.}
At each PP boundary, the activation tensor per microbatch is
$S\!\cdot\!H\!\cdot\!2 = 8$~MB. With $7$ boundaries, $16$ microbatches,
and forward and backward traversals, the total unmasked PP volume per
step is $2 \cdot 7 \cdot 16 \cdot 8~\mathrm{MB} \approx 1.8$~GB:
\(
T_{\mathrm{clean}} \approx 1.8~\mathrm{GB} / 25~\mathrm{MB/s} \approx 72~\mathrm{s}.
\)

\paragraph{Gradient push.}
The compressed anchor gradient is on the order of one PowerSGD-compressed
all-reduce payload, $\sim 30$--$40$~MB per stage, giving
$T_{\mathrm{push}} \approx 1$--$2$~s.

\paragraph{Masked-step time.}
The masked step incurs $1{-}p = 5\%$ of the unmasked PP volume, plus
the compressed and amortized DP all-reduce, plus a small compute term
that is dominated by communication in this regime:
\[
  T_{\mathrm{mask}}
  \;\approx\;
  \frac{T_{\mathrm{clean}}}{20}
  \;+\;
  \frac{N \cdot b_{\mathrm{grad}} / 64}{B \cdot K}
  \;+\;
  \epsilon
  \;\approx\;
  3.6 + 0.15 + \epsilon
  \;\approx\;
  3.8~\mathrm{s}.
\]

\paragraph{Resulting staleness.}
\(
K_{\mathrm{stale}} \approx (12 + 72 + 2) / 3.8 \approx 23
\)
masked steps. Of these, $\sim 20$ steps come from the unmasked PP pass
itself ($T_{\mathrm{clean}}/T_{\mathrm{mask}}$, which equals the
masking ratio $1/(1-p)=20$), and $\sim 3$ come from the weight pull. This matches with the $20-25$ step delay we practically observed.

\subsection{Decomposition}
\label{app:staleness-decomp}

Eq.~\eqref{eq:staleness} separates into two terms with distinct scaling:
\begin{equation}
  \label{eq:staleness-decomp}
  K_{\mathrm{stale}}
  \;=\;
  \underbrace{\frac{T_{\mathrm{clean}}}{T_{\mathrm{mask}}}}_{\text{PP-ratio}}
  \;+\;
  \underbrace{\frac{T_{\mathrm{pull}} + T_{\mathrm{push}}}{T_{\mathrm{mask}}}}_{\text{weight-refresh}}.
\end{equation}
The PP-ratio term equals the activation-mask ratio $1/(1-p)$ in the
PP-bound regime: $20$ at $95\%$ masking, $10$ at $90\%$, $100$ at
$99\%$. It is unaffected by DP compression because both circuits run
the same forward--backward computation and differ only in PP payload
size. The weight-refresh term is set by the cost of synchronizing the
anchor's weights with the fast circuit; at $300$~MB per stage and
$25$~MB/s, it contributes $\sim 3$ steps. The PP-ratio term dominates
in this regime.

\subsection{Hiding the weight-refresh term via overlap}
\label{app:staleness-overlap}

The weight-refresh term is hideable. If the anchor circuit pipelines
the weight pull for cycle $i{+}1$ behind the unmasked forward--backward
of cycle $i$,
\[
  T_{\mathrm{cycle}}^{\mathrm{overlap}}
  \;=\;
  \max\bigl(T_{\mathrm{pull}},\; T_{\mathrm{clean}} + T_{\mathrm{push}}\bigr)
  \;\approx\;
  T_{\mathrm{clean}}
\]
since $T_{\mathrm{pull}} \ll T_{\mathrm{clean}}$. Staleness then
reduces to the PP-ratio term alone:
\[
  K_{\mathrm{stale}}^{\mathrm{overlap}}
  \;\approx\;
  \frac{T_{\mathrm{clean}}}{T_{\mathrm{mask}}}
  \;\approx\;
  20~\text{masked steps},
\]
matching the cadence of a synchronous formulation that takes one full
unmasked update every $20$ masked steps. The asynchronous variant
hides this latency inside the fast pipeline rather than blocking on it.

\subsection{Sensitivity to other regimes}
\label{app:staleness-sensitivity}

Equation~\eqref{eq:staleness-decomp} makes the regime dependence of
staleness explicit:

\paragraph{Mask rate.}
Increasing the mask rate $p$ shrinks $T_{\mathrm{mask}}$ but leaves
$T_{\mathrm{clean}}$ unchanged. The PP-ratio term grows as $1/(1-p)$:
$K_{\mathrm{stale}} \approx 100$ at $p=99\%$, and $\approx 200$ at
$p=99.5\%$. The anchor prior remains useful only when the optimization
landscape changes slowly relative to this horizon.

\paragraph{Link bandwidth.}
The PP-ratio term is invariant under uniform bandwidth scaling, since
all communication phases share the same link. Halving the bandwidth
to $100$~Mbps leaves $K_{\mathrm{stale}}$ in masked-step units at
$\sim 23$, but doubles wall-clock cycle time.

\paragraph{Concurrent anchor cycles.}
Running the anchor circuit at concurrency $c$ increases the delivered
prior rate by $c\times$, at the cost of stretching individual prior
staleness to $c \cdot K_{\mathrm{stale}}$. This trades fidelity for
delivery rate.

\paragraph{Weaker DP compression.}
Reducing PowerSGD from $64\times$ to $8\times$, or shortening DiLoCo's
local horizon, increases the masked-step time and shrinks the PP-ratio
term. Staleness $K_{\mathrm{stale}}$ decreases, but only because the
fast circuit slows down. Staleness must therefore be read together
with throughput.

\paragraph{High-bandwidth (compute-bound) regime.}
When links are fast enough that $T_{\mathrm{mask}}$ is compute-bound,
$T_{\mathrm{clean}}/T_{\mathrm{mask}}$ approaches~$1$. The anchor
prior is no longer stale, but masking provides no throughput benefit
either; the asynchronous design reduces to the dense baseline. Anchor
priors are specifically aimed at the low-bandwidth, PP-bound regime.

\subsection{Implications for design choices}
\label{app:staleness-implications}

The staleness analysis shapes two design choices in
Section~\ref{sec:method}. First, the anchor momentum $M_t^{\mathrm{anc}}$
is updated with a slow EMA decay $\beta_{\mathrm{anc}}$ rather than
replaced by each arriving anchor gradient: a single anchor gradient is
stale by $\sim 20$ steps and noisy, but its principal directions
remain informative when averaged over multiple cycles. Second, the
spectral correction reweights the masked gradient toward
anchor-supported eigendirections rather than substituting the anchor
gradient for the masked one --- since the anchor gradient is too stale
to serve as a direct update signal, only its low-rank geometry is
trustworthy at the $\sim 20$-step horizon.

\section{Training and Evaluation Datasets}
\label{app:datasets}

This section details the training data, evaluation data, and shared
training configuration used to produce the per-domain results in
Table~\ref{tab:domain_main}. 

\subsection{Training datasets}
\label{app:train-datasets}

All training data is converted to a chat-message format
(\texttt{messages: [\{user\}, \{assistant\}]}) with thinking disabled,
then split 90/10 into train/validation unless the source provides a
held-out split (GSM8K, Spider). Table~\ref{tab:train-datasets} lists
the source dataset and post-preprocessing row count per domain.

\begin{table}[ht]
\centering
\small
\setlength{\tabcolsep}{4pt}
\renewcommand{\arraystretch}{1.15}
\begin{tabularx}{\linewidth}{@{}l >{\raggedright\arraybackslash}p{0.40\linewidth} r >{\raggedright\arraybackslash}X@{}}
\toprule
\textbf{Domain} & \textbf{HuggingFace dataset} & \textbf{\# train} & \textbf{Notes} \\
\midrule
Medical       & \path{medalpaca/medical_meadow_medqa}            & 9{,}160   & USMLE-style multiple choice. \\
Code          & \path{iamtarun/python_code_instructions_18k_alpaca} & 16{,}750  & Python instruction $\to$ code pairs. \\
Math          & \path{openai/gsm8k} (\path{main})                & 7{,}473   & Official GSM8K test used as validation. \\
Science       & \path{allenai/sciq}                                & 10{,}511  & Question $+$ supporting passage. \\
Commonsense   & \path{Rowan/hellaswag} $\cup$ \path{allenai/winogrande} (\path{xl}) & 72{,}272 & Concatenated, shuffled with seed $42$. \\
Summarization & \path{cnn_dailymail} (\path{3.0.0})             & 27{,}000  & $30$k subsample, $90/10$ split. \\
SQL           & \path{philikai/Spider-SQL-LLAMA2_train}           & 8{,}659   & Schema-conditioned prompts. \\
QuALITY       & \path{emozilla/quality}                            & 2{,}523   & Long-context multiple choice. \\
NarrativeQA   & \path{deepmind/narrativeqa}                        & 6{,}500   & Story-summary open-ended QA. \\
General       & \path{HuggingFaceTB/smoltalk2}                     & ---       & 25+ SFT splits, weighted; used only for the general run. \\
\bottomrule
\end{tabularx}
\caption{Per-domain training datasets after preprocessing and the
$90/10$ train/validation split.}
\label{tab:train-datasets}
\end{table}

\subsection{Evaluation datasets and metrics}
\label{app:eval-datasets}

Each domain run is evaluated on a fixed common suite plus one or more
domain-specific tasks. The common suite consists of loglikelihood
tasks (WinoGrande, PIQA, HellaSwag, ARC-Easy, ARC-Challenge) and
generative tasks (IFEval, GSM8K), all run via the in-training
\texttt{eval\_callback}. Domain-specific evaluations are summarized in
Table~\ref{tab:eval-datasets}; tasks marked \emph{lm-eval} are run via
the \texttt{lm-evaluation-harness} \citep{eval-harness}, and custom
tasks are scored by dedicated scripts (Spider execution match,
NarrativeQA token F1, QuALITY accuracy).

\begin{table}[ht]
\centering
\small
\setlength{\tabcolsep}{4pt}
\renewcommand{\arraystretch}{1.15}
\begin{tabular}{l l l r l}
\toprule
\textbf{Domain} & \textbf{Eval dataset} & \textbf{Metric} & \textbf{\# items} & \textbf{Source} \\
\midrule
Medical       & MedQA (4-options)    & Accuracy        & 1{,}273  & lm-eval \\
Medical       & MedMCQA              & Accuracy        & ---      & lm-eval \\
Code          & HumanEval            & pass@1          & 164      & lm-eval \\
Code          & MBPP                 & pass@1          & 500      & lm-eval \\
SQL           & Spider (dev)         & Execution match & 1{,}034  & Custom \\
Science       & ARC-Challenge        & Acc.\ (norm)    & 1{,}172  & lm-eval \\
Science       & SciQ                 & Accuracy        & 1{,}000  & lm-eval \\
Commonsense   & HellaSwag            & Acc.\ (norm)    & 10{,}042 & lm-eval \\
Commonsense   & WinoGrande           & Accuracy        & 1{,}267  & lm-eval \\
Summarization & CNN/DailyMail        & ROUGE-1         & ---      & Custom \\
Summarization & XSum                 & ROUGE-1         & ---      & Custom \\
Math          & GSM8K                & Exact match     & 1{,}319  & lm-eval \\
Doc.\ Parsing & QuALITY              & Accuracy        & 2{,}086  & Custom \\
Doc.\ Parsing & NarrativeQA          & Mean token F1   & 500      & Custom \\
\bottomrule
\end{tabular}
\caption{Per-domain evaluation datasets.}
\label{tab:eval-datasets}
\end{table}

\subsection{Shared training configuration}
\label{app:train-config}

All baseline and masked runs share the configuration.

\paragraph{Model.}
SmolLM3-Mid (1.2B parameters), BF16 weights and activations,
FlashAttention~2, SmolLM3 chat template with thinking disabled.

\paragraph{Optimizer.}
AdamW with $(\beta_1,\beta_2)=(0.9,\,0.999)$. Peak learning rate
$2\times 10^{-5}$, cosine schedule with min-LR ratio $0.1$ and warmup
ratio $0.03$. Gradient norm clipped at $0.2$.

\paragraph{Sequence handling.}
Best-fit-decreasing packing enabled, except for short-context tasks
(Spider, NarrativeQA, QuALITY) where packing is disabled. Maximum
sequence length is task-dependent: $32{,}768$ for
code/medical/science/commonsense/summarization/math, $2{,}048$ for
SQL and NarrativeQA, $10{,}240$ for QuALITY, and $65{,}536$ for the
general mixture. Loss is computed over assistant tokens only; Liger
kernels are enabled.

\section{Squeezing Throughput from Data-Parallel Communication}
\label{sec:dp-compression}

Activation compression reduces the per-step cost of pipeline-parallel
(PP) communication, but the throughput benefit is realized only if the
\emph{data-parallel} (DP) gradient all-reduce is reduced to a
comparable level. Otherwise the DP all-reduce dominates wall-clock
time and the activation savings are invisible. This section
formalizes the trade-off, surveys the families of DP-compression
techniques compatible with our setting, and identifies the operating
points at which each becomes necessary.

\subsection{Bottleneck analysis}
\label{subsec:bottleneck}

Consider a model with $N$ parameters trained over $K$-stage pipeline
parallelism and $D$-way data parallelism, with all communication
crossing a network of bandwidth $B$ bytes/sec. Let $S$ denote the
tokens per optimizer step, $H$ the hidden size, and
$b_{\mathrm{act}}, b_{\mathrm{grad}}$ the bytes per activation and
gradient element.

\paragraph{PP communication.}
Pipeline parallelism transmits a forward activation and a backward
gradient at each of the $K{-}1$ boundaries. With activation compression
ratio $\rho_{\mathrm{act}}\in(0,1]$ (fraction of bytes still sent),
\begin{equation}
\label{eq:tpp}
T_{\mathrm{PP}}
= \frac{2\,(K{-}1)\,S\,H\,b_{\mathrm{act}}\,\rho_{\mathrm{act}}}{B}.
\end{equation}

\paragraph{DP communication.}
A standard ring all-reduce of an $N\,b_{\mathrm{grad}}$-byte gradient
costs $2(D{-}1)/D \cdot N\,b_{\mathrm{grad}}$ bytes per rank
\citep{patarasuk2009ringallreduce}. With per-step compression ratio
$\rho_{\mathrm{DP}}$ and outer-step interval $\tau$ (one all-reduce
per $\tau$ optimizer steps, as in Local SGD or DiLoCo),
\begin{equation}
\label{eq:tdp}
T_{\mathrm{DP}}
= \frac{2\,(D{-}1)/D \cdot N\,b_{\mathrm{grad}}\,\rho_{\mathrm{DP}}}{B\,\tau}.
\end{equation}

\paragraph{Step time.}
Under 1F1B-style scheduling, the per-step time is approximately
$\max(T_{\mathrm{compute}},\,T_{\mathrm{PP}},\,T_{\mathrm{DP}})$ when
phases overlap, and their sum when they serialize. Throughput in
tokens-per-second is $\mathrm{TPS}=S/\text{step time}$.

\paragraph{Concrete numbers.}
For our reference configuration ($N{=}1.2$B, $H{=}2048$, $K{=}8$,
$D{=}8$, $B{=}200$~Mbps $\approx 25$~MB/s, $S{=}16{,}384$ tokens/step,
BF16 throughout, A100 hardware), the components evaluate to:
\begin{itemize}
  \item Compute (one optimizer step across all stages, with
    $\sim 50\%$ MFU): $\sim 100$~ms.
  \item PP communication at $\rho_{\mathrm{act}}=0.05$ (95\% masking):
    $\sim 1.9$~s, dominated by per-microbatch activation transfer at
    each of $7$ boundaries in both directions.
  \item DP all-reduce uncompressed ($\rho_{\mathrm{DP}}=1$, $\tau=1$):
    $\sim 168$~s --- nearly three minutes per step.
\end{itemize}
The uncompressed DP all-reduce alone is $\sim 90\times$ larger than
the compressed PP transfer, and $\sim 1{,}700\times$ larger than the
compute window. Activation compression on its own therefore yields
no measurable speedup.

\subsection{When DP compression is essential}
\label{subsec:why-compress}

For activation compression to translate into wall-clock speedup,
$T_{\mathrm{DP}}$ must be reduced to within an order of magnitude of
$T_{\mathrm{PP}}$. From Eqs.~\eqref{eq:tpp}--\eqref{eq:tdp}, the
required combined compression along the DP axis is
\begin{equation}
\label{eq:required-dp}
\frac{\rho_{\mathrm{DP}}}{\tau}
\;\lesssim\;
\frac{(K{-}1)\,S\,H\,b_{\mathrm{act}}\,\rho_{\mathrm{act}}}
     {(D{-}1)/D \cdot N\,b_{\mathrm{grad}}}.
\end{equation}
For our configuration, this evaluates to roughly $\rho_{\mathrm{DP}}/\tau
\lesssim 1/100$, so the joint per-step compression and frequency
reduction must reach two orders of magnitude.

A single mechanism is rarely enough. Per-step gradient compressors
above $\sim 32$--$64\times$ tend to harm convergence, and outer-step
intervals $\tau > 500$ are at the edge of empirical stability for
Local SGD / DiLoCo on language models. The practical strategies
\emph{compose} a compressor with a frequency-reduction technique ---
e.g., $64\times$ PowerSGD with $\tau=20$ DiLoCo --- to multiply
their effects.

\subsection{Families of DP compression}
\label{subsec:families}

DP-compression techniques fall into four orthogonal families. Methods
within a family typically cannot be combined; methods across families
typically can.

\paragraph{Quantization (F1).}
Replace each gradient element with a low-bit representation. Examples
include FP8/INT8, signSGD~\citep{bernstein2018signsgd}, and 1-bit Adam.
Compression ranges from $2\times$ (FP16$\to$FP8) to $32\times$
(1-bit). Stochastic rounding and error
feedback~\citep{karimireddy2019error} are typically required at
aggressive bit widths.

\paragraph{Sparsification (F2).}
Transmit only the largest-magnitude entries. Top-$K$
sparsification~\citep{lin2018dgc} achieves
$50$--$100\times$ compression when paired with momentum correction
and error feedback, but the indices themselves consume bandwidth and
cap practical compression.

\paragraph{Low-rank factorization (F3).}
Approximate the gradient matrix by a low-rank product. PowerSGD
\citep{vogels2019powersgd} reaches $32$--$256\times$ compression via
rank-1 to rank-4 sketches with subspace iteration. The compressor is
linear, so the projection commutes with all-reduce; error feedback
recovers most of the lost precision.

\paragraph{Frequency reduction (F4).}
Rather than compress each all-reduce, perform fewer of them. Local
SGD~\citep{stich2019localsgd} runs $\tau$ inner SGD steps locally
between all-reduces; DiLoCo \citep{douillard2023diloco} adds an outer
Nesterov optimizer and is empirically stable up to $\tau\approx 500$
for LLM pretraining. Streaming DiLoCo
\citep{douillard2025streamingdiloco} pipelines the outer all-reduce
with inner compute, removing the synchronous barrier.

\paragraph{Composition.}
Frequency reduction multiplies with any per-step compressor. PowerSGD
at $\rho_{\mathrm{DP}}=1/64$ combined with $\tau=20$ DiLoCo yields an
effective $1280\times$ reduction in DP bytes-per-step --- enough to
clear the threshold in Eq.~\eqref{eq:required-dp} for our regime.

\subsection{Comparison and selection}
\label{subsec:comparison}

Table~\ref{tab:dp-comp} summarizes the practical operating points.

\begin{table}[ht]
\centering
\small
\setlength{\tabcolsep}{4pt}
\renewcommand{\arraystretch}{1.15}
\begin{tabular}{l c c l}
\toprule
\textbf{Method} & \textbf{Per-step ratio} & \textbf{Composes with $\tau$} & \textbf{Reported convergence cost} \\
\midrule
FP8/INT8 quantization     & $2$--$4\times$       & yes & $\leq 1\%$ loss gap \\
1-bit Adam / signSGD      & $32\times$           & yes & $1$--$3\%$ loss gap \\
Top-$K$ sparsification    & $50$--$100\times$    & yes & $1$--$5\%$ loss gap; index overhead \\
PowerSGD (rank 4)         & $32$--$256\times$    & yes & $\leq 2\%$ loss gap \\
Local SGD / DiLoCo        & --- ($\tau$-based)   & --- & $\leq 1\%$ loss gap up to $\tau{=}500$ \\
Streaming DiLoCo          & --- ($\tau$-based)   & --- & $\leq$ DiLoCo, hides comm \\
\bottomrule
\end{tabular}
\caption{DP-compression methods with demonstrated stability for
language-model training. Per-step compression refers to bytes-on-wire
relative to a full BF16 all-reduce. Frequency-reduction methods do
not have a per-step compression ratio because they reduce the number
of all-reduces, not the bytes per all-reduce; they are stacked with
F1--F3 rather than competing with them.}
\label{tab:dp-comp}
\end{table}

The selection rule that follows from Eq.~\eqref{eq:required-dp} is
straightforward:

\paragraph{High bandwidth ($\geq 1$~Gbps inter-node).}
The required compression ratio is small, and PowerSGD at
$32$--$64\times$ alone usually meets it. No frequency reduction
needed; implementation is intra-step with no algorithmic changes.

\paragraph{Low bandwidth ($\sim 200$~Mbps).}
Single-method compression tops out around $64\times$ without
convergence loss, leaving roughly an order of magnitude on the table.
Composition with DiLoCo ($\tau\geq 10$) is required. Streaming DiLoCo
additionally hides the outer all-reduce under inner compute, removing
the synchronous barrier --- this is our default configuration.

\paragraph{Federated or geo-distributed ($\leq 50$~Mbps).}
Composition is mandatory. Streaming DiLoCo with
$\tau\in[50,500]$ stacked with $\rho_{\mathrm{DP}}\leq 1/64$ is the
only regime that delivers usable throughput; convergence quality must
be re-validated at the chosen $\tau$.

\section{On the Compatibility with Dual-EMA Optimizers}
\label{app:ademamix-comparison}

A natural alternative to our spectral correction is to use the masked
gradient as a noisy first-moment signal and reach for a dual-EMA
optimizer that is known to tolerate gradient staleness, most notably
AdEMAMix~\citep{pagliardini2024ademamix}. AdEMAMix replaces Adam's
single first-moment EMA with a mixture of a \emph{fast} EMA
$m_1$ ($\beta_1\approx 0.9$) and a \emph{slow} EMA $m_2$
($\beta_3\approx 0.9999$). Its update rule is
\[
\theta_{t+1}
= \theta_t - \eta \cdot
\frac{\hat m_1^{(t)} + \alpha\, m_2^{(t)}}
{\sqrt{\hat v^{(t)}} + \epsilon},
\]
where $\hat m_1$ is bias-corrected, $\hat v$ is Adam's second moment,
and $\alpha\approx 5$ controls the weight of the slow EMA. The empirical
finding driving the design is that gradients can remain useful for tens
of thousands of steps, and a single EMA cannot simultaneously weight the
immediate past and the distant past.

This sounds superficially similar to our two-circuit design, both
maintain a slow signal alongside a fast one, so it is reasonable to
ask whether AdEMAMix could replace the spectral correction
entirely, with the masked gradient serving as both inputs to the
optimizer (the fast and slow EMAs would simply be two views of the same
noisy signal). We argue this would not work in our setting, for two
reasons.

\paragraph{(1) AdEMAMix mixes old gradients, not cleaner
gradients.}
The slow EMA $m_2$ in AdEMAMix is an exponentially weighted average of
the same gradients that feed the fast EMA. Its value comes from
\emph{temporal smoothing} of a noisy signal, not from any independent
high-fidelity source. In our setting, the fast circuit's gradients are \emph{biased away from the true gradient
subspace} by approximately $\rho\sigma_m^2$ even after spectral filtering
(Lemma~\ref{lem:noise-contract}), and unbiased only in the sense of zero
conditional mean. Smoothing $T$ such gradients with a slow EMA reduces
the variance, but does \emph{not} change the
underlying noise spectrum: directions weakly supported by the signal
remain weakly supported in the EMA. Our anchor circuit, by contrast,
provides an \emph{independent} signal computed from \emph{unmasked}
activations. Therefore, its principal subspace reflects the true gradient
geometry, not a smoothed copy of the masked-gradient noise.

\paragraph{(2) Dual EMAs cannot recover lost subspaces.}
A core claim of our analysis (Section~\ref{app:convergence}) is that
the masked gradient is approximately isotropic relative to the anchor
basis (Lemma~\ref{lem:noise-contract}, Assumption~\ref{assump:noise}).
This is why a low-rank spectral filter contracts noise quadratically in
rank. AdEMAMix has no notion of an anchor basis; it operates entirely
in the canonical parameter basis and applies the same dual-EMA dynamics
elementwise. If the gradient signal lives in a low-rank subspace
(as fine-tuning gradients
do~\citep{aghajanyan2021intrinsic, hu2022lora}) and the noise is
approximately isotropic, AdEMAMix's elementwise smoothing extracts the
signal at rate $O(1/T_{\mathrm{eff}})$ in noise variance, but discards
the structural advantage of low-rankness. Our spectral filter exploits
that structure directly. 

However, we do not claim AdEMAMix is unhelpful in our setting. In fact, the
two mechanisms are largely complementary. A natural composition is to
apply AdEMAMix's dual-EMA dynamics to the \emph{filtered} gradient
$G_t^{\mathrm{proj}}$ rather than to the raw masked gradient. The fast
EMA captures recent corrected directions, the slow EMA averages the
filter's output over many steps to further reduce residual noise, and
the spectral correction continues to enforce subspace structure. We did
not run this composition in the main experiments because our convergence
analysis targets the single-EMA AdamW baseline directly. We believe this would be an interesting future direction to explore. 

\end{document}